\newcommand\Tstrut{\rule{0pt}{2.5ex}}         
\newcommand*{\modelName}{FedHB}
\begin{document}

\title{FedHB: Hierarchical Bayesian Federated Learning}

\author{\name Minyoung Kim$^1$ \email mikim21@gmail.com \\
       \addr $^1$Samsung AI Center Cambridge\\
       Cambridge, CB1 2JH, UK
       \AND
       \name Timothy Hospedales$^{1,2}$ \email t.hospedales@ed.ac.uk \\
       \addr $^2$School of Informatics\\
       The University of Edinburgh\\
       Edinburgh, EH8 9AB, UK}

\editor{Matthew Hoffman}

\maketitle

\begin{abstract}
We propose a novel hierarchical Bayesian approach to Federated Learning (FL), where our model reasonably describes the generative process of clients' local data via hierarchical Bayesian modeling: constituting random variables of local models for clients that are governed by a higher-level global variate. Interestingly, the variational inference in our Bayesian model leads to an optimisation problem whose block-coordinate descent  solution becomes a distributed algorithm that is separable over clients and allows them not to reveal their own private data at all, thus fully compatible with FL. We also highlight that our block-coordinate algorithm has particular forms that subsume the well-known FL algorithms including Fed-Avg and Fed-Prox as special cases. Beyond introducing novel modeling and derivations, we also offer convergence analysis showing that our block-coordinate FL algorithm converges to an (local) optimum of the objective at the rate of $O(1/\sqrt{t})$, the same rate as regular (centralised) SGD, as well as the generalisation error analysis where we prove that the test error of our model on unseen data is guaranteed to vanish as we increase the training data size, thus asymptotically optimal. 
\end{abstract}

\begin{keywords}
Federated Learning, Bayesian Methods, Probabilistic Models, Variational Inference, Block Coordinated Optimisation
\end{keywords}







\section{Introduction}\label{sec:intro}

Federated Learning (FL) aims to enable a set of clients to collaboratively train a model in a privacy preserving manner, without sharing data with each other or a central server. Compared to conventional centralised optimisation problems, FL comes with a host of statistical and systems challenges -- such as communication bottlenecks and sporadic participation. The key statistical challenge is non-i.i.d.~data distributions across clients, each of which has a different data collection bias and potentially a different data labeling function (e.g., user preference learning). The classic and most popularly deployed FL algorithms are FedAvg~\citep{fedavg} and FedProx~\citep{fedprox}, however, even when a global model can be learned, it often underperforms on each client's local data distribution in scenarios of high heterogeneity~\citep{r2,r3,r4}. Studies attempted to alleviate this by personalising learning at each client, allowing each local model to deviate from the shared global model \citep{sun2021partialfed}. However, this remains challenging given that each client may have a limited amount of local data for personalised learning.

These challenges have motivated several attempts to model the FL problem from a Bayesian perspective. Introducing distributions on model parameters $\theta$ has enabled various schemes for estimating a global model posterior $p(\theta|D_{1:N})$ from clients' local posteriors $p(\theta|D_i)$, or to regularise the learning of local models given a prior defined by the global model \citep{pfedbayes,fedpa,fedbe}. However, these methods are not complete and principled solutions -- having not yet provided full Bayesian descriptions of the FL problem, and having had resort to ad-hoc treatments to achieve tractable learning. The key difference is that they fundamentally treat network weights $\theta$ as a random variable shared across all clients. We introduce a \emph{hierarchical} Bayesian model that assigns each client it's own random variable for model weights $\theta_i$, and these are linked via a  higher level random variable $\phi$ as  $p(\theta_{1:N},\phi)=p(\phi)\prod^N_{i=1} p(\theta_i|\phi)$. This has several crucial benefits: Firstly, given this hierarchy, variational inference in our framework decomposes into separable optimisation problems over $\theta_i$s and $\phi$, enabling a practical Bayesian learning algorithm to be derived that is fully compatible with FL constraints, without resorting to ad-hoc treatments or strong assumptions. Secondly, this framework can be instantiated with different assumptions on $p(\theta_i|\phi)$ to deal elegantly and robustly with different kinds of statistical heterogeneity, as well as for principled and effective model personalisation. 

Our resulting algorithm, termed Federated Hierarchical Bayes (\modelName) is empirically effective, as we demonstrate in a wide range of experiments on established benchmarks. More importantly, it benefits from rigorous theoretical support. In particular, we provide convergence guarantees showing that \modelName{} has the same $O(1/\sqrt{T})$ convergence rate as centralised SGD algorithms, which are not provided by related prior art \citep{pfedbayes,fedbe}. We also provide a generalisation bound showing that \modelName{} is asymptotically optimal, which has not been shown by prior work such as  \citep{fedpa}. Furthermore we show that \modelName{} subsumes classic methods FedAvg \citep{fedavg} and FedProx \citep{fedprox} as special cases, and ultimately provides additional justification and explanation for these seminal methods.

\begin{figure}
\centering
\begin{tabular}{cccc}
\includegraphics[trim = 2mm 2mm 2mm 0mm, clip, scale=0.185]{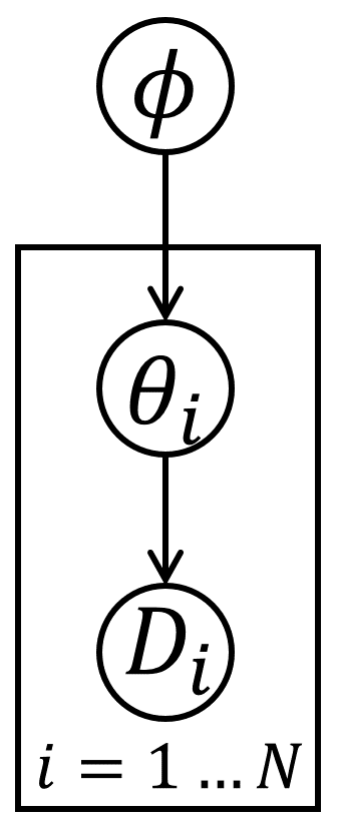} & 
\includegraphics[trim = 2mm 2mm 2mm 0mm, clip, scale=0.185]{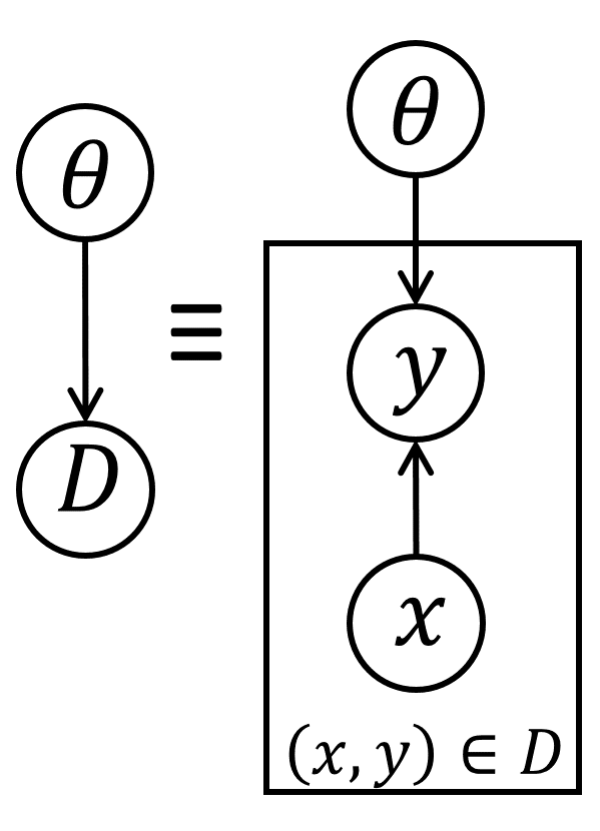} & 
\includegraphics[trim = 2mm 0mm 2mm 0mm, clip, scale=0.185]{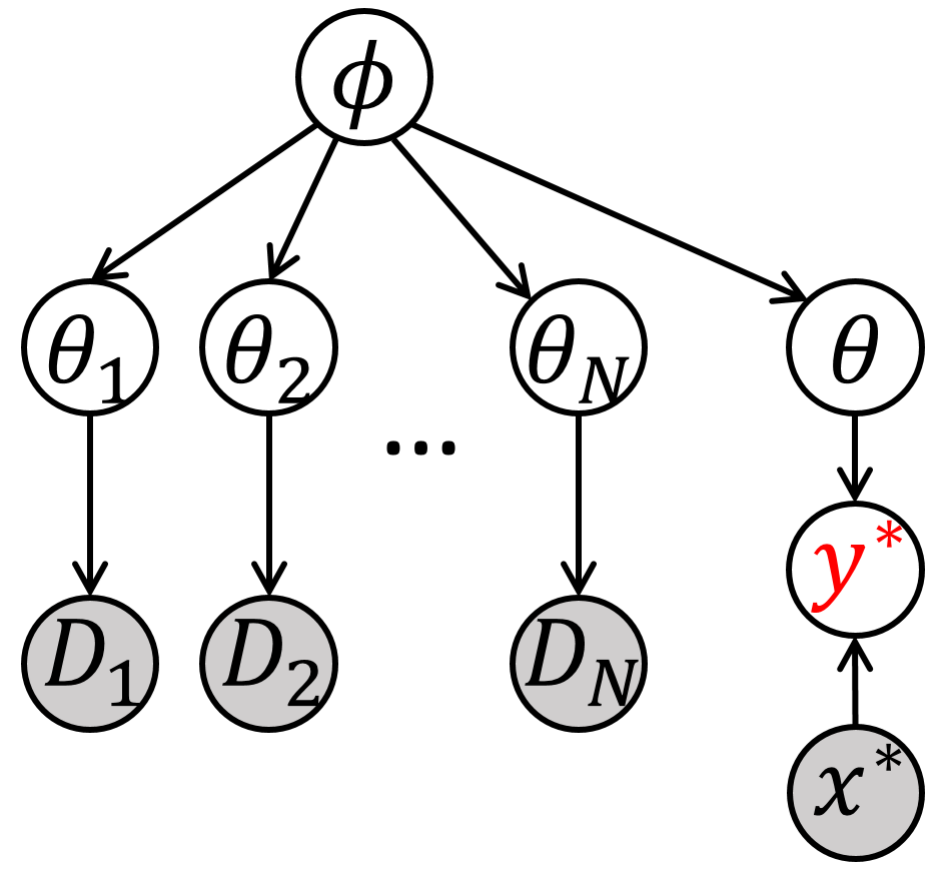} & \includegraphics[trim = 2mm 0mm 5.4mm 0mm, clip, scale=0.185]{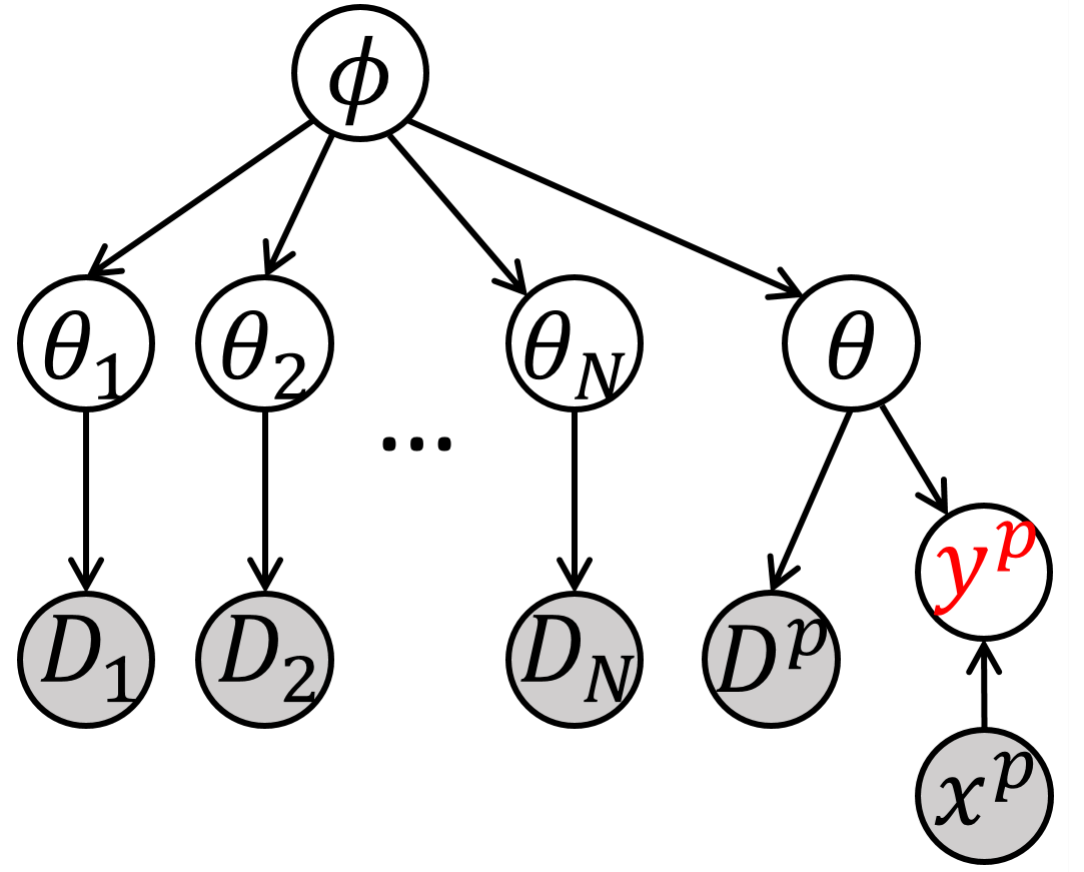}
\\
(a) Overall model & (b) Individual client & (c) Global prediction & (d) Personalisation
\vspace{-0.4em}
\end{tabular}
\caption{Graphical models. (a) Plate view of iid clients. (b) Individual client data with input images $x$ given and only $p(y|x)$ modeled. (c) $\&$ (d): Global prediction and personalisation as probabilistic inference problems (shaded nodes $=$ {\em evidences}, red colored nodes $=$ {\em targets} to infer, $x^*=$ test input in global prediction, $D^p=$ training data for personalisation and $x^p=$ test input).
}
\label{fig:gm}
\end{figure}

\section{Related Work and Our Contributions}\label{sec:related}

\subsection{Related Work}

\textbf{General FL approaches.} 
Perhaps the seminal pioneering work on FL is attributed to FedAvg~\citep{fedavg}, which proposed fairly intuitive local training and global aggregation strategies with minimal training and communication complexity. A potential issue of divergence of global and local models due to the separated steps of local training and aggregation was addressed by model regularisation in the follow-up works~\citep{fedprox,fedreg}, which is shown to help the global model converge more reliably. 
Recent approaches aimed to incorporate benefits from existing machine learning approaches including domain adaptation/generalisation, clustering, multi-task learning, transfer
learning, and meta-learning. 
To deal with heterogeneous client data distributions, those works in~\citep{fed_da,fed_dg,sun2021partialfed} attempted to tackle the FL problem in the perspective of (multi-)Domain Adaptation/Generalisation. 
Another interesting line of works aims to cluster clients with similar data distributions together~\citep{fl_clustering1,fl_clustering2}. Along the line, the shared representations among the related or similar clients can be modeled motivated from general multi-task learning~\citep{fl_multitask1,fl_multitask2}.  Motivated from transfer learning, reasonable attempts are made to exploit the idea of learning/transferring knowledge from related clients~\citep{fl_xfer1,fl_xfer2,fl_xfer3,fl_xfer4}.  
There have been attempts to the personalised FL methods based on meta learning since the fientuning from the global trained model can be seen as adaptation to new data~\citep{fl_meta1,fl_meta2}.

\textbf{Comparison to existing Bayesian FL approaches.} Some recent studies tried to address the FL problem using Bayesian methods. As we mentioned earlier, the key difference is that these methods do not introduce Bayesian \emph{hierarchy}, and ultimately treat network weights $\theta$ as a random variable \emph{shared} across all clients, while our approach assigns individual $\theta_i$ to each client $i$ governed by a common prior $p(\theta_i|\phi)$. The non-hierarchical approaches must all resort to ad hoc heuristics or strong assumptions in parts of their algorithm. 
More specifically,  
\textbf{FedPA} (Posterior Averaging)~\citep{fedpa} aims to establish the decomposition,
$p(\theta|D_{1:N}) \propto \prod_{i=1}^N p(\theta|D_i)$
also known as {\em product of experts}, to allow client-wise inference/optimisation of $p(\theta|D_i)$. Unfortunately 
this decomposition does not hold in general unless we make a strong assumption of uninformative prior $p(\theta)\propto 1$ as they did. 
\textbf{FedBE} (Bayesian Ensemble)~\citep{fedbe} aims to build the global posterior distribution $p(\theta|D_{1:N})$ from the individual posteriors $p(\theta|D_i)$ in either of two ad-hoc ways: SWAG~\citep{swag}-like model averaging over clients, or a convex combination of the modes of the local posteriors. 
\textbf{pFedBayes}~\citep{pfedbayes} can be seen as an implicit regularisation-based method to approximate $p(\theta|D_{1:N})$ from individual posteriors $p(\theta|D_i)$. To combine the individual posteriors, they introduce the so-called global distribution $w(\theta)$, which essentially serves as a {\em regulariser} that aims to enforce local posteriors $p(\theta|D_i)$ not to deviate from it, i.e., $p(\theta|D_i) \approx w(\theta)$ for all $i$. 
The introduction of $w(\theta)$ and its update strategy appears to be a hybrid treatment rather than solely Bayesian perspective. 
\textbf{FedEM} ~\citep{fedem} forms a seemingly reasonable hypothesis that local client data  distributions can be identified as mixtures of a fixed number of base distributions (with different mixing proportions). However, although they have probabilistic modeling, mixture estimation, and base distribution learning under this hypothesis, this method is not a Bayesian approach.

\textbf{FedGP}~\citep{fedgp} aims to extend the GP-Tree algorithm~\citep{gptree} to the FL setting via the shared deep kernel learning. To this end, the clients perform GP-Tree kernel learning locally on its own data while the server aggregation simply follows the FedAvg algorithm to learn a global kernel. In this sense, the overall approach is quite different from our hierarchical Bayesian treatment. 
\textbf{FedPop}~\citep{fedpop}: It has a similar hierarchical Bayesian model structure as ours. But they split the backbone network parameters into those of the feature extractor (denoted by $\phi$ in the paper) and the linear classification head ($\beta$). In their model, the feature extractor weights $\phi$ are shared across the clients (called {\em fixed effects}), and the  client-wise classification head parameters $z^i$ are sampled from $\beta$, i.e., $z^i\sim p(z|\beta)$.  Thus the client data $D_i$ is generated by $\phi$ and $z^i$. The main differences from our approach are in four folds: 1) The higher-level variables $\beta$ and local variables $z^i$ sampled from $\beta$ are both restricted to the linear classification head part of the network, which makes imposing uncertainty in model parameters quite limited; 2) Moreover, they do not actually treat $\beta$ (and $\phi$ of feature extractor) as random variables, but deterministic variables which are optimized in empirical Bayes learning. This hinders the model from benefiting from hierarchical Bayesian modeling (e.g., they do not have prior distribution $p(\beta)$ at all); 3) Their optimization is alternating between the feature extractor $\phi$ and the head prior parameters $\beta$, utterly different from our block coordinate optimization alternating between higher level random variables and individual local variables; 4) They did not use variational inference for inference $p(z^i|D_i,\phi,\beta)$, but MCMC sampling (Lagevin dynamics), which is the very reason why they had to reduce the size of the latents $z^i$ only limited to classification heads, instead of full network parameters as we did.

\textbf{Other Bayesian FL algorithms.} 
Some approaches~\citep{icml23_review1,icml23_review2} proposed hierarchical Bayesian models for distributed inference that are similar to our model in (graphical model) structures. However, they used MCMC-based inference with Metropolis-Gibbs or Langevin-MC, which is the main difference from our approach that uses variational inference with block-coordinate descent optimisation. Practically, due to the computational overhead of MCMC, these methods were only tested for small data/networks (e.g., linear models or a single hidden-layer MLP). On the other hand, our approach is computationally more efficient, scalable to much larger backbones like MobileNet -- the network size of their MLP is about 100K parameters while our MobileNet has 3.3M parameters, thus at $\times 30$ the scale. 
The work in~\citep{icml23_review3} proposed a similar hierarchical Bayesian approach to FL, but used the hard-EM algorithm to learn the higher-level parameters. 
Some recent Bayesian methods adopt the expectation-propagation (EP) approximations~\citep{icml23_review5,icml23_review4}: 
Unlike our model where each client has its own local model parameters, the former model only has a single random variable (model parameters) shared across clients. Their Partitioned variational inference in this work refers to the EP update steps that are done locally with the client data. In the latter work~\citep{icml23_review4}, there are no individual/local models for client data while they followed FedPA-like modeling $p(\theta|D_{1:N})$, thus an extension of FedPA. The use of EP is similar to the former where the EP update is done locally with the client data. Thus neither of these two works is a hierarchical Bayesian model, meaning that they lack a systematic way to distinctly model global and local parameters for global prediction and personalised prediction respectively. 
A hierarchical Bayesian approach with Normal-Inverse-Gamma (NIG) was previously introduced in~\citep{icml23_review6}. The general idea looks similar to ours, but there are several key differences: (i) They used an EM algorithm that is not fully Bayesian; (ii) Their EM algorithm heavily relies on the Gaussian-NIG conjugacy, thus limiting the model class to linear models and unable to deal with deep nets; (iii) They only deal with global prediction (no personalisation); (iv) Our HB framework is very general, not limited to NIW; (v) Unlike their work, we also provide theoretical analysis on convergence and generalization error. 
In~\citep{ozkara}, a Gaussian mixture prior was adopted for personalisation, however, as a non-Bayesian regulariser in the deterministic optimisation.

\subsection{Outline of Our Contributions}

Our main contributions are summarised as follows:
\begin{enumerate}
%
\item To the best of our knowledge, our proposed approach is the first to show that variaional hierarchical Bayesian inference formulation and its block-coordinate optimisation lead to a distributed algorithm that is fully compatible with the FL constraints.
%
\item We show that many well-known existing FL algorithms (e.g., Fed-Avg, Fed-Prox) can be subsumed as special cases in our general framework. Our Bayesian formulation generalises these algorithms by incorporating uncertainty (e.g., random dropout components), which is known to help regularise a model and lead to better generalisation. 
%
\item Whereas many existing FL algorithms focus on only one of the two major FL tasks, namely global prediction and personalisation, we tackle both problems in a unifying principled Bayesian inference perspective. 
%
\item We offer both convergence and generalisation error analysis for our general FL framework, showing that our proposed training algorithm converges to an (local) optimum at the same rate as the centralised SGD, with asymptotically optimal test performance. 
%
\item Compared to existing hierarchical or flat Bayesian models, our approach is a general framework that can incorporate any prior/posterior distribution family without making specific distributional assumptions. Whereas many Bayesian methods are restricted in that only a small part (e.g., readout heads) of the deep nets are treated as random variables due to computational overhead, our approach is an efficient block-coordinate optimiser, supporting full Bayesian treatment of all parameters in deep nets. 
\end{enumerate}

\section{Bayesian FL: General Framework}\label{sec:overview}

We introduce two types of latent random variables, $\phi$ and $\{\theta_i\}_{i=1}^N$. Each $\theta_i$ 
is deployed as the network weights for client $i$'s backbone. 
The variable $\phi$ can be viewed as a globally shared variable that is responsible for linking the individual client parameters $\theta_i$. We assume conditionally independent and identical priors, $p(\theta_{1:N}|\phi)\!=\!\prod_{i=1}^N p(\theta_i|\phi)$. The prior for the latent variables $(\phi,\{\theta_i\}_{i=1}^N)$ is formed in a hierarchical fashion as (\ref{eq:prior_lik}). 
The local data for client $i$, denoted by $D_i$, is generated\footnote{Note that 
we do not deal with generative modeling of input images $x$. Inputs $x$ are always given, and only conditionals $p(y|x)$ are modeled. See Fig.~\ref{fig:gm}(b) for the in-depth graphical model diagram.
} by $\theta_i$, 
\begin{align}
\textrm{(Prior)} \ 
p(\phi, \theta_{1:N}) = p(\phi) \prod\nolimits_{i=1}^N p(\theta_i|\phi) \ \ \ \ 
\textrm{(Likelihood)} \ p(D_i|\theta_i) = \prod\nolimits_{(x,y)\in D_i} p(y|x,\theta_i),
\label{eq:prior_lik}
\end{align}
where $p(y|x,\theta_i)$ is a conventional neural network model (e.g., softmax link 
for classification tasks). 
See the graphical model in Fig.~\ref{fig:gm}(a) where the iid clients are governed by a single random variable $\phi$.

Given the data $D_1,\dots,D_N$ (also denoted by $D_{1:N}$), we infer the posterior, $p(\phi,\theta_{1:N}|D_{1:N}) \propto p(\phi) \prod_{i=1}^N p(\theta_i|\phi) p(D_i|\theta_i)$, 
which 
is intractable in general, and 
we adopt the variational inference to approximate it: 
\begin{align}
q(\phi,\theta_{1:N}; L) 
:= q(\phi; L_0) \prod\nolimits_{i=1}^N q_i(\theta_i; L_i),
\label{eq:q_1}
\end{align}
where the variational parameters $L$ consists of $L_0$ (parameters for $q(\phi)$) and $\{L_i\}_{i=1}^N$'s (parameters for $q_i(\theta_i)$'s from individual clients). 
Note that although $\theta_i$'s are independent across clients under (\ref{eq:q_1}), they are differently modeled (the subscript $i$ in notation $q_i$), reflecting different posterior beliefs originating from heterogeneity of local data $D_i$'s.
\subsection{From Variational Inference to Federated Learning Algorithm}\label{sec:vi_details}
Using the standard variational inference
techniques~\citep{vi_review,vae14}, we can derive the ELBO objective function (details in Appendix~\ref{app_sec:elbo_deriv}). We denote the {\em negative} ELBO by $\mathcal{L}$ (to be minimised over $L$): 
\begin{align}
\mathcal{L}(L) := \sum\nolimits_{i=1}^N \Big( \mathbb{E}_{q_i(\theta_i)}[-\log p(D_i|\theta_i)] + \mathbb{E}_{q(\phi)}\big[\textrm{KL}(q_i(\theta_i) || p(\theta_i|\phi))\big] \Big) + \textrm{KL}(q(\phi)||p(\phi)),
\label{eq:elbo}
\end{align}
where we drop the dependency on $L$ in notation for simplicity. 
Instead of optimizing (\ref{eq:elbo}) over the parameters $L$ jointly as usual practice, we consider block-wise optimisation, also known as {\em block-coordinate optimisation}~\citep{coord_descent}, specifically alternating two steps: (i) updating/optimizing all $L_i$'s 
$i=1,\dots,N$ while fixing $L_0$, 
and (ii) updating $L_0$ 
with all $L_i$'s 
fixed. That is, 
\begin{itemize}
\itemsep-0.2em 
\item Optimisation over $L_1,\dots,L_N$ 
($L_0$ 
fixed).
\begin{align}
\min_{\{L_i\}_{i=1}^N} \ \sum\nolimits_{i=1}^N \Big( \mathbb{E}_{q_i(\theta_i;L_i)}[-\log p(D_i|\theta_i)] + \mathbb{E}_{q(\phi;L_0)}\big[\textrm{KL}(q_i(\theta_i;L_i) || p(\theta_i|\phi))\big] \Big).
\label{eq:elbo_1}
\end{align}
As (\ref{eq:elbo_1}) is completely separable over $i$, we can optimise each summand 
independently: 
\begin{align}
\min_{L_i} \ \mathcal{L}_i(L_i) :=  \mathbb{E}_{q_i(\theta_i;L_i)}[-\log p(D_i|\theta_i)] + \mathbb{E}_{q(\phi;L_0)}\big[\textrm{KL}(q_i(\theta_i;L_i) || p(\theta_i|\phi))\big].
\label{eq:elbo_1_indiv}
\end{align}
So (\ref{eq:elbo_1_indiv}) constitutes local update/optimisation for client $i$. Note that each client $i$ needs to access its private data $D_i$ only, 
and not others, 
thus fully compatible with FL. 
\item Optimisation over $L_0$ 
($L_1,\dots,L_N$ 
fixed).
\begin{align}
\min_{L_0} \ \mathcal{L}_0(L_0) := \textrm{KL}(q(\phi;L_0)||p(\phi)) - \sum\nolimits_{i=1}^N \mathbb{E}_{q(\phi;L_0)q_i(\theta_i;L_i)}[\log p(\theta_i|\phi)].
\label{eq:elbo_2}
\end{align}
This constitutes server update criteria while the latest  $q_i(\theta_i;L_i)$'s from local clients being fixed. Remarkably, the server needs not access any local data at all, suitable for FL. This nice property originates from the independence assumption in our approximate posterior (\ref{eq:q_1}).
%
\end{itemize}

\textbf{Pseudocodes.} Our training algorithm (general framework) is summarised as pseudocodes in Alg.~\ref{alg:train_general}. 

\newcommand\inlineeqno{\stepcounter{equation}\ (\theequation)}
\newcommand{\INDSTATE}[1][1]{\STATE\hspace{#1\algorithmicindent}}
\begin{algorithm}[t!]
\caption{{\color{cyan} Training algorithm}: \textbf{General framework}.}
\label{alg:train_general}
\begin{small}
\begin{algorithmic}
\STATE \textbf{Input:} Initial parameters $L_0$ in the variational posterior $q(\phi;L_0)$.
\STATE \textbf{Output:} Trained parameters $L_0$. 
\STATE \textbf{For} each round $r=1,2,\dots,R$ \textbf{do}:
  \INDSTATE[1] 1. Sample a subset $\mathcal{I}$ of participating clients ($|\mathcal{I}|=N_f\leq N$).
  \INDSTATE[1] 2. Server sends $L_0$ to all clients $i\in\mathcal{I}$.
  \INDSTATE[1] 3. \textbf{For} each client $i\in\mathcal{I}$ \textbf{in parallel do}:
    \INDSTATE[3] Solve (by SGD) with $L_0$ fixed: 
      \begin{align}
        \min_{L_i} \  \mathbb{E}_{q_i(\theta_i;L_i)}[-\log p(D_i|\theta_i)] + \mathbb{E}_{q(\phi;L_0)}\big[\textrm{KL}(q_i(\theta_i;L_i) || p(\theta_i|\phi))\big], \nonumber
      \end{align}
    \INDSTATE[3] Initial $L_i$ is either copied from $L_0$ or the last iterate if the client can save $L_i$ locally.
  \INDSTATE[1] 4. Each client $i\in\mathcal{I}$ sends the updated $L_i$ back to the server.
  \INDSTATE[1] 5. Upon receiving $\{L_i\}_{\in\mathcal{I}}$, the server updates $L_0$ by solving (with $\{L_i\}_{\in\mathcal{I}}$ fixed):
    \begin{align}
      \min_{L_0} \ \textrm{KL}(q(\phi;L_0)||p(\phi)) - \frac{N}{N_f} \sum_{i\in\mathcal{I}} \mathbb{E}_{q(\phi;L_0)q_i(\theta_i;L_i)}[\log p(\theta_i|\phi)]. \nonumber
    \end{align}
\end{algorithmic}
\end{small}
\end{algorithm}

\textbf{Interpretation.} 
First, server's loss function (\ref{eq:elbo_2}) tells us that the server needs to update 
$q(\phi; L_0)$ in such a way that (i) it puts mass on those $\phi$ that have high compatibility scores $\log p(\theta_i|\phi)$ with the current local models $\theta_i \sim q_i(\theta_i)$, 
thus aiming to be aligned with local models, 
and (ii) it does not deviate from the prior $p(\phi)$. Clients' loss function (\ref{eq:elbo_1_indiv}) indicates that each client $i$ needs to minimise the class prediction error on its own data $D_i$ (first term), 
and at the same time, to stay close to the current global standard $\phi \sim q(\phi)$ by reducing the KL divergence from $p(\theta_i|\phi)$ (second term). 


\subsection{Formalisation of Global Prediction and Personalisation Tasks}\label{sec:global_personal}
Two important tasks in FL are:  {\em global prediction} and {\em personalisation}. 
The former 
evaluates the trained model on novel test data sampled from a distribution possibly different from training data. Personalisation is the task of adapting the trained model on a new dataset called personalised data. 
In our Bayesian model, these two tasks can be formally defined as Bayesian inference problems. 

\textbf{Global prediction.}
The task is to predict the class label of a novel test input $x^*$ which may or may not come from the same distributions as the training data $D_1,\dots D_N$. Under our Bayesian model, it can be turned into a probabilistic inference problem $p(y^*|x^*,D_{1:N})$. Let $\theta$ be the local model that generates the output $y^*$ given $x^*$. 
Exploiting conditional independence from Fig.~\ref{fig:gm}(c),  
\begin{align}
&p(y^*|x^*,D_{1:N}) 
= \iint p(y^*|x^*,\theta) \ p(\theta|\phi) \ p(\phi|D_{1:N}) \ d\theta d\phi \label{eq:global_pred_1} \\
&\ \ \ \ \ \ \approx \iint p(y^*|x^*,\theta) \ p(\theta|\phi) \ q(\phi) \ d\theta d\phi \  
= \int p(y^*|x^*,\theta) \ 
\bigg( \int p(\theta|\phi) \ q(\phi) d\phi \bigg) \ d\theta, \label{eq:global_pred_3}
\end{align}
where in (\ref{eq:global_pred_3}) we use $p(\phi|D_{1:N})\approx q(\phi)$. 
The inner integral (in parentheses) in (\ref{eq:global_pred_3}) either admits a closed form (Sec.~\ref{sec:niw}) or can be approximated (e.g., Monte-Carlo estimation).
The pseudocode for the global prediction (general framework) is depicted in Alg.~\ref{alg:global_general}. 

\begin{algorithm}[t!]
\caption{{\color{red} Global prediction}: \textbf{General framework}.}
\label{alg:global_general}
\begin{small}
\begin{algorithmic}
\STATE \textbf{Input:} Test input $x^*$. Learned model $L_0$ in the variational posterior $q(\phi;L_0)$.
\STATE \textbf{Output:} Predictive distribution $p(y^*|x^*,D_{1:N})$. 
\STATE 1. Sample $\theta^{(s)} \sim \int p(\theta|\phi) \ q(\phi; L_0) \ d\phi$ for $s=1,\dots,S$. 
\STATE 2. Return $p(y^*|x^*,D_{1:N}) \approx \frac{1}{S} \sum_{s=1}^S p(y^*|x^*,\theta^{(s)})$.
\end{algorithmic}
\end{small}
\end{algorithm}


\textbf{Personalisation.}
It formally refers to the task of learning a prediction model $\hat{p}(y|x)$ given an unseen (personal) training dataset $D^p$ that comes from some unknown distribution $p^p(x,y)$, so that the personalised model $\hat{p}$ performs well on novel (in-distribution) test points $(x^p,y^p)\sim p^p(x,y)$.  
Evidently we need to exploit (and benefit from) the trained model from the FL training stage. To this end many existing approaches simply resort to {\em finetuning}, that is, training on $D^p$ warm-starting with the FL-trained model. 
However, a potential issue is 
the lack of a solid principle on how to balance the initial FL-trained model and personal data fitting to avoid underfitting and overfitting.
In our Bayesian framework, the personalisation can be seen as another posterior inference problem with {\em additional evidence} of the personal training data $D^p$. Prediction on a test point $x^p$ amounts to inferring: 
\begin{align}
p(y^p|x^p,D^p,D_{1:N}) = \int p(y^p|x^p,\theta) \ p(\theta|D^p,D_{1:N}) \ d\theta.
\end{align}
So, it boils down to the task of posterior inference $p(\theta|D^p,D_{1:N})$ given both the personal data $D^p$ and the FL training data $D_{1:N}$. 
Under our hierarchical model, by exploiting conditional independence from graphical model (Fig.~\ref{fig:gm}(d)), we can link the posterior to our FL-trained $q(\phi)$ as follows:
\begin{align}
p(\theta|D^p,D_{1:N}) &= \int p(\theta|D^p,\phi) \ p(\phi|D^p,D_{1:N}) \ d\phi 
\approx \int p(\theta|D^p,\phi) \ p(\phi|D_{1:N}) \ d\phi \label{eq:personal_1} \\
&\approx \int p(\theta|D^p,\phi) \ q(\phi) \ d\phi \label{eq:personal_2} \\
&\approx \ p(\theta|D^p,\phi^*). \label{eq:personal_3}
\end{align}
In (\ref{eq:personal_1}) we disregard the impact of $D^p$ on the higher-level $\phi$ given the joint evidence, $p(\phi|D^p,D_{1:N})\approx p(\phi|D_{1:N})$ due to the dominance of $D_{1:N}$ compared to smaller $D^p$. In (\ref{eq:personal_2}) our variational posterior approximation is used, that is,  $p(\phi|D_{1:N}) \approx q(\phi)$.
Lastly, (\ref{eq:personal_3}) makes approximation using the mode $\phi^*$ of $q(\phi)$, which is reasonable for our two modeling choices for $q(\phi)$ discussed in Sec.~\ref{sec:niw} (Sec.~\ref{appsec:niw}) and Sec.~\ref{sec:mix} (Sec.~\ref{appsec:mix}). 
%
Since dealing with $p(\theta|D^p,\phi^*)$ 
involves difficult marginalisation $p(D^p|\phi^*) = \int p(D^p|\theta) p(\theta|\phi^*) d\theta$, we adopt  variational inference, introducing a tractable variational distribution $v(\theta) \approx p(\theta|D^p,\phi^*)$. 
Following the usual variational inference derivations, we have the negative ELBO objective (for personalisation): 
\begin{align}
\min_{v} \ \mathbb{E}_{v(\theta)}[-\log p(D^p|\theta)] +  \textrm{KL}(v(\theta) || p(\theta|\phi^*)).
\label{eq:personal_optim}
\end{align}
With the optimised $v$, our predictive distribution becomes ($S=$ the number of MC samples):
\begin{align}
p(y^p|x^p,D^p,D_{1:N}) \approx 
\frac{1}{S} \sum\nolimits_{s=1}^S p(y^p|x^p,\theta^{(s)}), \ \ \textrm{where} \ \ \theta^{(s)} \sim v(\theta),
\label{eq:personal_predictive}
\end{align}
which simply requires feed-forwarding test input $x^p$ through the sampled networks $\theta^{(s)}$ and averaging.
The pseudocode for the personalisation (general framework) is depicted in Alg.~\ref{alg:personal_general}. 

\begin{algorithm}[t!]
\caption{{\color{blue} Personalisation}: \textbf{General framework}.}
\label{alg:personal_general}
\begin{small}
\begin{algorithmic}
\STATE \textbf{Input:} Personal training data $D^p$. Test input $x^p$. Learned model $L_0$ in the variational posterior $q(\phi;L_0)$.
\STATE \textbf{Output:} Predictive distribution $p(y^p|x^p,D^p,D_{1:N})$. 
\STATE 1. Estimate the variational density $v(\theta) \approx p(\theta|D^p,\phi^*)$ by solving (via SGD):
  \begin{align} 
    \min_{v} \ \mathbb{E}_{v(\theta)}[-\log p(D^p|\theta)] +  \textrm{KL}(v(\theta) || p(\theta|\phi^*)), \ \textrm{where} \ \phi^*=\arg\max_\phi q(\phi;L_0). \nonumber
  \end{align}
\STATE 2. Sample $\theta^{(s)} \sim v(\theta)$ for $s=1,\dots,S$. 
\STATE 3. Return $p(y^p|x^p,D^p,D_{1:N}) \approx \frac{1}{S} \sum_{s=1}^S p(y^p|x^p,\theta^{(s)})$.
\end{algorithmic}
\end{small}
\end{algorithm}


Thus far, we have discussed a general framework, 
deriving how the variational inference for our Bayesian model fits gracefully in the FL problem. In the next section, we define specific density families for the prior ($p(\phi)$, $p(\theta_i|\phi)$) and posterior ($q(\phi)$, $q_i(\theta_i)$) as our proposed concrete models. 

\section{Bayesian FL: Two Concrete Models}\label{sec:concrete}

We propose two different model choices that we find the most interesting: \textbf{Normal-Inverse-Wishart} (Sec.~\ref{sec:niw}) and \textbf{Mixture} (Sec.~\ref{sec:mix}). To avoid distraction, we make this section concise putting only the final results and discussions, and leaving all mathematical details in Appendix~\ref{appsec:niw} and~\ref{appsec:mix}.  

To recapitulate the notations in our hierarchical Bayesian modeling: we have two types of latent random variables, $\phi$ and $\{\theta_i\}_{i=1}^N$ where each $\theta_i$ is deployed as the network weights for client $i$'s backbone, and $\phi$ is a globally shared variable responsible for linking the individual client parameters $\theta_i$s. The prior is decomposed as: $p(\phi,\{\theta_i\}_{i=1}^N) = p(\phi) \prod_{i=1}^N p(\theta_i|\phi)$. The posterior $p(\phi,\theta_{1:N}|D_{1:N})$ is approximated by the variational distribution, $q(\phi,\theta_{1:N}; L) = q(\phi; L_0) \prod\nolimits_{i=1}^N q_i(\theta_i; L_i)$, where we optimize the variational parameters $L=\{L_0,\{L_i\}_{i=1}^N\}$ by minimising the negative ELBO loss via our block-coordinate-descent (FL) algorithm. The following two sections instantiate specifically these prior and variational distributions as well as the resulting client/server loss functions.

\subsection{Normal-Inverse-Wishart (NIW) Model}\label{sec:niw}

We define the prior as a conjugate form of Gaussian and Normal-Inverse-Wishart. Specifically we let $\phi = (\mu,\Sigma)$ and,
\begin{align}
&p(\phi) = \mathcal{NIW}(\mu, \Sigma; \Lambda) = \mathcal{N}(\mu; \mu_0, \lambda_0^{-1}\Sigma) \cdot  \mathcal{IW}(\Sigma; \Sigma_0, \nu_0), \label{eq:niw_prior_phi} \\
&p(\theta_i|\phi) = \mathcal{N}(\theta_i; \mu, \Sigma), \ \ i=1,\dots,N, \label{eq:niw_prior_theta}
\end{align}
where $\Lambda=\{\mu_0,\Sigma_0,\lambda_0,\nu_0\}$ is the parameters of the NIW. Although $\Lambda$ can be learned via data marginal likelihood maximisation (e.g., empirical Bayes), but for simplicity we leave it fixed as\footnote{This choice ensures that the mean of $\Sigma$ equals $I$, and $\mu$ is distributed as 0-mean, $\Sigma$-covariance Gaussian. 
}: $\mu_0=0$, $\Sigma_0=I$, $\lambda_0=1$, and $\nu_0=d+2$ where $d$ is the number of parameters in $\theta_i$ or $\mu$. 
Next, our choice of the variational density family for $q(\phi)$ is the NIW, not just because it is the most popular parametric family for a pair of mean vector and covariance matrix $\phi=(\mu,\Sigma)$, but it can also admit closed-form expressions in the ELBO function due to the conjugacy as we derive in Appendix~\ref{appsec:niw_vi_details}. 
\begin{align}
q(\phi) := \mathcal{NIW}(\phi; \{m_0,V_0,l_0,n_0\}) = \mathcal{N}(\mu; m_0, l_0^{-1}\Sigma) \cdot  \mathcal{IW}(\Sigma; V_0, n_0).
\label{eq:niw_q_phi}
\end{align}
Although the scalar parameters $l_0$, $n_0$ can be optimised together with $m_0$, $V_0$, their impact is less influential and we find that they make the ELBO optimisation a little bit cumbersome. So we fix $l_0$, $n_0$ with some near-optimal values by exploiting the conjugacy of the NIW under Gaussian likelihood (details in Appendix~\ref{appsec:niw}), and regard $m_0,V_0$ as variational parameters, $L_0 = \{m_0,V_0\}$. 
We restrict $V_0$ to be diagonal for computational tractability. 
The density family for $q_i(\theta_i)$'s can be a Gaussian, but we find that it is computationally more attractive and numerically more stable to adopt the mixture of two spiky Gaussians that leads to the MC-Dropout~\citep{mc_dropout}. That is,
\begin{align}
q_i(\theta_i) = \prod\nolimits_{l} \big( p_{do} \cdot \mathcal{N}(\theta_i[l]; m_i[l], \epsilon^2 I) + (1-p_{do}) \cdot \mathcal{N}(\theta_i[l]; 0, \epsilon^2 I)
\big),
\label{eq:spiky2}
\end{align}
where (i) $m_i$ is the only variational parameters ($L_i=\{m_i\}$), (ii) $\cdot[l]$ indicates a column/layer in neural network parameters where $l$ goes over layers and columns of weight matrices, (iii) $p_{do}$ is the (user-specified) hyperparameter where $1-p_{do}$ corresponds to the dropout probability, and (iv) $\epsilon$ is small constant (e.g., $10^{-4}$) that makes two Gaussians spiky, close to the delta functions.


\textbf{Client update.} 
We apply the general client update optimisation (\ref{eq:elbo_1_indiv}) to the NIW model. 
Following the approximation of~\citep{mc_dropout} for the KL divergence between a mixture of Gaussians (\ref{eq:spiky2}) and a Gaussian (\ref{eq:niw_prior_theta}), we have the client local optimisation (details in Appendix~\ref{appsec:niw}): 
\begin{align}
\min_{m_i} \ \mathcal{L}_i(m_i) :=  -\log p(D_i|\tilde{m}_i) + \frac{p_{do}}{2} (n_0+d+1) (m_i-m_0)^\top V_0^{-1}(m_i-m_0),
\label{eq:niw_elbo_1_indiv}
\end{align}
where $\tilde{m}_i$ is the dropout version of $m_i$, i.e., a reparametrised sample from (\ref{eq:spiky2}). 
Note that $m_0$ and $V_0$ are fixed during the optimisation. Interestingly (\ref{eq:niw_elbo_1_indiv}) generalises  Fed-Avg~\citep{fedavg} and Fed-Prox~\citep{fedprox}: With $p_{do}=1$ (i.e., no dropout) and setting $V_0 = \alpha I$,  
(\ref{eq:niw_elbo_1_indiv}) reduces to the client update formula for Fed-Prox where constant $\alpha$ controls the impact of the proximal term. 


\textbf{Server update.}
The general server optimisation (\ref{eq:elbo_2}) admits the closed-form solution (Appendix~\ref{appsec:niw}):
\begin{align}
m_0^* = \frac{p_{do}}{N\!+\!1} \sum\nolimits_i m_i, \ \ 
V_0^* = \frac{n_0}{N\!+\!d\!+\!2} \bigg(
(1\!+\!N \epsilon^2) I + m_0^* (m_0^*)^\top 
+ \sum\nolimits_i \rho(m_0^*,m_i,p_{do})
\bigg), \label{eq:niw_server_m0_V0}
\end{align}
where $\rho(m_0,m_i,p_{do}) = p_{do} m_i m_i^\top - p_{do} m_0 m_i^\top - p_{do} m_i m_0^\top + m_0 m_0^\top$. Note that $m_i$'s are fixed from clients' latest variational parameters.  
It is interesting to see that $m_0^*$ in (\ref{eq:niw_server_m0_V0}) generalises the well-known aggregation step of averaging local models in Fed-Avg~\citep{fedavg} and related methods: when $p_{do}\!=\!1$ (no dropout), it 
almost 
equals client model averaging. Also, since $\rho(m_0^*,m_i,p_{do}\!=\!1) = (m_i-m_0^*)(m_i-m_0^*)^\top$ when $p_{do}=1$, $V_0^*$ essentially estimates the sample scatter matrix with $(N+1)$ samples, namely clients' $m_i$'s and server's prior $\mu_0=0$, measuring how much they deviate from the center $m_0^*$. The dropout is known to help regularise the model and lead to better generalisation~\citep{mc_dropout}, and with $p_{do}<1$ our (\ref{eq:niw_server_m0_V0}) forms a principled optimal solution. 

The training algorithm for our NIW model is summarised as pseudocodes in Alg.~\ref{alg:train_niw}.

\begin{algorithm}[t!]
\caption{{\color{cyan} Training algorithm}: \textbf{Normal-Inverse-Wishart case}.}
\label{alg:train_niw}
\begin{small}
\begin{algorithmic}
\STATE \textbf{Input:} Initial $L_0=(m_0,V_0)$ in $q(\phi;L_0)=\mathcal{NIW}(\phi; \{m_0,V_0,l_0=|D|\!+\!1,n_0\!=\!|D|\!+\!d\!+\!2\})$ where
\STATE \ \ \ \ \ \ \ \ \ \ $|D| = \sum_{i=1}^N |D_i|$ and $d=$ the number of parameters in the backbone network $p(y|x,\theta)$. 
\STATE \textbf{Output:} Trained parameters $L_0=(m_0,V_0)$. 
\STATE \textbf{For} each round $r=1,2,\dots,R$ \textbf{do}:
  \INDSTATE[1] 1. Sample a subset $\mathcal{I}$ of participating clients ($|\mathcal{I}|=N_f\leq N$).
  \INDSTATE[1] 2. Server sends $L_0=(m_0,V_0)$ to all clients $i\in\mathcal{I}$.
  \INDSTATE[1] 3. \textbf{For} each client $i\in\mathcal{I}$ \textbf{in parallel do}:
    \INDSTATE[3] Solve (by SGD) with $L_0=(m_0,V_0)$ fixed: 
      \begin{align}
        \min_{m_i} \  -\log p(D_i|\tilde{m}_i) + \frac{p_{do}}{2} (n_0+d+1) (m_i-m_0)^\top V_0^{-1}(m_i-m_0), \nonumber
      \end{align}
    \INDSTATE[3] where $\tilde{m}_i$ is the dropout version (with probability $1-p_{do}$) of $m_i$. 
    \INDSTATE[3] Initial $m_i$ can be either copied from $m_0$ or the last iterate if the client can save $m_i$ locally.
  \INDSTATE[1] 4. Each client $i\in\mathcal{I}$ sends the updated $L_i=m_i$ back to the server.
  \INDSTATE[1] 5. Upon receiving $\{m_i\}_{\in\mathcal{I}}$, the server updates $L_0=(m_0,V_0)$ by: 
    \begin{align}
      m_0^* = \frac{p_{do}}{N+1} \frac{N}{N_f} \sum_{i\in\mathcal{I}} m_i, \ \ 
      V_0^* = \frac{n_0}{N+d+2} \Bigg((1 + N \epsilon^2) I + m_0^* (m_0^*)^\top + \frac{N}{N_f} \sum_{i\in\mathcal{I}} \rho(m_0^*,m_i,p_{do}) \Bigg), \nonumber
    \end{align}
  \INDSTATE[2] where $\rho(m_0,m_i,p_{do}) = p_{do} m_i m_i^\top - p_{do} m_0 m_i^\top - p_{do} m_i m_0^\top + m_0 m_0^\top$.
\end{algorithmic}
\end{small}
\end{algorithm}

\textbf{Global prediction.} 
The inner integral of (\ref{eq:global_pred_3}) 
becomes the multivariate Student-$t$ distribution. 
Then the predictive distribution for a new test input $x^*$ can be estimated as (usually $S=1$ in practice):
\begin{align}
p(y^*|x^*,D_{1:N}) \approx \frac{1}{S} \sum\nolimits_{s=1}^S p(y^*|x^*,\theta^{(s)}), \ \ \ \ \ \theta^{(s)} \sim t_{n_0-d+1} \bigg( \theta; m_0, \frac{(l_0+1)V_0}{l_0(n_0-d+1)} \bigg),
\end{align}
where $t_\nu(a,B)$ is the multivariate Student-$t$ with
location $a$, scale matrix $B$, and d.o.f.~$\nu$. 
The pseudocode for the global prediction in the NIW model is depicted in Alg.~\ref{alg:global_niw}.

\begin{algorithm}[t!]
\caption{{\color{red} Global prediction}: \textbf{Normal-Inverse-Wishart case}.}
\label{alg:global_niw}
\begin{small}
\begin{algorithmic}
\STATE \textbf{Input:} Test input $x^*$. Learned model $L_0=(m_0,V_0)$ in $q(\phi;L_0) = \mathcal{NIW}(\phi; \{m_0,V_0,l_0,n_0\})$. 
\STATE \textbf{Output:} Predictive distribution $p(y^*|x^*,D_{1:N})$. 
\STATE 1. Sample $\theta^{(s)} \sim t_{n_0-d+1} \Big( \theta; m_0, \frac{(l_0+1)V_0}{l_0(n_0-d+1)} \Big)$ for $s=1,\dots,S$. 
\STATE 2. Return $p(y^*|x^*,D_{1:N}) \approx \frac{1}{S} \sum_{s=1}^S p(y^*|x^*,\theta^{(s)})$.
\end{algorithmic}
\end{small}
\end{algorithm}

\textbf{Personalisation.} 
With the given personalisation training data $D^p$, we follow the general framework in (\ref{eq:personal_optim}) to find $v(\theta) \approx p(\theta|D^p,\phi^*)$ in a variational way, where $\phi^*$ obtained from (\ref{appeq:niw_phi*}). We adopt the same spiky mixture form (\ref{eq:spiky2}) for $v(\theta)$, which leads to the learning objective similar to (\ref{eq:niw_elbo_1_indiv}). 
The pseudocode for the personalisation in the NIW model is depicted in Alg.~\ref{alg:personal_niw}.

\begin{algorithm}[t!]
\caption{{\color{blue} Personalisation}: \textbf{Normal-Inverse-Wishart case}.}
\label{alg:personal_niw}
\begin{small}
\begin{algorithmic}
\STATE \textbf{Input:} Personal training data $D^p$. Test input $x^p$. 
\STATE \ \ \ \ \ \ \ \ \ \ \ \ Learned model $L_0=(m_0,V_0)$ in $q(\phi;L_0) = \mathcal{NIW}(\phi; \{m_0,V_0,l_0,n_0\})$.
\STATE \textbf{Output:} Predictive distribution $p(y^p|x^p,D^p,D_{1:N})$. 
\STATE 1. Estimate $m$ in $v(\theta; m) = \prod_{l} \Big( p_{do} \cdot \mathcal{N}(\theta[l]; m[l], \epsilon^2 I)\!+\!(1\!-\!p_{do}) \cdot \mathcal{N}(\theta[l]; 0, \epsilon^2 I) \Big)$ by solving (SGD):
  \begin{align} 
    \min_{m} \  -\log p(D^p|\tilde{m}) + \frac{p_{do}}{2} (n_0+d+1) (m-m_0)^\top V_0^{-1}(m-m_0), \nonumber
  \end{align}
  \INDSTATE[1] where $\tilde{m}$ is the dropout version (with probability $1-p_{do}$) of $m$. 
\STATE 2. Return $p(y^p|x^p,D^p,D_{1:N}) \approx p(y^p|x^p,m)$.
\end{algorithmic}
\end{small}
\end{algorithm}

\subsection{Mixture Model}\label{sec:mix}

Our motivation for mixture is to make the prior $p(\theta,\phi)$ more flexible by having multiple different prototypes, diverse enough to cover the heterogeneity in data distributions across clients. 
Specifically we let $\phi=\{\mu_1,\dots,\mu_K\}$, and define the prior as follows so that it contains $K$ networks (prototypes) that can broadly cover the clients data distributions:
\begin{align}
p(\phi) = \prod\nolimits_{j=1}^K \mathcal{N}(\mu_j; 0, I), \ \ \ \ 
p(\theta_i|\phi) = \sum\nolimits_{j=1}^K \frac{1}{K} \mathcal{N}(\theta_i; \mu_j; \sigma^2 I),
\label{eq:mix_prior}
\end{align}
where 
$\sigma$ is the hyperparameter that captures perturbation scale, chosen by users. 
Note that we put equal mixing proportions $1/K$ due to the symmetry, a priori. That is, each client can take any of $\mu_j$'s equally likely a priori. 
For the variational densities, we define:
\begin{align}
q_i(\theta_i) = \mathcal{N}(\theta_i; m_i, \epsilon^2 I), \ \ \ \
q(\phi) = \prod\nolimits_{j=1}^K \mathcal{N}(\mu_j; r_j, \epsilon^2 I),
\label{eq:mix_vi_q}
\end{align}
where $\{r_j\}_{j=1}^K$ ($L_0$), $m_i$ ($L_i$) are variational parameters, and $\epsilon$ is small constant (e.g., $10^{-4}$). 

\textbf{Client update.} 
The general client update (\ref{eq:elbo_1_indiv}) reduces to (details in Appendix~\ref{appsec:mix}):
\begin{align}
\min_{m_i} \ \mathbb{E}_{q_i(\theta_i)}[-\log p(D_i|\theta_i)] -\log \sum\nolimits_{j=1}^K \exp\Big( -\frac{||m_i-r_j||^2}{2 \sigma^2} \Big) .
\label{eq:mix_client_opt2}
\end{align}
It is interesting to see that 
(\ref{eq:mix_client_opt2}) can be seen as generalisation of Fed-Prox~\citep{fedprox}, where the proximal regularisation term in Fed-Prox is extended to {\em multiple} global models $r_j$'s, penalizing the local model ($m_i$) straying away from these prototypes. 
And if we use a single prototype ($K=1$), 
the optimisation (\ref{eq:mix_client_opt2}) exactly reduces to the local update objective of Fed-Prox.
Since \texttt{log-sum-exp} 
is approximately equal to \texttt{max}, 
the regularisation term in (\ref{eq:mix_client_opt2}) effectively focuses on the closest global prototype $r_j$ from the current local model $m_i$, which is intuitively well aligned with our motivation. 

\textbf{Server update.} The general form (\ref{eq:elbo_2}) can be approximately turned into  (Appendix~\ref{appsec:mix} for derivations):
\begin{align}
\min_{\{r_j\}_{j=1}^K} \frac{1}{2}\sum\nolimits_{j=1}^K ||r_j||^2 - \sum\nolimits_{i=1}^N \log \sum\nolimits_{j=1}^K \exp\Big( -\frac{||m_i-r_j||^2}{2 \sigma^2} \Big).
\label{eq:mix_server}
\end{align}
Interestingly, (\ref{eq:mix_server}) generalises the well-known aggregation step of averaging local models in Fed-Avg 
and related methods: Especially when $K=1$, (\ref{eq:mix_server}) reduces to quadratic optimisation, admitting the optimal solution $r_1^* = \frac{1}{N+\sigma^2}\sum_{i=1}^N m_i$. The extra term $\sigma^2$ can be explained by incorporating an extra {\em zero} local model originating from the prior (interpreted as a {\em neutral} model) with the discounted weight $\sigma^2$ rather than $1$. 
Although (\ref{eq:mix_server}) for $K>1$ can be solved by standard gradient descent, we apply the Expectation-Maximisation (EM) algorithm\footnote{Instead of performing several EM steps until convergence, in practice we find only one EM step is sufficient.}~\citep{em77} instead:
\begin{align}
\textrm{(E-step)} \ \ c(j|i) = \frac{e^{-||m_i-r_j||^2/(2\sigma^2)}}{\sum_{j=1}^K e^{-||m_i-r_j||^2/(2\sigma^2)}}, \ \ \ \
\textrm{(M-step)} \ \ r_j^* = \frac{\frac{1}{N}\sum_{i=1}^N c(j|i) \cdot m_i}{\frac{\sigma^2}{N} + \frac{1}{N}\sum_{i=1}^N c(j|i)}. 
\end{align}
The M-step (server update) has intuitive meaning that the new prototype $r_j$ becomes the {\em weighted} average of the local models $m_i$'s where the weights $c(j|i)$ are determined by the proximity between $m_i$ and $r_j$ (i.e., those $m_i$'s that are closer to $r_j$ have more contribution, and vice versa).  This can be seen as an extension of the aggregation step in Fed-Avg to the multiple prototype case. 
The training algorithm for our mixture model is summarised as pseudocodes in Alg.~\ref{alg:train_mix}.

\begin{algorithm}[t!]
\caption{{\color{cyan} Training algorithm}: \textbf{Mixture case}.}
\label{alg:train_mix}
\begin{small}
\begin{algorithmic}
\STATE \textbf{Input:} Initial $L_0=\{r_j\}_{j=1}^K$ in $q(\phi;L_0)=\prod_j \mathcal{N}(\mu_j; r_j, \epsilon^2 I)$ 
and $\beta$ in the gating network $g(x; \beta)$.
\STATE \textbf{Output:} Trained parameters $L_0=\{r_j\}_{j=1}^K$ and $\beta$. 
\STATE \textbf{For} each round $r=1,2,\dots,R$ \textbf{do}:
  \INDSTATE[1] 1. Sample a subset $\mathcal{I}$ of participating clients ($|\mathcal{I}|=N_f\leq N$).
  \INDSTATE[1] 2. Server sends $L_0=\{r_j\}_{j=1}^K$ and $\beta$ to all clients $i\in\mathcal{I}$.
  \INDSTATE[1] 3. \textbf{For} each client $i\in\mathcal{I}$ \textbf{in parallel do}:
    \INDSTATE[3] Solve (by SGD) with $L_0=\{r_j\}_{j=1}^K$ fixed: 
      \begin{align}
        \min_{m_i} \  \mathbb{E}_{q_i(\theta_i; m_i)}[-\log p(D_i|\theta_i)] -\log \sum_j \exp\bigg( -\frac{||m_i-r_j||^2}{2 \sigma^2} \bigg),
        \ \textrm{where} \ q_i(\theta_i; m_i) = \mathcal{N}(\theta_i; m_i, \epsilon^2 I). \nonumber
      \end{align}
    \INDSTATE[3] Initial $m_i$ is either the center of $\{r_j\}$ 
    or the last iterate if the client can save $m_i$ locally.
    \INDSTATE[3] $\beta_i=$ SGD update of $\beta$ in $g(x; \beta)$ with data $\{(x,j^*)\}_{x\sim D_i}$ where $j^* = \arg\min_j ||m_i-r_j||$.
  \INDSTATE[1] 4. Each client $i\in\mathcal{I}$ sends the updated $L_i=m_i$ and $\beta_i$ back to the server.
  \INDSTATE[1] 5. Upon receiving $\{m_i\}_{\in\mathcal{I}}$ and $\{\beta_i\}_{\in\mathcal{I}}$, the server updates $L_0=\{r_j\}_{j=1}^K$ by the one-step EM: 
    \begin{align}
      \textrm{(E-step)} \ \ c(j|i) = \frac{e^{-||m_i-r_j||^2/(2\sigma^2)}}{\sum_{j=1}^K e^{-||m_i-r_j||^2/(2\sigma^2)}}, \ \
      \textrm{(M-step)} \ \ r_j^* = \frac{\frac{1}{N_f}\sum_{i\in\mathcal{I}} c(j|i) \cdot m_i}{\frac{\sigma^2}{N} + \frac{1}{N_f}\sum_{i\in\mathcal{I}} c(j|i)}, \nonumber
\end{align}
  \INDSTATE[2] and updates $\beta$ by aggregation: $\beta^* = \frac{1}{N_f} \sum_{i\in\mathcal{I}} \beta_i$.
\end{algorithmic}
\end{small}
\end{algorithm}

\textbf{Global prediction.}
We slightly modify our general approach to make 
client data dominantly explained by the most relevant model $r_j$, by introducing a gating network (Appendix~\ref{appsec:mix} for details).
The pseudocode for the global prediction 
is depicted in Alg.~\ref{alg:global_mix}.

\begin{algorithm}[t!]
\caption{{\color{red} Global prediction}: \textbf{Mixture case}.}
\label{alg:global_mix}
\begin{small}
\begin{algorithmic}
\STATE \textbf{Input:} Test input $x^*$. Learned  $L_0=\{r_j\}_{j=1}^K$ in $q(\phi;L_0)=\prod_j \mathcal{N}(\mu_j; r_j, \epsilon^2 I)$ 
and $\beta$ in 
$g(x; \beta)$.
\STATE \textbf{Output:} Predictive distribution $p(y^*|x^*,D_{1:N})$. 
\STATE 1. Return $p(y^*|x^*,D_{1:N}) \approx 
\sum_{j=1}^K g_j(x^*) \cdot p(y^*|x^*,r_j)$.
\end{algorithmic}
\end{small}
\end{algorithm}

\textbf{Personalisation.} 
With $v(\theta)$ of the same form as $q_i(\theta_i)$, the VI learning becomes similar to (\ref{eq:mix_client_opt2}). 
The pseudocode for the personalisation in the mixture model is depicted in Alg.~\ref{alg:personal_mix}.

\begin{algorithm}[t!]
\caption{{\color{blue} Personalisation}: \textbf{Mixture case}.}
\label{alg:personal_mix}
\begin{small}
\begin{algorithmic}
\STATE \textbf{Input:} Personal training data $D^p$. Test input $x^p$. Learned model $L_0=\{r_j\}_{j=1}^K$. 
\STATE \textbf{Output:} Predictive distribution $p(y^p|x^p,D^p,D_{1:N})$. 
\STATE 1. Estimate $m$ in $v(\theta; m) = \mathcal{N}(\theta; m, \epsilon^2 I)$ by solving (via SGD):
  \begin{align} 
    \min_{m} \ \mathbb{E}_{v(\theta; m)}[-\log p(D^p|\theta)] -\log \sum_j \exp\bigg( -\frac{||m-r_j||^2}{2 \sigma^2} \bigg). \nonumber
  \end{align}
\STATE 2. Return $p(y^p|x^p,D^p,D_{1:N}) \approx p(y^p|x^p,m)$.
\end{algorithmic}
\end{small}
\end{algorithm}


\section{Theoretical Analysis}\label{sec:analysis}



\subsection{Convergence Analysis}
As a special block-coordinate optimisation algorithm, we show that our FL algorithm converges to an (local) optimum of the training objective (\ref{eq:elbo}).

\newtheorem*{remarknn}{Remark}

\begin{theorem}[Convergence analysis] 
We denote the objective function in (\ref{eq:elbo}) by $f(x)$ where $x = [x_0,x_1,\dots,x_N]$ corresponding to the variational parameters $x_0:=L_0$, $x_1:=L_1$, \dots, $x_N:=L_N$. 
Let $\eta_t = \overline{L} + \sqrt{t}$ for some constant $\overline{L}$, 
and \ $\overline{x}^T = \frac{1}{T}\sum_{t=1}^T x^t$, where $t$ is the batch iteration counter, $x^t$ is the iterate at $t$ by following our FL algorithm, and $N_f$ ($\leq N$) is the number of participating clients at each round. With Assumptions 1--3 in Appendix~B.1, the following holds for any $T$: 
\begin{align}
\mathbb{E}[f(\overline{x}^T)] - f(x^*) \leq \frac{N+N_f}{N_f} \cdot \frac{\frac{\sqrt{T}+\overline{L}}{2} D^2 + R_f^2 \sqrt{T}}{T} = O\Big(\frac{1}{\sqrt{T}}\Big),
\label{eq:convergence}
\end{align}
where $x^*$ is the (local) optimum, $D$, and $R_f$ are some constants, and the expectation is taken over randomness in minibatches and selection of participating clients. 
\label{thm:converge}
\end{theorem}
\begin{proof}
See Appendix~\ref{appsec:convergence}.
\end{proof}

\begin{remarknn}
Theorem~\ref{thm:converge} states that $\overline{x}^t$ converges to the optimal point $x^*$ in expectation at the rate of $O(1/\sqrt{t})$. This  rate asymptotically equals that of the conventional (non-block-coordinate, holistic) SGD algorithm.
\end{remarknn}

\subsection{Generalisation Error Bound}
We theoretically show how well this optimal model trained on empirical data performs on unseen test data points.

\begin{theorem}[Generalisation error bound] 
Assume that the variational density family for $q_i(\theta_i)$ is rich enough to subsume Gaussian. 
Let $d^2(P_{\theta_i}, P^i)$ be the expected squared Hellinger distance\footnote{
It is defined as 
$d^2(P_\theta, P^i) = \mathbb{E}_{x\sim P^i(x)} \big[ H^2(P_\theta(y|x), P^i(y|x)) \big] 
= \mathbb{E}_{x\sim P^i(x)} \bigg[ 1 - \exp\bigg(-\frac{||f_\theta(x)-f^i(x)||_2^2}{8 \sigma_\epsilon^2}\bigg) \bigg]$.
} between the true class distribution $P^i(y|x)$ and model's $P_{\theta_i}(y|x)$ for client $i$'s data. 
The optimal solution $(\{q_i^*(\theta_i)\}_{i=1}^N, q^*(\phi))$ of the optimisation problem (\ref{eq:elbo}) satisfies:
\begin{align}
\frac{1}{N} \sum_{i=1}^N \mathbb{E}_{q_i^*(\theta_i)}[d^2(P_{\theta_i}, P^i)] \ \leq \  O\bigg(\frac{1}{n}\bigg) + C \cdot \epsilon_n^2 + C'\Bigg(r_n + \frac{1}{N}\sum_{i=1}^N \lambda_i^*\Bigg),
\label{eq:gen_bound}
\end{align}
with high probability, where $C,C'>0$ are constant, $\lambda_i^* = \min_{\theta\in\Theta} ||f_\theta-f^i||_\infty^2$ is the best error within our backbone network family $\Theta$, 
and $r_n, \epsilon_n \to 0$ as the training data size $n\to\infty$.
\label{thm:generalisation}
\end{theorem}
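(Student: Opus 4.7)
The plan is to treat the variational optimisation as a penalised likelihood problem and adapt variational Bayes contraction theory (in the spirit of Pati--Bhattacharya--Yang and Alquier--Ridgway) to the present hierarchical setting. The starting point is the optimality of $(q^*(\phi), \{q_i^*(\theta_i)\})$ for (\ref{eq:elbo}): for any admissible ``oracle'' pair $(\tilde q, \{\tilde q_i\})$ we have $\mathcal{L}(q^*, q^*_{1:N}) \le \mathcal{L}(\tilde q, \tilde q_{1:N})$. The argument sandwiches $\frac{1}{N}\sum_i \mathbb{E}_{q_i^*}[d^2(P_{\theta_i},P^i)]$ between a Hellinger lower bound on $\mathcal{L}(q^*)$ and a constructive upper bound on $\mathcal{L}(\tilde q)$.

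For the upper bound, I would choose the oracle as follows. For each client $i$, let $\theta_i^\dagger \in \arg\min_{\theta\in\Theta}\|f_\theta - f^i\|_\infty^2$ (attaining $\lambda_i^*$), take $\tilde q_i$ to be a Gaussian (permitted by the ``rich enough'' hypothesis) concentrated in an $\epsilon_n$-ball around $\theta_i^\dagger$, and take $\tilde q(\phi)$ concentrated near the centroid of $\{\theta_i^\dagger\}$. Under standard prior-mass conditions on (\ref{eq:prior_lik}), this yields $\mathbb{E}_{\tilde q(\phi)}[\mathrm{KL}(\tilde q_i \Vert p(\theta_i|\phi))] \lesssim n\epsilon_n^2$ and $\mathrm{KL}(\tilde q(\phi) \Vert p(\phi))$ of smaller order, while Lipschitz continuity of $\log p(y|x,\theta)$ near $\theta_i^\dagger$ gives $\mathbb{E}_{\tilde q_i}[-\log p(D_i|\theta_i)] + \log p^i(D_i) \lesssim |D_i|\lambda_i^* + O(n\epsilon_n^2)$ in expectation.

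For the lower bound I would invoke a PAC-Bayes/Ghosal-style inequality: for any distribution $Q$ on $\theta$,
\begin{equation*}
\frac{n_i}{C}\,\mathbb{E}_Q[d^2(P_\theta,P^i)] \;\le\; \mathbb{E}_Q[-\log p(D_i|\theta)] + \log p^i(D_i) + \mathrm{KL}(Q\Vert \bar p_i) + (\text{empirical-process residual}),
\end{equation*}
where $\bar p_i(\theta) := \int p(\theta|\phi)\,q^*(\phi)\,\mathrm{d}\phi$ is the effective marginal prior under $q^*$ and the residual concentrates at rate $O(1/n_i)$ by Bernstein. Summing over $i$ with $Q = q_i^*$ and combining with $\mathcal{L}(q^*) \le \mathcal{L}(\tilde q)$ delivers (\ref{eq:gen_bound}): the $O(1/n)$ term absorbs the empirical-process fluctuations, $C\epsilon_n^2$ the prior-mass cost, $C'\frac{1}{N}\sum_i \lambda_i^*$ the neural-net approximation error, and $r_n$ the variational-family bias. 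The in-expectation conclusion is lifted to high probability by the same Bernstein-type concentration.

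The main obstacle is the hierarchical coupling between $\phi$ and $\theta_i$ through the cross term $\mathbb{E}_{q(\phi)}[\mathrm{KL}(q_i\Vert p(\theta_i|\phi))]$, which places the problem outside the scope of standard flat variational-Bayes contraction results. I plan to resolve it with the convexity-of-$\mathrm{KL}$-in-its-second-argument bound $\mathbb{E}_{q^*(\phi)}[\mathrm{KL}(q_i^*\Vert p(\theta_i|\phi))] \ge \mathrm{KL}(q_i^*\Vert \bar p_i)$, which collapses the two-level structure into the single effective prior $\bar p_i$; the slack in this inequality, combined with the mean-field restriction in (\ref{eq:q_1}), is precisely what the residual $r_n$ absorbs. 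Showing $r_n \to 0$ then reduces to proving that the true joint posterior becomes approximately factorised as $n \to \infty$, which follows from concentration of $p(\phi|D_{1:N})$ around a point mass---the same approximation already invoked in (\ref{eq:global_pred_3}).
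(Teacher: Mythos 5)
Your overall architecture matches the paper's: compare the optimal $(q^*,\{q_i^*\})$ against a Gaussian oracle $\tilde q_i$ centred at the $\lambda_i^*$-attaining $\theta_i^\dagger$, lower-bound the ELBO by the averaged Hellinger risk via a Donsker--Varadhan/PAC-Bayes change of measure, and upper-bound the oracle's ELBO by $\lambda_i^*$ plus complexity terms. However, your plan for the lower bound has a genuine gap. You propose to collapse the hierarchy into the effective prior $\bar p_i(\theta)=\int p(\theta|\phi)\,q^*(\phi)\,d\phi$ via convexity of KL, and then to absorb the "slack in this inequality, combined with the mean-field restriction" into $r_n$, which you say requires proving that the true joint posterior asymptotically factorises. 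That last step is not a lemma you can borrow from the heuristic $p(\phi|D_{1:N})\approx q(\phi)$ used for prediction; it is itself a posterior-contraction statement at least as hard as the theorem, and nothing in your construction controls it. The paper avoids this entirely: it applies Donsker--Varadhan \emph{conditionally on $\phi$}, with prior $p(\theta_i|\phi)$, obtaining $n\,\mathbb{E}_{q_i}[d^2(P_{\theta_i},P^i)] \le \mathbb{E}_{q_i}[-l_n]+\mathrm{KL}(q_i\|p(\theta_i|\phi))+\log\mathbb{E}_{p(\theta_i|\phi)}[\eta_i(\theta_i)]$ for every fixed $\phi$, and then takes $\mathbb{E}_{q(\phi)}$ of both sides. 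The cross term $\mathbb{E}_{q(\phi)}[\mathrm{KL}(q_i\|p(\theta_i|\phi))]$ then appears exactly as it stands in the ELBO (no Jensen step, no slack), and the remainder is handled by the fact that the testing bound $\mathbb{E}_{s(\theta)}[\eta_i(\theta)]\le e^{Cn\epsilon_n^2}$ of Pati et al.\ holds uniformly over the mixing distribution $s$, hence in particular for every $p(\cdot|\phi)$. This uniformity is the key observation that makes the hierarchical coupling harmless; your route reintroduces it as an uncontrolled residual.

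Relatedly, you have misidentified $r_n$. In the theorem $r_n=\frac{G(L+1)}{n}\log M+\frac{G}{n}\log(S_x\sqrt{n/G})$ is an explicit network-complexity quantity arising on the \emph{oracle} (upper-bound) side: it bounds $\mathbb{E}_{\tilde q_i}\|f_{\theta_i}-f_{\theta_i^\dagger}\|_\infty^2$, i.e.\ the spread of the oracle Gaussian propagated through the MLP, and $r_n\to 0$ by inspection of the formula. It is not a variational-family or factorisation bias, and assigning it that role leaves your proof with no mechanism to show it vanishes. Two smaller discrepancies: the paper's $O(1/n)$ term is simply the oracle's $n$-independent KL cost $\tilde C$ divided by $n$ (your plan instead folds those KL terms into the $n\epsilon_n^2$ budget, which is workable but changes the bookkeeping), and the high-probability statement comes from the Pati et al.\ moment bound together with Gaussian concentration of the cross term $\langle Y^i-f^i(X^i),\,f^i(X^i)-f_{\theta_i}(X^i)\rangle$ in the oracle likelihood bound, not from a generic Bernstein lift.
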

\begin{proof}
See Appendix~\ref{appsec:generalisation}.
\end{proof}

\begin{remarknn}
Theorem~\ref{thm:generalisation} 
implies that the optimal solution of (\ref{eq:elbo}) 
(attainable by our block-coordinate FL algorithm) is {\em asymptotically optimal}, since the RHS of (\ref{eq:gen_bound}) converges to $0$ as the training data size $n\to \infty$. 
Note that the last term $\frac{1}{N}\sum_i \lambda_i^*$ can be made arbitrarily close to $0$ by increasing the backbone capacity (MLPs as universal function approximators). But practically for fixed $n$, as enlarging the backbone capacity (i.e., large $T$, $L$, and $M$) also increases $\epsilon_n$ and $r_n$ (definitions of these sequences and details in Appendix~\ref{appsec:generalisation}), it is important to choose the backbone network architecture properly. 
Note also that our assumption on the variational density family for $q_i(\theta_i)$ is easily met; for instance, the families of the mixtures of Gaussians adopted in NIW (Sec.~\ref{sec:niw}) and mixture models (Sec.~\ref{sec:mix}) obviously subsume a single Gaussian family. 
\end{remarknn}




\begin{table*}[t!]
\caption{\textbf{Training complexity} of the proposed algorithms (NIW and Mixture) and FedAvg. 
All quantities are per-round, per-batch, and per-client costs. In the entries, $d=$ the number of parameters in the backbone network, $F=$ time for feed-forward pass, $B=$ time for backprop, and $N_f=$ the number of participating clients per round.
}
\centering
\begin{footnotesize}
\centering
\begin{tabular}{ccccc}
\toprule
\multirow{2}{*}{} & \multicolumn{2}{c}{Communication cost} & \multirow{2}{*}{Client Local update} & \multirow{2}{*}{Server update} \\
\cmidrule(lr){2-3}
 & Server~$\to$ Client & Client~$\to$ Server & &  \\
\hline
\multirow{2}{*}{NIW}\Tstrut & $2d$ & $d$ & $F+B+O(d)$ & \multirow{2}{*}{$O(N_f\cdot d)$} \\ 
& (sent: $m_0,V_0$) & (sent: $m_i$) & ($O(d)$ from quadratic penalty) & \\ 
\hline
Mixture\Tstrut & $(K+1) d$ & $2d$ & $2(F+B)+O(K\cdot d)$ & \multirow{2}{*}{$O(K \cdot N_f \cdot d)$} \\ 
(Order $K$) & (sent: $\{r_j\}_{j=1}^K, \beta$) &  (sent: $m_i,\beta_i$) & ($O(K \cdot d)$ from log-sum-exp) & \\ 
\hline
\multirow{2}{*}{FedAvg}\Tstrut & $d$ & $d$ & \multirow{2}{*}{$F+B$} & $O(N_f\cdot d)$ \\ 
& (sent: $\theta$) & (sent: $\theta_i$) & & (aggregation) \\ 
\bottomrule
\end{tabular}
\end{footnotesize}
\label{tab:complexity_train}
\end{table*}
%
\begin{table*}[t!]
\caption{\textbf{Global prediction complexity} of the proposed algorithms (NIW and Mixture) and FedAvg. 
All quantities are per-test-batch costs. In the entries, $d=$ the number of parameters in the backbone network, $F=$ time for feed-forward pass, and $S=$ the number of samples $\theta^{(s)}$ from the Student-$t$ distribution in the NIW case (we use $S=1$).
}
\centering
\begin{footnotesize}
\centering
\begin{tabular}{cc}
\toprule
& Per-test-batch complexity \\
\hline
\multirow{2}{*}{NIW}\Tstrut & $S \cdot F + O(S\cdot d)$ \\ 
& ($O(S\cdot d)$ from the cost of $t$-sampling) \\ 
\hline
Mixture\Tstrut & $(K+1)F$ \\ 
(Order $K$) & (a forward pass for the gating network) \\
\hline
FedAvg\Tstrut & $F$ \\ 
\bottomrule
\end{tabular}
\end{footnotesize}
\label{tab:complexity_global}
\end{table*}
%
\begin{table*}[t!]
\caption{\textbf{Personalisation complexity} of the proposed algorithms (NIW and Mixture) and FedAvg. 
All quantities are per-train/test-batch costs. In the entries, $d=$ the number of parameters in the backbone network, $F=$ time for feed-forward pass, and $B=$ time for backprop.
}
\centering
\begin{footnotesize}
\centering
\begin{tabular}{ccc}
\toprule
& Training (personalisation) complexity & Test complexity \\
\hline
\multirow{2}{*}{NIW}\Tstrut & $F + B + O(d)$ & \multirow{2}{*}{$F$} \\ 
& ($O(d)$ from quadratic penalty) & \\ 
\hline
Mixture\Tstrut & $F + B + O(K\cdot d)$ & \multirow{2}{*}{$F$} \\ 
(Order $K$) & ($O(K \cdot d)$ from log-sum-exp) & \\
\hline
FedAvg\Tstrut & $F+B$ & $F$ \\ 
\bottomrule
\end{tabular}
\end{footnotesize}
\label{tab:complexity_personal}
\end{table*}

\subsection{Computational Complexity Analysis}\label{app_sec:complexity}

The computational complexity of the proposed algorithms is summarised in Table~\ref{tab:complexity_train} (training complexity with communication costs), 
Table~\ref{tab:complexity_global} (global prediction complexity), and 
Table~\ref{tab:complexity_personal} (personalisation complexity). 
Our methods incur only constant-factor extra cost compared to the minimal-cost FedAvg.

\begin{figure}[t!]
\begin{center}
%
\centering
\includegraphics[trim = 1mm 2mm 7mm 4mm, clip, scale=0.245
]{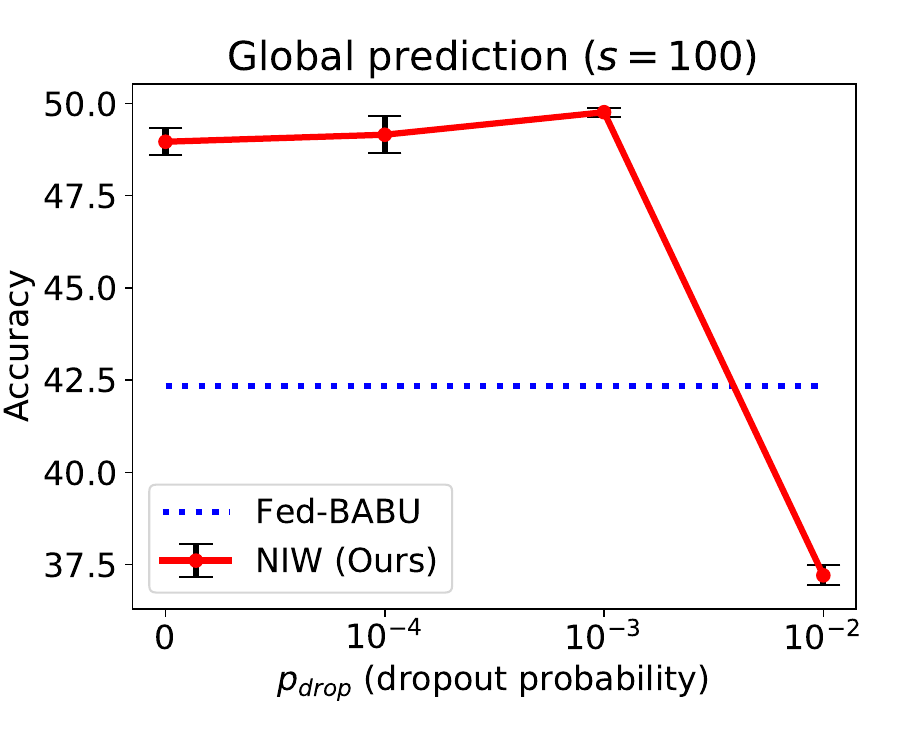}
\includegraphics[trim = 1mm 2mm 7mm 4mm, clip, scale=0.245
]{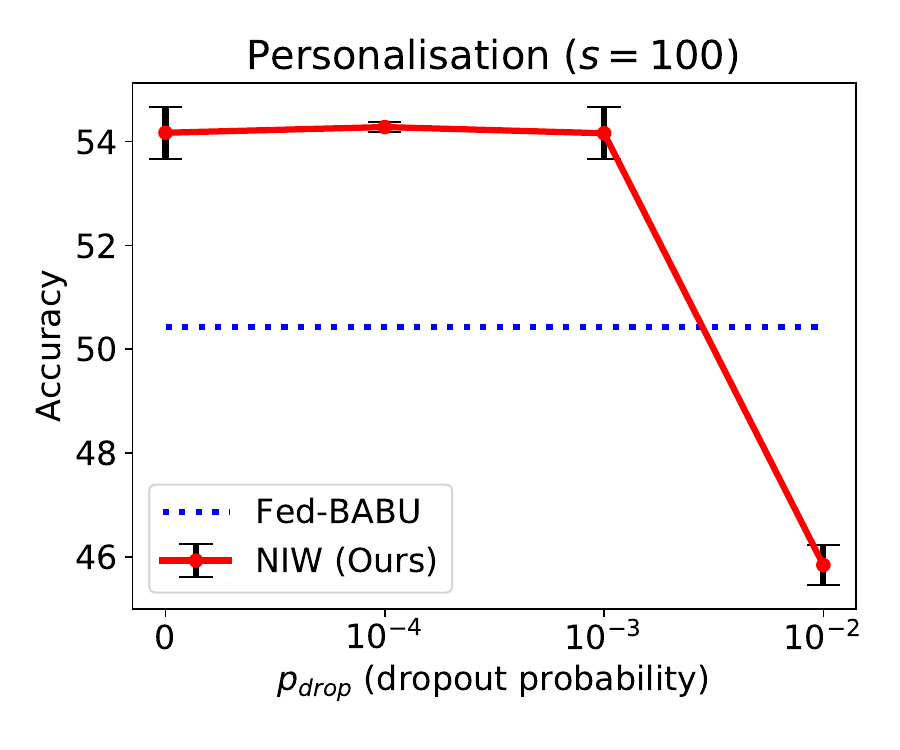}
\ \ \ %
\includegraphics[trim = 1mm 2mm 7mm 4mm, clip, scale=0.245
]{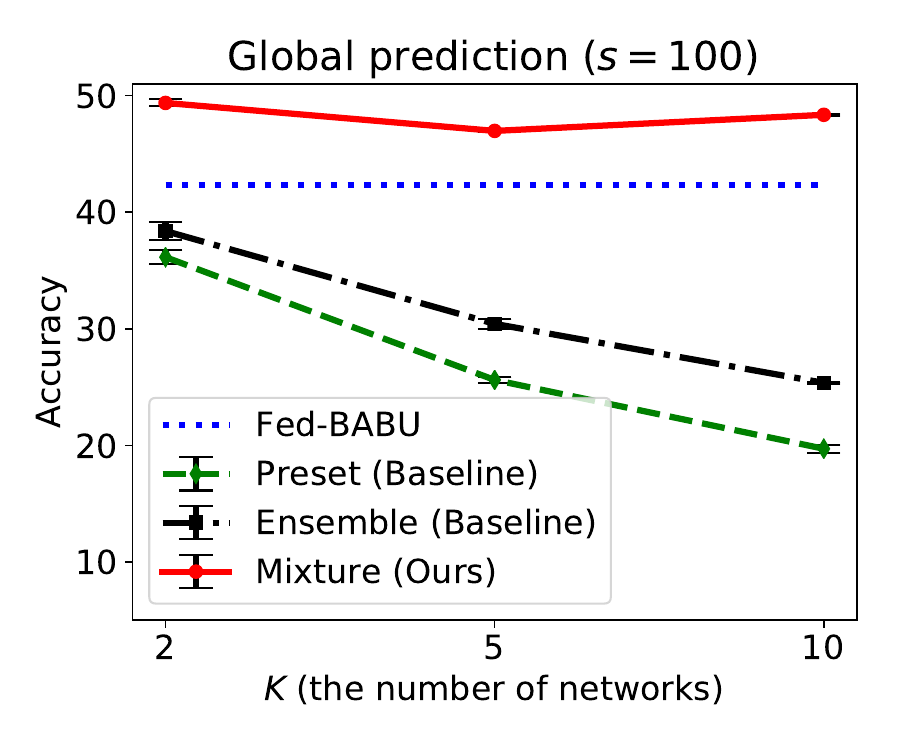}
\includegraphics[trim = 1mm 2mm 7mm 4mm, clip, scale=0.245
]{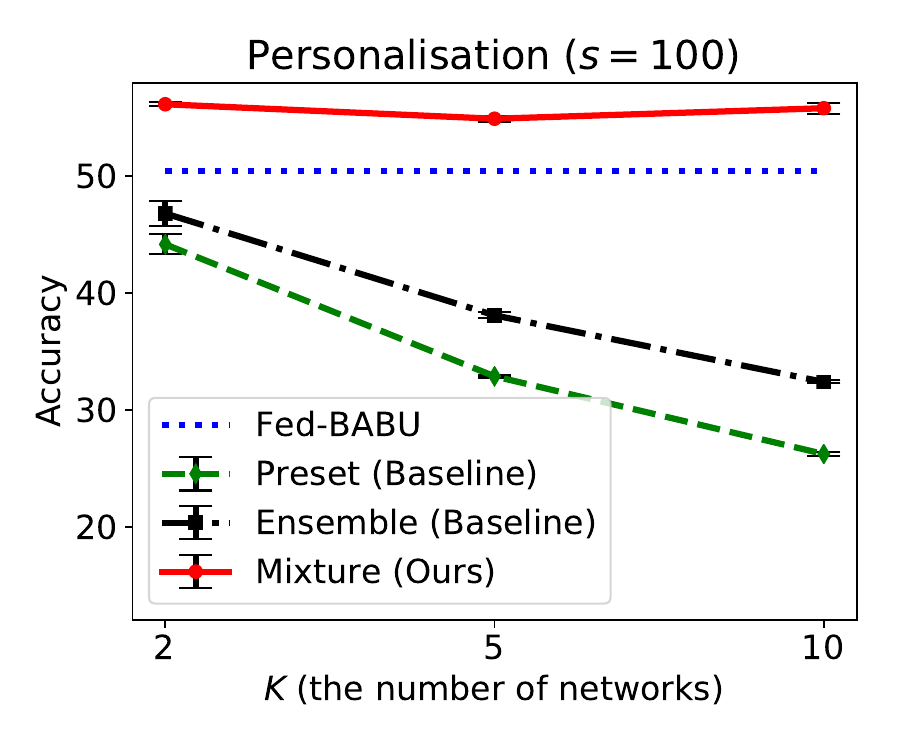}
%
\\ \ \ \ \ (a) Sensitivity to $p_{drop}$ in our NIW \ \ \ \ \ \ \ \ \ \ \ (b) Our Mix.~vs.~ensemble baselines 
\end{center}
\vspace{-1.5em}
\caption{Hyperparameter sensitivity analysis and comparison with simple ensemble baselines. 
}
\label{fig:abl_hparam}
\end{figure}




\section{Evaluation}\label{sec:expmt}


We evaluate the proposed hierarchical Bayesian models on several FL benchmarks: \textbf{CIFAR-100}, \textbf{MNIST}, \textbf{Fashion-MNIST}, and \textbf{EMNIST}. We also have the results on the challenging corrupted CIFAR (\textbf{CIFAR-C-100}) that renders the client data more heterogeneous both in input images and class distributions. 
%
Our implementation\footnote{
The codes to reproduce the results can be found in the Supplement.} 
is based on~\citep{fedbabu} where MobileNet~\citep{mobilenet} is used as a backbone, and follow the body-update strategy: the classification head (the last layer) is randomly initialised and fixed during training, with only the network body updated (and both body and head updated during personalisation). We report 
results all based on this body-update strategy since we observe that it considerably outperforms the full update for our models and other competing methods. The hyperparameters 
are: (\textbf{NIW}) $\epsilon=10^{-4}$ and $p_{do}=1-0.001$ (See Sec.~\ref{sec:abl_hparam} for other values); 
(\textbf{Mixture}) $\sigma^2=0.1$, $\epsilon=10^{-4}$, mixture order $K\!=\!2$ (See Sec.~\ref{sec:abl_mix_k} for other values), and the gating network has the same architecture as the main backbone, but the output cardinality changed to $K$.
Other hyperparameters including batch size ($50$),  learning rate ($0.1$ initially, decayed by $0.1$) and the number of epochs in personalisation ($5$), are the same as those in~\citep{fedbabu}. 

\noindent\textbf{Evaluation Protocols.} Our experiment proceeds with the following three stages:
\begin{itemize}
\vspace{-0.5em}
\setlength\itemsep{.05em}
\item \textbf{Training stage:} With a server and  clients with their own local data, we perform our block-coordinate variational inference optimisation as described in Alg.~\ref{alg:train_general} (Alg.~\ref{alg:train_niw} for NIW and Alg.~\ref{alg:train_mix} for Mixture). At the end of training, we maintain the learned variational distribution of the server variable $q(\phi; L_0)$, which is required in the subsequent global prediction and personalisation stages.
\item \textbf{Global prediction stage:} The purpose of this stage is to test the performance of the learned global model $q(\phi; L_0)$ on test data points $x^*$. As described in Alg.~\ref{alg:global_general} (Alg.~\ref{alg:global_niw} for NIW and Alg.~\ref{alg:global_mix} for Mixture), we build a predictive model using the learned $q(\phi; L_0)$ and the prior $p(\theta|\phi)$. In our experiments, we use the test data splits of the clients that participate in the training stage as probing test points.
\item \textbf{Personalisation stage:} For a new user with their own local training data, typical personalisation amounts to fine-tuning the trained backbone with the user's local training data. In our case, we have the Bayesian posterior inference driven personalisation algorithm as specified in Alg.~\ref{alg:personal_general} (Alg.~\ref{alg:personal_niw} for NIW and Alg.~\ref{alg:personal_mix} for Mixture), which runs another variational inference with the posterior $q(\phi; L_0)$ now serving as a prior. In our experiments novel users are created in a similar manner as the training stage, and once learned, the test data splits of the novel users are used for evaluating test performance.
\end{itemize}

\noindent\textbf{Competing approaches} are listed as follows:
\begin{itemize}
\vspace{-0.5em}
\setlength\itemsep{.05em}
\item Fed-BABU~\citep{fedbabu}
\item Fed-Avg~\citep{fedavg}
\item Fed-Prox~\citep{fedprox}
\item pFedBayes~\citep{pfedbayes}
\item FedPA~\citep{fedpa}
\item FedBE~\citep{fedbe}
\item FedEM~\citep{fedem}
\item FedPop~\citep{fedpop}
\end{itemize}

\subsection{Datasets and Settings}

\subsubsection{CIFAR-100}
Following~\citep{fedbabu}, the client data distributions are heterogeneous non-iid, formed by the sharding-based class sampling~\citep{fedavg}. More specifically, we partition data instances in each class into non-overlapping equal-sized shards, and assign $s$ randomly sampled shards (over all classes) to each of $N$ clients. Thus the number of shards per user $s$ can control the degree of data heterogeneity: small $s$ leads to more heterogeneity, and vice versa. The number of clients $N=100$ (each having $500$ training, $100$ test samples), and we denote by $f$ the fraction of participating clients. So, $N_f =\lfloor N\cdot f \rfloor$ clients are randomly sampled at each round to participate in training. Smaller $f$ makes the FL more challenging, and we test two settings: $f=1.0$ and $0.1$. Lastly, the number of epochs for client local update at each round is denoted by $\tau$ where we test $\tau=1$ and $10$, and the number of total rounds is determined by $\tau$ as $\lfloor 320/\tau \rfloor$ for fairness. Note that smaller $\tau$ incurs more communication cost but often leads to higher accuracy.  
The competing 
FedBE~\citep{fedbe} and FedEM~\citep{fedem}, we set the number of ensemble components or base models to 3. FedPA~\citep{fedpa}: shrinkage parameter $\rho=0.01$.

\begin{table}[t!]
\setlength{\tabcolsep}{0.2pt}
\caption{(CIFAR-100) Global prediction and personalisation test accuracy.
We vary the FL setting $(s,f,\tau)$ where $s$ controls the degree of data heterogeneity (small/large $s$ for more/less heterogeneity), $f$ is the fraction of participating clients, and $\tau$ is the number of epochs for client local update at each round.
}
\vspace{-0.5em}
\begin{scriptsize}
\centering
(a) Global prediction performance (initial accuracy) \\
\centering
\begin{tabular}{ccccccccccccc}
\toprule
\multicolumn{3}{c}{FL setting} &
\multicolumn{2}{c}{Our Methods} &
\multirow{2}{*}{Fed-BABU} & 
\multirow{2}{*}{Fed-Avg} & 
\multirow{2}{*}{Fed-Prox} & 
\multirow{2}{*}{pFedBayes} & 
\multirow{2}{*}{FedPA} & 
\multirow{2}{*}{FedBE} & 
\multirow{2}{*}{FedEM} & 
\multirow{2}{*}{FedPop} \\ 
$s$ & $f$ & $\tau$ & NIW & Mix~($K\!\!=\!\!2$) & & & & & & & & \\
\hline
\multirow{4}{*}{$100$} & 
\multirow{2}{*}{$0.1$} & $1$\Tstrut & $\pmb{49.76^{\pm 0.12}}$ & $49.37^{\pm 0.30}$ & 
$42.35^{\pm 0.42}$ &  
$40.87^{\pm 0.62}$ &  
$41.49^{\pm 0.75}$ &  
$37.23^{\pm 0.88}$ & 
$42.15^{\pm 0.78}$ & 
$42.49^{\pm 0.89}$ & 
$43.83^{\pm 1.07}$ & 
$43.09^{\pm 0.30}$ \\  
& & $10$ & $\pmb{29.02^{\pm 0.33}}$ & $\pmb{29.02^{\pm 0.29}}$ & 
$27.93^{\pm 0.28}$ &  
$28.26^{\pm 0.19}$ &  
$27.11^{\pm 0.11}$ &  
$28.21^{\pm 1.42}$ & 
$28.05^{\pm 0.28}$ & 
$28.39^{\pm 0.45}$ & 
$28.62^{\pm 0.26}$ & 
$28.31^{\pm 0.42}$ \\  
\cmidrule(lr){2-13} 
& \multirow{2}{*}{$1.0$} & $1$ & $\pmb{57.80^{\pm 0.10}}$ & $52.94^{\pm 0.36}$ & 
$48.17^{\pm 0.56}$ &  
$47.44^{\pm 0.20}$ &  
$47.66^{\pm 1.49}$ &  
$44.89^{\pm 0.32}$ & 
$47.96^{\pm 0.17}$ & 
$48.69^{\pm 0.70}$ & 
$50.28^{\pm 0.72}$ & 
$48.36^{\pm 0.44}$ \\  
& & $10$ & $29.53^{\pm 0.42}$ & $\pmb{30.55^{\pm 0.15}}$ &
$28.67^{\pm 0.51}$ &  
$28.79^{\pm 0.68}$ &  
$27.43^{\pm 0.38}$ &  
$28.25^{\pm 0.81}$ & 
$28.89^{\pm 0.38}$ & 
$28.60^{\pm 1.18}$ & 
$29.51^{\pm 0.12}$ & 
$28.99^{\pm 0.47}$ \\  
\hline
\multirow{4}{*}{$10$} & 
\multirow{2}{*}{$0.1$} & $1$\Tstrut & $37.54^{\pm 0.25}$ & $\pmb{38.07^{\pm 0.40}}$ & 
$35.04^{\pm 0.56}$ &  
$27.48^{\pm 0.86}$ &  
$34.73^{\pm 0.21}$ &  
$31.49^{\pm 0.18}$ & 
$35.51^{\pm 0.55}$ & 
$35.17^{\pm 0.40}$ & 
$37.28^{\pm 0.26}$ & 
$35.01^{\pm 0.58}$ \\  
& & $10$ & $\pmb{18.99^{\pm 0.03}}$ & $18.95^{\pm 0.13}$ &
$18.54^{\pm 0.37}$ &  
$14.69^{\pm 0.40}$ &  
$16.84^{\pm 0.48}$ &  
$17.93^{\pm 0.68}$ & 
$18.59^{\pm 0.19}$ & 
$18.67^{\pm 0.09}$ & 
$18.45^{\pm 0.10}$ & 
$18.68^{\pm 0.12}$ \\  
\cmidrule(lr){2-13} 
& \multirow{2}{*}{$1.0$} & $1$ & $\pmb{50.40^{\pm 0.11}}$ & $49.52^{\pm 0.88}$ &
$45.41^{\pm 0.11}$ &  
$37.10^{\pm 0.44}$ &  
$44.33^{\pm 0.31}$ &  
$39.95^{\pm 0.89}$ & 
$45.08^{\pm 0.72}$ & 
$45.56^{\pm 0.52}$ & 
$47.52^{\pm 0.59}$ & 
$44.98^{\pm 0.27}$ \\  
& & $10$ & $22.87^{\pm 0.41}$ & $\pmb{23.59^{\pm 0.47}}$ &
$21.92^{\pm 0.66}$ &  
$17.38^{\pm 0.32}$ &  
$19.54^{\pm 0.38}$ &  
$21.85^{\pm 0.50}$ & 
$22.60^{\pm 0.29}$ & 
$21.73^{\pm 0.64}$ & 
$22.51^{\pm 0.06}$ & 
$22.06^{\pm 0.40}$ \\  
\bottomrule
\vspace{-0.5em}
\end{tabular}
\\(b) Personalisation performance \\
\centering
\begin{tabular}{ccccccccccccc}
\toprule
\multicolumn{3}{c}{FL setting} &
\multicolumn{2}{c}{Our Methods} &
\multirow{2}{*}{Fed-BABU} & 
\multirow{2}{*}{Fed-Avg} & 
\multirow{2}{*}{Fed-Prox} & 
\multirow{2}{*}{pFedBayes} & 
\multirow{2}{*}{FedPA} & 
\multirow{2}{*}{FedBE} & 
\multirow{2}{*}{FedEM} & 
\multirow{2}{*}{FedPop} \\ 
$s$ & $f$ & $\tau$ & NIW & Mix~($K\!\!=\!\!2$) & & & & & & & & \\
\hline
\multirow{4}{*}{$100$} & 
\multirow{2}{*}{$0.1$} & $1$\Tstrut & $54.16^{\pm 0.50}$ & $\pmb{56.17^{\pm 0.16}}$ & 
$50.43^{\pm 0.93}$ &  
$46.43^{\pm 0.82}$ &  
$49.91^{\pm 0.78}$ &  
$45.83^{\pm 1.12}$ & 
$49.88^{\pm 0.49}$ & 
$50.57^{\pm 1.03}$ & 
$47.28^{\pm 0.88}$ & 
$51.22^{\pm 0.37}$ \\  
& & $10$ & $\pmb{36.68^{\pm 0.37}}$ & $36.32^{\pm 0.27}$ &
$35.45^{\pm 0.34}$ &  
$33.57^{\pm 0.06}$ &  
$33.92^{\pm 0.22}$ &  
$35.74^{\pm 1.36}$ & 
$35.06^{\pm 0.30}$ & 
$35.51^{\pm 0.62}$ & 
$34.41^{\pm 1.13}$ & 
$35.69^{\pm 0.47}$ \\  
\cmidrule(lr){2-13} 
& \multirow{2}{*}{$1.0$} & $1$ & $\pmb{60.36^{\pm 0.89}}$ & $58.82^{\pm 0.37}$ & 
$55.87^{\pm 0.91}$ &  
$53.15^{\pm 0.25}$ &  
$55.50^{\pm 0.90}$ &  
$53.00^{\pm 0.48}$ & 
$55.43^{\pm 0.31}$ & 
$56.25^{\pm 0.42}$ & 
$54.65^{\pm 0.49}$ & 
$55.85^{\pm 0.51}$ \\  
& & $10$ & $35.92^{\pm 0.17}$ & $\pmb{36.22^{\pm 0.17}}$ &
$35.58^{\pm 0.24}$ &  
$33.82^{\pm 1.04}$ &  
$33.70^{\pm 0.42}$ &  
$35.57^{\pm 1.02}$ & 
$35.21^{\pm 0.50}$ & 
$34.92^{\pm 0.94}$ & 
$35.21^{\pm 0.42}$ & 
$35.36^{\pm 0.36}$ \\  
\hline
\multirow{4}{*}{$10$} & 
\multirow{2}{*}{$0.1$} & $1$\Tstrut & $79.41^{\pm 0.24}$ & $\pmb{79.70^{\pm 0.19}}$ & 
$75.44^{\pm 0.36}$ &  
$70.36^{\pm 1.02}$ &  
$75.06^{\pm 0.67}$ &  
$73.93^{\pm 0.14}$ & 
$75.76^{\pm 0.36}$ & 
$76.19^{\pm 0.42}$ & 
$75.52^{\pm 0.50}$ & 
$74.97^{\pm 0.72}$ \\  
& & $10$ & $67.35^{\pm 1.02}$ & $\pmb{67.57^{\pm 0.62}}$ &
$66.24^{\pm 0.53}$ &  
$61.39^{\pm 0.27}$ &  
$64.86^{\pm 0.73}$ &  
$65.82^{\pm 0.33}$ & 
$65.87^{\pm 0.21}$ & 
$66.64^{\pm 0.25}$ & 
$67.11^{\pm 0.11}$ & 
$66.70^{\pm 0.55}$ \\  
\cmidrule(lr){2-13} 
& \multirow{2}{*}{$1.0$} & $1$ & $\pmb{82.71^{\pm 0.37}}$ & $81.03^{\pm 0.35}$ & 
$78.92^{\pm 0.23}$ &  
$76.98^{\pm 0.66}$ &  
$78.56^{\pm 0.55}$ &  
$78.08^{\pm 0.28}$ & 
$78.84^{\pm 0.16}$ & 
$79.82^{\pm 0.52}$ & 
$80.65^{\pm 0.32}$ & 
$78.96^{\pm 0.14}$ \\  
& & $10$ & $\pmb{67.78^{\pm 1.02}}$ & $66.74^{\pm 0.27}$ &
$66.25^{\pm 0.46}$ &  
$63.81^{\pm 0.40}$ &  
$63.81^{\pm 0.51}$ &  
$66.15^{\pm 1.29}$ & 
$66.23^{\pm 0.29}$ & 
$66.06^{\pm 0.54}$ & 
$66.34^{\pm 0.12}$ & 
$66.57^{\pm 0.44}$ \\  
\bottomrule
\end{tabular}
\end{scriptsize}
\label{tab:cifar_body}
\vspace{-0.5em}
\end{table}

\subsubsection{CIFAR-100-Corrupted (CIFAR-C-100)}
The CIFAR-100-Corrupted dataset~\citep{cifar-c} makes CIFAR-100's test split ($10K$ images) corrupted by $19$ different noise processes (e.g., Gaussian, motion blur, JPEG). For each corruption type, there are $5$ corruption levels, and we use the severest one. Randomly chosen $10$ corruption types are used for training (fixed) and the rest 9 types for personalisation. We divide $N=100$ clients into 10 groups, each group assigned one of the 10 training corruption types exclusively (denoted by $D^c$ the corrupted data for the group $c=1,\dots,10$). Each $D^c$ is partitioned into $90\%/10\%$ training/test splits, and clients in each  group  ($\lfloor N/10 \rfloor$ clients) gets non-iid train/test subsets from $D^c$'s train/test splits by following the sharding strategy with $s=100$ or $50$. This way, the clients in different groups have considerable distribution shift in input images, while there also exists heterogeneity in class distributions even within the same groups.

For the FL-trained models, we evaluate {\em global prediction} on two datasets: clients' test splits from the $10$ training corruption types and the original (uncorrupted) CIFAR's training split ($50K$ images). For {\em personalisation}, we partition the clients into 9 groups, and assign one of the 9 corruption types to each group exclusively. Within each group we form non-iid sharding-based subsets similarly, and again we split the data into the $90\%$ training/finetuning split and $10\%$ test. Note that this personalisation setting is more challenging compared to CIFAR-100 since the data for personalisation are utterly unseen during the FL training stage.
We test $\tau=1$ and $4$ scenarios. 
We test sharding parameter $s=100$ or $50$, participating client fraction $f=1.0$ or $0.1$, and the number of local epochs $\tau=1$ and $4$ scenarios. 

\begin{table*}[t!]
\caption{(CIFAR-C-100)  Global prediction and personalisation accuracy.
}
\vspace{+0.3em}
\centering
\begin{footnotesize}
(a) Global prediction (initial accuracy) on test splits for the 10 training corruption types \\
\centering
\begin{tabular}{ccccccccc}
\toprule
\multicolumn{3}{c}{FL settings} &
\multicolumn{2}{c}{Our Methods} &
\multirow{2}{*}{Fed-BABU} & 
\multirow{2}{*}{Fed-Avg} & 
\multirow{2}{*}{Fed-Prox} & 
\multirow{2}{*}{pFedBayes} \\
\cmidrule(lr){1-3} \cmidrule(lr){4-5}
$s$ & $f$ & $\tau$ & NIW & Mix.~($K=2$) & & & & \\
\hline
\multirow{4}{*}{$100$} & 
\multirow{2}{*}{$0.1$} & $1$\Tstrut & $\pmb{81.22^{\pm 0.14}}$ & $80.34^{\pm 1.44}$ & 
$79.45^{\pm 0.71}$ &  
$70.01^{\pm 0.77}$ &  
$78.94^{\pm 0.82}$ &  
$71.14^{\pm 0.33}$ \\  
%
& & $4$ & $\pmb{67.69^{\pm 0.74}}$ & $65.81^{\pm 0.84}$ & $63.58^{\pm 1.28}$ &  
$48.47^{\pm 1.26}$ &  
$60.96^{\pm 1.11}$ &  
$57.88^{\pm 1.51}$ \\  
%
\cmidrule(lr){2-9} 
& \multirow{2}{*}{$1.0$} & $1$ & $\pmb{91.26^{\pm 0.83}}$ & $86.84^{\pm 0.22}$ & 
$86.84^{\pm 0.83}$ &  
$85.10^{\pm 0.18}$ &  
$87.03^{\pm 0.55}$ &  
$82.80^{\pm 1.28}$ \\  
%
& & $4$ & $73.58^{\pm 1.02}$ & $\pmb{74.55^{\pm 0.22}}$ & $71.03^{\pm 0.75}$ &  
$58.32^{\pm 0.07}$ &  
$65.76^{\pm 0.10}$ &  
$64.07^{\pm 1.89}$ \\  
\hline
\multirow{4}{*}{$50$} & 
\multirow{2}{*}{$0.1$} & $1$\Tstrut & $78.63^{\pm 0.39}$ & $\pmb{79.36^{\pm 0.24}}$ & 
$77.44^{\pm 1.17}$ &  
$68.27^{\pm 0.53}$ &  
$78.31^{\pm 0.93}$ &  
$67.95^{\pm 0.22}$ \\  
%
& & $4$ & $\pmb{65.08^{\pm 1.75}}$ & $63.52^{\pm 0.48}$ & 
$62.65^{\pm 0.12}$ &  
$43.57^{\pm 1.99}$ &  
$58.70^{\pm 1.89}$ &  
$54.20^{\pm 1.30}$ \\  
%
\cmidrule(lr){2-9} 
& \multirow{2}{*}{$1.0$} & $1$ & $\pmb{89.31^{\pm 0.17}}$ & $88.24^{\pm 0.71}$ & 
$86.44^{\pm 0.97}$ &  
$83.31^{\pm 0.67}$ &  
$86.00^{\pm 0.93}$ &  
$82.32^{\pm 0.37}$ \\  
%
& & $4$ & $\pmb{70.33^{\pm 0.18}}$ & $70.19^{\pm 1.41}$ & 
$67.66^{\pm 0.78}$ &  
$52.48^{\pm 1.06}$ &  
$60.72^{\pm 1.26}$ &  
$60.17^{\pm 0.23}$ \\  
%
\bottomrule
\vspace{-0.5em}
\end{tabular}
\\
(b) Global prediction (initial accuracy) on the original (uncorrupted) CIFAR-100 training sets \\
\centering
\begin{tabular}{ccccccccc}
\toprule
\multicolumn{3}{c}{FL settings} &
\multicolumn{2}{c}{Our Methods} &
\multirow{2}{*}{Fed-BABU} & 
\multirow{2}{*}{Fed-Avg} & 
\multirow{2}{*}{Fed-Prox} & 
\multirow{2}{*}{pFedBayes} \\
\cmidrule(lr){1-3} \cmidrule(lr){4-5}
$s$ & $f$ & $\tau$ & NIW & Mix.~($K=2$) & & & & \\
\hline
\multirow{4}{*}{$100$} & 
\multirow{2}{*}{$0.1$} & $1$\Tstrut & $\pmb{41.55^{\pm 0.11}}$ & $36.99^{\pm 0.09}$ & 
$34.76^{\pm 0.50}$ &  
$34.40^{\pm 0.31}$ &  
$35.44^{\pm 0.66}$ &  
$35.78^{\pm 0.73}$ \\  
%
& & $4$ & $\pmb{30.84^{\pm 0.07}}$ & $30.60^{\pm 0.27}$ & 
$28.31^{\pm 0.28}$ &  
$29.24^{\pm 0.79}$ &  
$28.09^{\pm 0.71}$ &  
$29.12^{\pm 0.30}$ \\  
%
\cmidrule(lr){2-9} 
& \multirow{2}{*}{$1.0$} & $1$ & $\pmb{41.32^{\pm 0.32}}$ & $38.35^{\pm 0.73}$ & 
$35.58^{\pm 0.41}$ &  
$36.34^{\pm 0.14}$ &  
$36.32^{\pm 0.58}$ &  
$37.37^{\pm 0.62}$ \\  
%
& & $4$ & $\pmb{30.67^{\pm 0.12}}$ & $30.40^{\pm 0.44}$ & 
$28.60^{\pm 0.28}$ &  
$30.31^{\pm 0.87}$ &  
$27.95^{\pm 0.28}$ &  
$29.14^{\pm 0.57}$ \\  
\hline
\multirow{4}{*}{$50$} & 
\multirow{2}{*}{$0.1$} & $1$\Tstrut & $\pmb{41.04^{\pm 0.14}}$ & $36.41^{\pm 0.47}$ & 
$35.44^{\pm 0.58}$ &  
$34.13^{\pm 0.50}$ &  
$36.37^{\pm 0.50}$ &  
$35.68^{\pm 0.27}$ \\  
%
& & $4$ & $\pmb{32.29^{\pm 0.36}}$ & $31.50^{\pm 0.34}$ & 
$29.68^{\pm 0.08}$ &  
$29.19^{\pm 0.14}$ &  
$29.20^{\pm 0.50}$ &  
$30.10^{\pm 0.44}$ \\  
%
\cmidrule(lr){2-9} 
& \multirow{2}{*}{$1.0$} & $1$ & $\pmb{41.64^{\pm 0.21}}$ & $38.54^{\pm 0.42}$ & 
$36.09^{\pm 0.28}$ &  
$35.78^{\pm 0.83}$ &  
$37.13^{\pm 0.55}$ &  
$38.39^{\pm 0.21}$ \\  
%
& & $4$ & $\pmb{32.17^{\pm 0.48}}$ & $30.68^{\pm 0.46}$ & 
$29.28^{\pm 0.17}$ &  
$30.45^{\pm 0.45}$ &  
$28.73^{\pm 0.26}$ &  
$29.74^{\pm 0.25}$ \\  
%
\bottomrule
\vspace{-0.5em}
\end{tabular}
\\
(c) Personalisation performance on the 9 held-out corruption types \\
\centering
\begin{tabular}{ccccccccc}
\toprule
\multicolumn{3}{c}{FL settings} &
\multicolumn{2}{c}{Our Methods} &
\multirow{2}{*}{Fed-BABU} & 
\multirow{2}{*}{Fed-Avg} & 
\multirow{2}{*}{Fed-Prox} & 
\multirow{2}{*}{pFedBayes} \\
\cmidrule(lr){1-3} \cmidrule(lr){4-5}
$s$ & $f$ & $\tau$ & NIW & Mix.~($K=2$) & & & & \\
\hline
\multirow{4}{*}{$100$} & 
\multirow{2}{*}{$0.1$} & $1$\Tstrut & $72.63^{\pm 2.13}$ & $\pmb{74.16^{\pm 3.04}}$ & 
$69.93^{\pm 1.24}$ &  
$62.32^{\pm 1.46}$ &  
$72.94^{\pm 0.37}$ &  
$62.52^{\pm 4.01}$ \\  
%
& & $4$ & $\pmb{62.74^{\pm 0.94}}$ & $61.56^{\pm 1.69}$ & 
$60.33^{\pm 2.12}$ &  
$53.35^{\pm 1.39}$ &  
$56.91^{\pm 1.40}$ &  
$53.71^{\pm 2.13}$ \\  
%
\cmidrule(lr){2-9} 
& \multirow{2}{*}{$1.0$} & $1$ & $83.62^{\pm 1.84}$ & $\pmb{84.88^{\pm 0.85}}$ & 
$77.55^{\pm 2.05}$ &  
$83.44^{\pm 1.68}$ &  
$80.96^{\pm 3.18}$ &  
$72.44^{\pm 0.13}$ \\  
%
& & $4$ & $64.84^{\pm 1.05}$ & $\pmb{67.35^{\pm 1.46}}$ & 
$53.25^{\pm 2.19}$ &  
$53.34^{\pm 1.22}$ &  
$41.39^{\pm 1.18}$ &  
$43.41^{\pm 1.92}$ \\  
\hline
\multirow{4}{*}{$50$} & 
\multirow{2}{*}{$0.1$} & $1$\Tstrut & $\pmb{75.50^{\pm 0.74}}$ & $67.33^{\pm 2.83}$ & 
$59.47^{\pm 2.48}$ &  
$57.77^{\pm 0.80}$ &  
$56.34^{\pm 2.80}$ &  
$53.47^{\pm 0.51}$ \\  
%
& & $4$ & $44.90^{\pm 1.23}$ & $\pmb{46.39^{\pm 0.83}}$ & 
$44.74^{\pm 1.30}$ &  
$44.60^{\pm 2.36}$ &  
$39.94^{\pm 1.81}$ &  
$37.24^{\pm 1.80}$ \\  
%
\cmidrule(lr){2-9} 
& \multirow{2}{*}{$1.0$} & $1$ & $81.46^{\pm 0.67}$ & $\pmb{81.77^{\pm 3.11}}$ & 
$67.43^{\pm 2.58}$ &  
$72.52^{\pm 1.02}$ &  
$71.52^{\pm 2.02}$ &  
$68.47^{\pm 1.43}$ \\  
%
& & $4$ & $\pmb{50.84^{\pm 0.74}}$ & $48.90^{\pm 0.51}$ & 
$40.87^{\pm 3.01}$ &  
$45.10^{\pm 0.50}$ &  
$41.95^{\pm 1.28}$ &  
$41.27^{\pm 0.14}$ \\  
%
\bottomrule
\end{tabular}
\end{footnotesize}
\label{tab:cifar-c}
\vspace{-1.0em}
\end{table*}

\subsubsection{MNIST, Fashion-MNIST and EMNIST}
Following the standard protocols, we set the number of clients $N=100$, the number of shards per client $s=5$, the fraction of participating clients per round $f=0.1$, and the number of local training epochs per round $\tau=1$ (total number of rounds 100) or $5$ (total number of rounds 20) for MNIST and F-MNIST. For EMNIST, we have $N=200$, $f=0.2$, $\tau=1$ (total number of rounds 300). We follow the standard Dirichlet-based client data splitting. For the competing methods FedBE~\citep{fedbe} and FedEM~\citep{fedem}, we use three-component models. The backbone is an MLP with a single hidden layer with 256 units for MNIST/F-MNIST, while we use a standard ConvNet with two hidden layers for EMNIST. 

\subsection{Main Results and Interpretation}

In Table~\ref{tab:cifar_body} (CIFAR-100), Table~\ref{tab:cifar-c} and \ref{tab:all_mnist}, 
we compare our methods (NIW and Mixture with $K\!=\!2$) against the popular FL methods, including FedAvg~\citep{fedavg}, FedBABU~\citep{fedbabu}, FedProx~\citep{fedprox}, as well as recent Bayesian/ensemble methods, FedPA~\citep{fedpa}, FedBE~\citep{fedbe}, pFedBayes~\citep{pfedbayes}. 
FedEM~\citep{fedem}, and FedPop~\citep{fedpop} (See Sec.~\ref{sec:related}).
We run the competing methods (implementation based on their public codes or our own implementation if unavailable) with default hyperparameters (e.g., $\mu=0.01$ for FedProx) and report the results. First of all, our two models (NIW and Mix.) consistently perform the best (by large margins most of the time) in terms of both global prediction and personalisation for nearly all FL settings on the two datasets. This is attributed to the principled Bayesian modeling of the underlying FL data generative process in our approaches that can be seen as rigorous generalisation and extension of the existing intuitive algorithms such as FedAvg and FedProx. In particular, the superiority of our methods to the other Bayesian/ensemble approaches 
verifies the effectiveness of modeling client-wise latent variables $\theta_i$ against the commonly used shared $\theta$ modeling. Our methods are especially robust for the scenarios of significant client data heterogeneity, e.g., CIFAR-C-100 personalisation on data with unseen corruption types. 

In Table~\ref{tab:cifar-c}, our two models (NIW and Mix.) consistently perform the best (by large margins most of the time) in terms of both global prediction and personalisation for all FL settings. This is attributed to the principled Bayesian modeling of the underlying FL data generative process in our approaches that can be seen as rigorous generalisation and extension of the existing intuitive algorithms such as Fed-Avg and Fed-Prox. In particular, the superiority of our methods to the other Bayesian approach pFedBayes verifies the effectiveness of modeling client-wise latent variables $\theta_i$ against the commonly used shared $\theta$ modeling, especially for the scenarios of significant client data heterogeneity (e.g., personalisation on data with unseen corruption types).

\begin{table}[t!]
\setlength{\tabcolsep}{2.0pt}
\caption{(MNIST/FMNIST/EMNIST) Global prediction and personalisation accuracy.
}
\vspace{+0.3em}
\begin{scriptsize}
\centering
(a) Global prediction performance (initial accuracy) \\
\centering
\begin{tabular}{cccccccccccccc}
\toprule
\multirow{2}{*}{Dataset} & 
\multicolumn{3}{c}{FL settings} &
\multicolumn{2}{c}{Our Methods} &
\multirow{2}{*}{Fed-BABU} & 
\multirow{2}{*}{Fed-Avg} & 
\multirow{2}{*}{Fed-Prox} & 
\multirow{2}{*}{pFedBayes} & 
\multirow{2}{*}{FedPA} & 
\multirow{2}{*}{FedBE} & 
\multirow{2}{*}{FedEM} & 
\multirow{2}{*}{FedPop} \\ 
& $s$ & $f$ & $\tau$ & NIW & Mix~($K\!=\!2$) & 
& & & & & & & \\
\hline
\multirow{2}{*}{MNIST} & 
\multirow{2}{*}{$5$} & 
\multirow{2}{*}{$0.1$} & $1$\Tstrut & $97.81$ & $\pmb{97.94}$ & 
$97.16$ &  
$97.15$ &  
$97.38$ &  
$97.32$ & 
$93.38$ & 
$97.16$ & 
$97.38$ & 
$97.42$ \\  
& & & $5$ & $95.51$ & $\pmb{95.68}$ & 
$94.59$ &  
$94.86$ &  
$95.28$ &  
$94.28$ & 
$94.87$ & 
$95.11$ & 
$95.33$ & 
$94.98$ \\  
\midrule
\multirow{2}{*}{FMNIST} & 
\multirow{2}{*}{$5$} & 
\multirow{2}{*}{$0.1$} & $1$\Tstrut & $84.18$ & $\pmb{84.28}$ & 
$83.86$ &  
$81.98$ &  
$83.25$ &  
$81.20$ & 
$81.10$ & 
$80.35$ & 
$83.51$ & 
$82.40$ \\  
& & & $5$ & $77.48$ & $\pmb{77.60}$ & 
$76.10$ &  
$73.70$ &  
$73.67$ &  
$72.35$ & 
$73.47$ & 
$73.28$ & 
$76.69$ & 
$72.93$ \\  
\midrule
EMNIST & 
$5$ & $0.2$ & $1$\Tstrut & $85.40$ & $\pmb{85.58}$ & 
$84.33$ &  
$85.27$ &  
$85.27$ &  
$84.65$ & 
$83.20$ & 
$85.24$ & 
$85.21$ & 
$85.27$ \\  
\bottomrule
\vspace{+0.2em}
\end{tabular}
\\ (b) Personalisation performance \\
\centering
\setlength{\tabcolsep}{2.0pt}
\begin{tabular}{cccccccccccccc}
\toprule
\multirow{2}{*}{Dataset} & 
\multicolumn{3}{c}{FL settings} &
\multicolumn{2}{c}{Our Methods} &
\multirow{2}{*}{Fed-BABU} & 
\multirow{2}{*}{Fed-Avg} & 
\multirow{2}{*}{Fed-Prox} & 
\multirow{2}{*}{pFedBayes} & 
\multirow{2}{*}{FedPA} & 
\multirow{2}{*}{FedBE} & 
\multirow{2}{*}{FedEM} & 
\multirow{2}{*}{FedPop} \\ 
& $s$ & $f$ & $\tau$ & NIW & Mix~($K\!=\!2$) & 
& & & & & & & \\
\hline
\multirow{2}{*}{MNIST} & 
\multirow{2}{*}{$5$} & 
\multirow{2}{*}{$0.1$} & $1$\Tstrut & $98.78$ & $\pmb{98.85}$ & 
$97.73$ &  
$97.83$ &  
$97.96$ &  
$97.88$ & 
$96.66$ & 
$97.62$ & 
$97.89$ & 
$97.91$ \\  
& & & $5$ & $96.53$ & $\pmb{96.67}$ &
$95.89$ &  
$96.10$ &  
$96.45$ &  
$95.49$ & 
$95.93$ & 
$96.37$ & 
$96.46$ & 
$95.80$ \\  
\midrule
\multirow{2}{*}{FMNIST} & 
\multirow{2}{*}{$5$} & 
\multirow{2}{*}{$0.1$} & $1$\Tstrut & $92.48$ & $\pmb{92.54}$ & 
$91.03$ &  
$90.59$ &  
$90.72$ &  
$90.31$ & 
$91.60$ & 
$89.96$ & 
$92.10$ & 
$91.14$ \\  
& & & $5$ & $\pmb{89.91}$ & $89.53$ &
$89.02$ &  
$86.85$ &  
$87.35$ &  
$86.38$ & 
$88.76$ & 
$85.00$ & 
$89.65$ & 
$86.40$ \\  
\midrule
EMNIST & 
$5$ & $0.2$ & $1$\Tstrut & $88.84$ & $\pmb{88.97}$ & 
$83.09$ &  
$87.92$ &  
$88.39$ &  
$88.12$ & 
$85.10$ & 
$88.37$ & 
$88.32$ & 
$88.40$ \\  
\bottomrule
\end{tabular}
\end{scriptsize}
\label{tab:all_mnist}
\end{table}

\subsection{Training Dynamics}\label{sec:train_dynamics}

We visualise the training curves for the global posterior $q(\phi; L_0)$ and for the personalisation training on CIFAR-100 ($s=100,f=0.1,\tau=10$ setting) in Fig.~\ref{fig:cifar_dynamics}. On the right panel (personalisation), we see that the trained model $q(\phi; L_0)$ is improved further by about $4\%$ in test accuracy by our personalisation training.
We also visualise the MNIST training curves for our FedHB-NIW and FedHB-Mixture models in Fig.~\ref{fig:mnist_convergence} for different client participant rates $f$. For all different rates, they all converge to the equal (local optimum) loss values, including the centralised optimisation, which supports our theoretical claim on the convergence (Theorem~\ref{thm:converge}). We can also see that increasing the number of participating clients improves the convergence rates, as stated in our Theorem~\ref{thm:converge}, specifically, the effect of $N_f$ on the optimality error formula on the right hand side of (\ref{eq:convergence}).

\begin{figure}[t!]
\begin{center}
%
\centering
\includegraphics[trim = 7mm 2mm 7mm 4mm, clip, scale=0.218
]{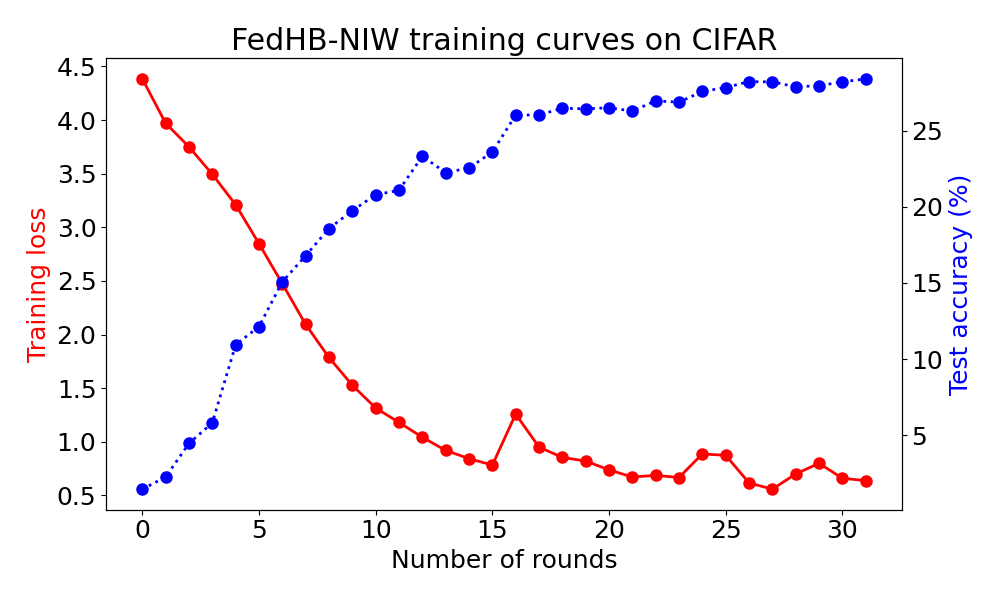}
\ \ 
\includegraphics[trim = 7mm 2mm 7mm 4mm, clip, scale=0.218
]{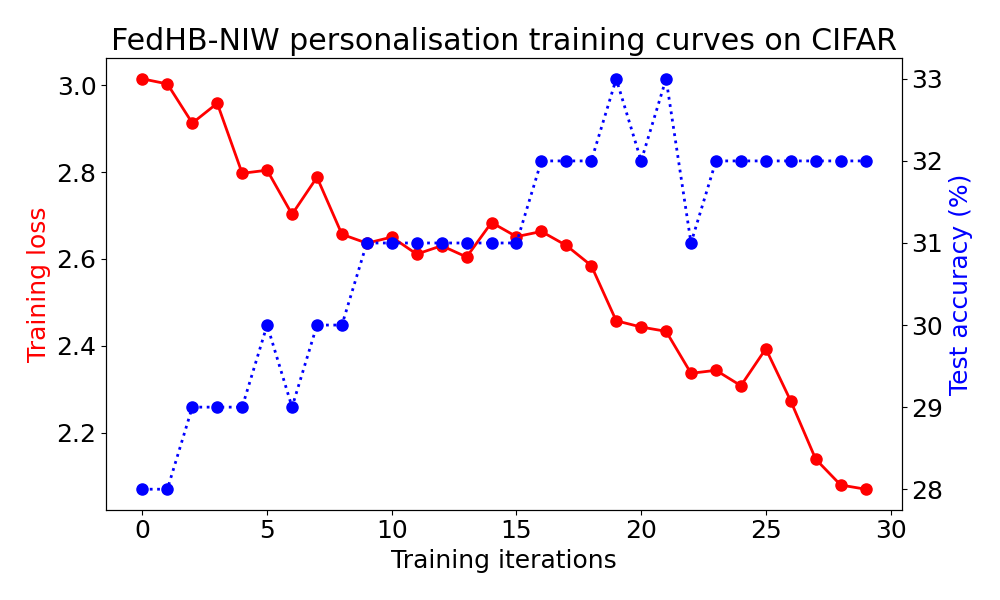}
\vspace{-0.5em}
\end{center}
\vspace{-1.8em}
\caption{CIFAR-100 training dynamics. (Left) Training curves as FL rounds. (Right) Personalisation training curves. We also superimpose test accuracies. 
}
\vspace{-0.5em}
\label{fig:cifar_dynamics}
\end{figure}

\begin{figure}[t!]
\begin{center}
%
\centering
\includegraphics[trim = 7mm 2mm 7mm 4mm, clip, scale=0.218
]{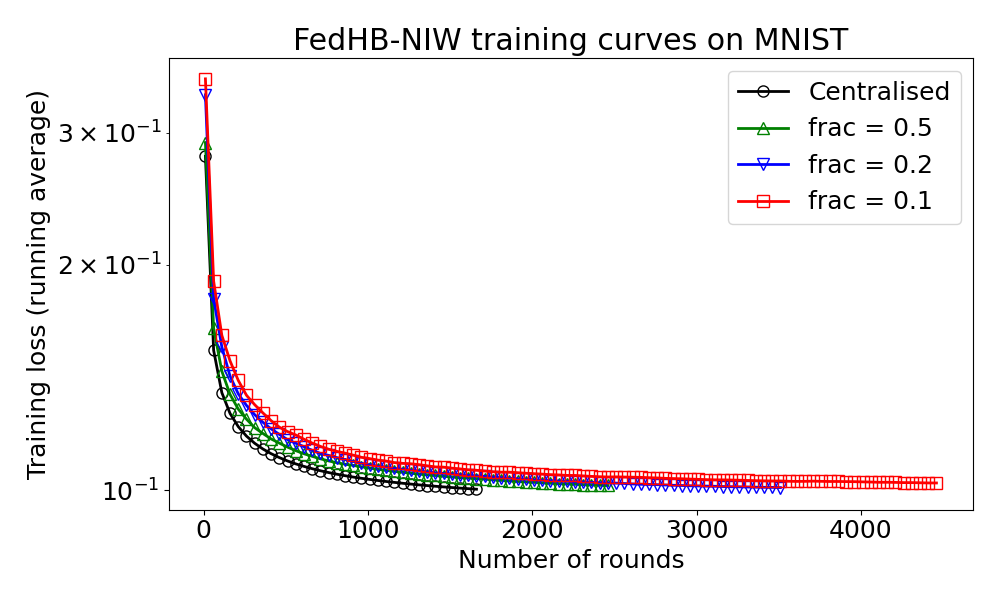}
\ \ 
\includegraphics[trim = 7mm 2mm 7mm 4mm, clip, scale=0.218
]{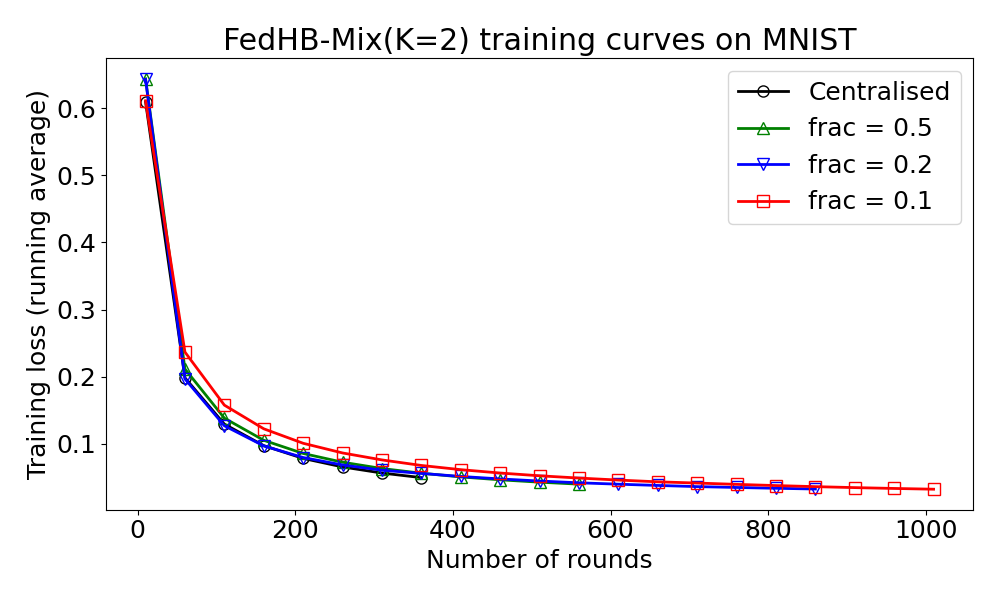}
\vspace{-0.3em}
\end{center}
\vspace{-1.8em}
\caption{MNIST training convergence with different numbers of participating clients. (Left) NIW and (Right) Mixture ($K=2$). 
}
\vspace{-1.0em}
\label{fig:mnist_convergence}
\end{figure}

\subsection{Ablation Study and Hyperparameter Sensitivity}\label{sec:extra_expmts}

\subsubsection{Changing mixture order ($K$) in our Mixture model}\label{sec:abl_mix_k}

We test our mixture model with different mixture orders $K=2,5,10$ on CIFAR-100 (Table~\ref{tab:cifar_more_k}) and CIFAR-C-100 (Table~\ref{tab:cifar-c_more_k}). In the last columns of the tables, we also report the performance of the centralised (non-FL) training, in which the batch sampling follows the corresponding FL settings. That is, at each round, the minibatches for SGD (for conventional cross-entropy loss minimisation) are sampled from the data of the participating clients. The centralised training sometimes outperforms the best FL algorithms (our models), but can fail completely especially when data heterogeneity is high (small $s$) and $\tau$ is large. This may be due to overtraining on biased client data for relatively few rounds.  Our FL models perform well consistently and stably being comparable to centralised training on its ideal settings (small $\tau$ and/or large $s$).

\begin{table*}[t!]
\setlength{\tabcolsep}{8.0pt}
\caption{(CIFAR-100) Global prediction and personalisation accuracy. Mixture order $K$ varied. Comparison with centralised (non-FL) training.
}
\centering
\begin{footnotesize}
(a) Global prediction performance (initial accuracy) \\
\centering
\begin{tabular}{ccccccccc}
\toprule
\multicolumn{3}{c}{FL settings} & \multirow{2}{*}{NIW  (Ours)} & \multicolumn{3}{c}{Mixture (Ours)} & \multirow{2}{*}{FedBABU} & \multirow{2}{*}{Centralised} \\
\cmidrule(lr){1-3} \cmidrule(lr){5-7} 
$s$ & $f$ & $\tau$ & & $K=2$ & $K=5$ & $K=10$ & & \\
\hline
\multirow{4}{*}{$100$} & 
\multirow{2}{*}{$0.1$} & $1$\Tstrut & $49.76^{\pm 0.12}$ & $49.37^{\pm 0.30}$ & $46.97^{\pm 0.13}$ & $48.35^{\pm 0.07}$ & $42.35^{\pm 0.42}$ & $52.21^{\pm 0.19}$ \\
& & $10$ & $29.02^{\pm 0.33}$ & $29.02^{\pm 0.29}$ & $29.08^{\pm 0.63}$ & $29.90^{\pm 0.25}$ & $27.93^{\pm 0.28}$ & $36.87^{\pm 1.72}$ \\
\cmidrule(lr){2-9} 
& \multirow{2}{*}{$1.0$} & $1$ & $57.80^{\pm 0.10}$ & $52.94^{\pm 0.36}$ & $52.24^{\pm 0.20}$ & $51.92^{\pm 0.04}$ & $48.17^{\pm 0.56}$ & $58.50^{\pm 0.52}$ \\
& & $10$ & $29.53^{\pm 0.42}$ & $30.55^{\pm 0.15}$ & $30.85^{\pm 0.13}$ & $30.24^{\pm 0.49}$ & $28.67^{\pm 0.51}$ & $48.16^{\pm 0.58}$ \\
\hline
\multirow{4}{*}{$10$} & 
\multirow{2}{*}{$0.1$} & $1$\Tstrut & $37.54^{\pm 0.25}$ & $38.07^{\pm 0.40}$ & $39.96^{\pm 0.63}$ & $39.97^{\pm 0.52}$ & $35.04^{\pm 0.56}$ & $28.55^{\pm 1.05}$ \\
& & $10$ & $18.99^{\pm 0.03}$ & $18.95^{\pm 0.13}$ & $18.93^{\pm 0.37}$ & $18.95^{\pm 0.03}$ & $18.54^{\pm 0.37}$ & $9.92^{\pm 0.33}$ \\
\cmidrule(lr){2-9} 
& \multirow{2}{*}{$1.0$} & $1$ & $50.40^{\pm 0.11}$ & $49.52^{\pm 0.88}$ & $49.82^{\pm 0.43}$ & $49.51^{\pm 0.36}$ & $45.41^{\pm 0.11}$ & $78.37^{\pm 0.88}$ \\
& & $10$ & $22.87^{\pm 0.41}$ & $23.59^{\pm 0.47}$ & $23.69^{\pm 0.76}$ & $24.28^{\pm 1.04}$ & $21.92^{\pm 0.66}$ & $4.40^{\pm 0.21}$ \\
\bottomrule
\vspace{-0.3em}
\end{tabular}
\\
(b) Personalisation performance \\
\centering
\begin{tabular}{ccccccccc}
\toprule
\multicolumn{3}{c}{FL settings} & \multirow{2}{*}{NIW  (Ours)} & \multicolumn{3}{c}{Mixture (Ours)} & \multirow{2}{*}{FedBABU} & \multirow{2}{*}{Centralised} \\
\cmidrule(lr){1-3} \cmidrule(lr){5-7} 
$s$ & $f$ & $\tau$ & & $K=2$ & $K=5$ & $K=10$ & & \\
\hline
\multirow{4}{*}{$100$} & 
\multirow{2}{*}{$0.1$} & $1$\Tstrut & $54.16^{\pm 0.50}$ & $56.17^{\pm 0.16}$ & $54.93^{\pm 0.25}$ & $55.83^{\pm 0.47}$ &  $50.43^{\pm 0.93}$ & $53.18^{\pm 0.10}$ \\
& & $10$ & $36.68^{\pm 0.37}$ & $36.32^{\pm 0.27}$ & $37.30^{\pm 0.64}$ & $37.34^{\pm 0.38}$ & $35.45^{\pm 0.34}$ & $38.20^{\pm 1.58}$ \\
\cmidrule(lr){2-9} 
& \multirow{2}{*}{$1.0$} & $1$ & $60.36^{\pm 0.89}$ & $58.82^{\pm 0.37}$ & $58.16^{\pm 0.26}$ & $58.32^{\pm 0.34}$ & $55.87^{\pm 0.91}$ & $58.49^{\pm 0.50}$ \\
& & $10$ & $35.92^{\pm 0.17}$ & $36.22^{\pm 0.17}$ & $36.44^{\pm 0.15}$ & $35.91^{\pm 0.15}$ & $35.58^{\pm 0.24}$ & $48.17^{\pm 0.59}$ \\
\hline
\multirow{4}{*}{$10$} & 
\multirow{2}{*}{$0.1$} & $1$\Tstrut & $79.41^{\pm 0.24}$ & $79.70^{\pm 0.19}$ & $79.29^{\pm 0.19}$ & $77.44^{\pm 0.54}$ & $75.44^{\pm 0.36}$ & $42.04^{\pm 1.38}$ \\
& & $10$ & $67.35^{\pm 1.02}$ & $67.57^{\pm 0.62}$ & $67.84^{\pm 0.40}$ & $67.33^{\pm 0.26}$ & $66.24^{\pm 0.53}$ & $17.40^{\pm 1.05}$ \\
\cmidrule(lr){2-9} 
& \multirow{2}{*}{$1.0$} & $1$ & $82.71^{\pm 0.37}$ & $81.03^{\pm 0.35}$ & $79.91^{\pm 0.25}$ & $80.92^{\pm 0.19}$ & $78.92^{\pm 0.23}$ & $78.43^{\pm 0.90}$ \\
& & $10$ & $67.78^{\pm 1.02}$ & $66.74^{\pm 0.27}$ & $66.50^{\pm 0.24}$ & $67.30^{\pm 0.29}$ & $66.25^{\pm 0.46}$ & $5.13^{\pm 0.19}$ \\
\bottomrule
\end{tabular}
\end{footnotesize}
\label{tab:cifar_more_k}
\end{table*}

\begin{table*}[t!]
\setlength{\tabcolsep}{8.0pt}
\caption{(CIFAR-C-100) Global prediction and personalisation accuracy. Mixture order $K$ varied. Comparison with centralised (non-FL) training.
}
\centering
\begin{footnotesize}
(a) Global prediction (initial accuracy) on test splits for the 10 training corruption types \\
\centering
\begin{tabular}{ccccccccc}
\toprule
\multicolumn{3}{c}{FL settings} & \multirow{2}{*}{NIW (Ours)} & \multicolumn{3}{c}{Mixture (Ours)} & \multirow{2}{*}{FedBABU} & \multirow{2}{*}{Centralised} \\
\cmidrule(lr){1-3} \cmidrule(lr){5-7} 
$s$ & $f$ & $\tau$ & & $K=2$ & $K=5$ & $K=10$ & & \\
\hline
\multirow{4}{*}{$100$} & 
\multirow{2}{*}{$0.1$} & $1$\Tstrut & $81.22^{\pm 0.14}$ & $80.34^{\pm 1.44}$ & $81.30^{\pm 0.21}$ & $81.23^{\pm 0.79}$ & $79.45^{\pm 0.71}$ & $87.68^{\pm 0.48}$ \\
& & $4$ & $67.69^{\pm 0.74}$ & $65.81^{\pm 0.84}$ & $64.37^{\pm 0.99}$ & $66.72^{\pm 1.46}$ & $63.58^{\pm 1.28}$ & $76.82^{\pm 1.17}$ \\
\cmidrule(lr){2-9} 
& \multirow{2}{*}{$1.0$} & $1$ & $91.26^{\pm 0.83}$ & $86.84^{\pm 0.22}$ & $86.41^{\pm 0.58}$ & $87.04^{\pm 0.33}$ & $86.84^{\pm 0.83}$ & $90.22^{\pm 0.57}$ \\
& & $4$ & $73.58^{\pm 1.02}$ & $74.55^{\pm 0.22}$ & $74.53^{\pm 0.81}$ & $75.03^{\pm 0.45}$ & $71.03^{\pm 0.75}$ & $87.71^{\pm 0.91}$ \\
\hline
\multirow{4}{*}{$50$} & 
\multirow{2}{*}{$0.1$} & $1$\Tstrut & $78.63^{\pm 0.39}$ & $79.36^{\pm 0.24}$ & $78.23^{\pm 0.33}$ & $78.45^{\pm 0.40}$ & $77.44^{\pm 1.17}$ & $84.97^{\pm 0.22}$ \\
& & $4$ & $65.08^{\pm 1.75}$ & $63.52^{\pm 0.48}$ & $63.08^{\pm 0.55}$ & $63.13^{\pm 0.21}$ & $62.65^{\pm 0.12}$ & $70.76^{\pm 2.93}$ \\
\cmidrule(lr){2-9} 
& \multirow{2}{*}{$1.0$} & $1$ & $89.31^{\pm 0.17}$ & $88.24^{\pm 0.71}$ & $87.81^{\pm 0.66}$ & $87.02^{\pm 0.19}$ & $86.44^{\pm 0.97}$ & $89.92^{\pm 0.47}$ \\
& & $4$ & $70.33^{\pm 0.18}$ & $70.19^{\pm 1.41}$ & $71.87^{\pm 2.31}$ & $74.18^{\pm 0.46}$ & $67.66^{\pm 0.78}$ & $86.40^{\pm 1.90}$ \\
\bottomrule
\vspace{-0.3em}
\end{tabular}
\\
(b) Global prediction (initial accuracy) on the original (uncorrupted) CIFAR-100 training sets \\
\centering
\begin{tabular}{ccccccccc}
\toprule
\multicolumn{3}{c}{FL settings} & \multirow{2}{*}{NIW (Ours)} & \multicolumn{3}{c}{Mixture (Ours)} & \multirow{2}{*}{FedBABU} & \multirow{2}{*}{Centralised} \\
\cmidrule(lr){1-3} \cmidrule(lr){5-7} 
$s$ & $f$ & $\tau$ & & $K=2$ & $K=5$ & $K=10$ & & \\
\hline
\multirow{4}{*}{$100$} & 
\multirow{2}{*}{$0.1$} & $1$\Tstrut & $41.55^{\pm 0.11}$ & $36.99^{\pm 0.09}$ & $37.75^{\pm 0.50}$ & $37.78^{\pm 0.44}$ & $34.76^{\pm 0.50}$ & $35.10^{\pm 0.65}$ \\
& & $4$ & $30.84^{\pm 0.07}$ & $30.60^{\pm 0.27}$ & $30.51^{\pm 0.20}$ & $29.13^{\pm 0.15}$ & $28.31^{\pm 0.28}$ & $33.64^{\pm 1.09}$ \\
\cmidrule(lr){2-9} 
& \multirow{2}{*}{$1.0$} & $1$ & $41.32^{\pm 0.32}$ & $38.35^{\pm 0.73}$ & $38.85^{\pm 0.96}$ & $38.97^{\pm 0.26}$ & $35.58^{\pm 0.41}$ & $32.78^{\pm 0.22}$ \\
& & $4$ & $30.67^{\pm 0.12}$ & $30.40^{\pm 0.44}$ & $30.57^{\pm 0.28}$ & $30.95^{\pm 0.28}$ & $28.60^{\pm 0.28}$ & $28.95^{\pm 2.09}$ \\
\hline
\multirow{4}{*}{$50$} & 
\multirow{2}{*}{$0.1$} & $1$\Tstrut & $41.04^{\pm 0.14}$ & $36.41^{\pm 0.47}$ & $36.03^{\pm 0.24}$ & $38.32^{\pm 0.07}$ & $35.44^{\pm 0.58}$ & $35.71^{\pm 0.13}$ \\
& & $4$ & $32.29^{\pm 0.36}$ & $31.50^{\pm 0.34}$ & $31.19^{\pm 0.26}$ & $31.06^{\pm 0.58}$ & $29.68^{\pm 0.08}$ & $35.28^{\pm 1.47}$ \\
\cmidrule(lr){2-9} 
& \multirow{2}{*}{$1.0$} & $1$ & $41.64^{\pm 0.21}$ & $38.54^{\pm 0.42}$ & $39.19^{\pm 0.37}$ & $38.93^{\pm 0.15}$ & $36.09^{\pm 0.28}$ & $33.49^{\pm 0.17}$ \\
& & $4$ & $32.17^{\pm 0.48}$ & $30.68^{\pm 0.46}$ & $31.29^{\pm 0.65}$ & $32.46^{\pm 0.20}$ & $29.28^{\pm 0.17}$ & $29.50^{\pm 1.24}$ \\
\bottomrule
\vspace{-0.3em}
\end{tabular}
\\
(c) Personalisation performance on the 9 held-out corruption types \\
\centering
\begin{tabular}{ccccccccc}
\toprule
\multicolumn{3}{c}{FL settings} & \multirow{2}{*}{NIW (Ours)} & \multicolumn{3}{c}{Mixture (Ours)} & \multirow{2}{*}{FedBABU} & \multirow{2}{*}{Centralised} \\
\cmidrule(lr){1-3} \cmidrule(lr){5-7} 
$s$ & $f$ & $\tau$ & & $K=2$ & $K=5$ & $K=10$ & & \\
\hline
\multirow{4}{*}{$100$} & 
\multirow{2}{*}{$0.1$} & $1$\Tstrut & $72.63^{\pm 2.13}$ & $74.16^{\pm 3.04}$ & $74.43^{\pm 4.94}$ & $75.65^{\pm 4.68}$ & $69.93^{\pm 1.24}$ & $88.76^{\pm 0.31}$ \\
& & $4$ & $62.74^{\pm 0.94}$ & $61.56^{\pm 1.69}$ & $60.46^{\pm 0.13}$ & $64.41^{\pm 2.66}$ & $60.33^{\pm 2.12}$ & $80.47^{\pm 0.99}$ \\
\cmidrule(lr){2-9} 
& \multirow{2}{*}{$1.0$} & $1$ & $83.62^{\pm 1.84}$ & $84.88^{\pm 0.85}$ & $83.01^{\pm 1.79}$ & $82.58^{\pm 0.91}$ & $77.55^{\pm 2.05}$ & $87.55^{\pm 2.93}$ \\
& & $4$ & $64.84^{\pm 1.05}$ & $67.35^{\pm 1.46}$ & $64.34^{\pm 0.75}$ & $65.12^{\pm 0.77}$ & $53.25^{\pm 2.19}$ & \ $76.58^{\pm 12.63}$ \\
\hline
\multirow{4}{*}{$50$} & 
\multirow{2}{*}{$0.1$} & $1$\Tstrut & $75.50^{\pm 0.74}$ & $67.33^{\pm 2.83}$ & $67.32^{\pm 1.35}$ & $65.34^{\pm 3.19}$ & $59.47^{\pm 2.48}$ & $84.08^{\pm 1.03}$ \\
& & $4$ & $44.90^{\pm 1.23}$ & $46.39^{\pm 0.83}$ & $45.95^{\pm 0.37}$ & $45.43^{\pm 0.34}$ & $44.74^{\pm 1.30}$ & $74.14^{\pm 2.51}$ \\
\cmidrule(lr){2-9} 
& \multirow{2}{*}{$1.0$} & $1$ & $81.46^{\pm 0.67}$ & $81.77^{\pm 3.11}$ & $78.03^{\pm 1.05}$ & $74.62^{\pm 1.95}$ & $67.43^{\pm 2.58}$ & $87.57^{\pm 1.62}$ \\
& & $4$ & $50.84^{\pm 0.74}$ & $48.90^{\pm 0.51}$ & $45.07^{\pm 1.15}$ & $48.78^{\pm 0.49}$ & $40.87^{\pm 3.01}$ & $81.46^{\pm 1.11}$ \\
\bottomrule
\end{tabular}
\end{footnotesize}
\label{tab:cifar-c_more_k}
\end{table*}

\subsubsection{Comparison with simple ensemble baselines.}\label{sec:ens_baseline}

\begin{figure}[t!]
\begin{center}
%
\centering
\includegraphics[trim = 1mm 2mm 7mm 4mm, clip, scale=0.435
]{figs/impact_of_K_cifar100_s100_f0p1_tau1_global.pdf}
\ \ \ \ \ \ \ \ 
\includegraphics[trim = 1mm 2mm 7mm 4mm, clip, scale=0.435
]{figs/impact_of_K_cifar100_s100_f0p1_tau1_personal.pdf}
\vspace{-0.3em}
\end{center}
\vspace{-1.8em}
\caption{Comparison between our mixture model and ensemble baselines ($K$ varied) on CIFAR-100.
}
\label{fig:ens_baseline}
\end{figure}

Our mixture model maintains $K$ backbone networks (specifically $\{r_j\}_{j=1}^K$) where the mixture order $K$ is usually small but greater than 1 (e.g., $K=2$). Thus it requires extra computational resources than other methods (including our NIW model) that only deal with a single  backbone. As a baseline comparison, we aim to come up with some simple extension of FedAvg~\citep{fedavg} that incorporates multiple (the same $K$) networks. Here are the detailed descriptions of the baseline ensemble extensions:
\begin{enumerate}
\item The server maintains $K$ networks (denoted by $\theta^1,\dots,\theta^K$). 
\item We partition the clients into $K$ groups with equal proportions. We will assign each  $\theta^j$ to each group $j$ ($j=1,\dots,K$). 
\item At each round, each participating client $i$ receives the current model $\theta^{j(i)}$ from the server, where $j(i)$ means the group index to which client $i$ belongs. 
\item The clients perform local updates as usual by  warm-start with the received models, and send the updated models back to the server. 
\item The server collects updated local models from the clients, and takes the average within each group $j$ to update $\theta^j$. 
\item After training, we have trained $K$ networks. At test time, we can use these $K$ networks in two different ways/options: ({\em Preset} option) Each client $i$ uses the network assigned to its group, i.e., $\theta^{j(i)}$, for both prediction and finetuning/personalisation; ({\em Ensemble} option) We use all $K$ networks (as an ensemble) for prediction and finetuning. 
\end{enumerate}
Note that $K=1$ exactly reduces to FedAvg (or FedBABU). 
In Fig.~\ref{fig:ens_baseline} we visualise the performance of these ensemble baselines, compared with our mixture model for different $K=2,5,10$ on CIFAR-100 with ($f=0.1,\tau=1$) setting. It clearly shows that these simple ensemble strategies are prone to overfit. The result signifies the importance of the sophisticated negative  \texttt{log-sum-exp} regularisation in the client/server updates as in (\ref{eq:mix_client_opt2}) and (\ref{eq:mix_server}) in our mixture model.


\subsubsection{Hyperparameter Sensitivity}\label{sec:abl_hparam}

We test sensitivity to some key hyperparameters in our models. For NIW, we have $p_{do}=1-p_{drop}$, the MC-dropout probability, where we used $p_{drop}=0.001$ in the main experiments. In Fig.~\ref{fig:abl_hparam}(a) we report the performance of NIW for different values ($p_{drop}=0, 10^{-4}, 10^{-2}$) on CIFAR-100 with ($s=100,f=0.1,\tau=1$) setting. We see that the performance is not very sensitive to $p_{drop}$ unless it is too large (e.g., $0.01$). 
For the Mixture model, different mixture orders $K=2,5,10$ are contrasted in Fig.~\ref{fig:abl_hparam}(b). As seen, having more mixture components does no harm (no overfitting), but we do not see further improvement over $K=2$ in our experiments  
(See also results on CIFAR-C-100 in Table~5 in Appendix G). 
\subsection{Running Times}\label{sec:run_times}

\begin{table}
\caption{Running times (in seconds) on CIFAR-100 with ($s=100,f=1.0,\tau=1$) setting.
}
\vspace{-0.3em}
\centering
\begin{footnotesize}
\centering
\begin{tabular}{cccccc}
\toprule
\multicolumn{2}{c}{} & Fed-BABU & NIW & Mix.~($K=2$) & Mix.~($K=5$) \\
\hline
\multirow{2}{*}{Train} & Client\Tstrut & $0.283$ & $0.362$ & $0.481$ & $0.595$ \\
& Server\Tstrut & $0.027$ & $0.267$ & $1.283$ & $1.434$ \\
\hline
\multicolumn{2}{c}{Global prediction}\Tstrut & $0.021$ & $0.025$ & $0.040$ & $0.060$ \\
\hline
\multicolumn{2}{c}{Personalisation}\Tstrut & $1.581$ & $2.158$ & $2.421$ & $2.766$ \\
%
%
\bottomrule
\end{tabular}
\end{footnotesize}
\label{tab:running_time}
\end{table}

Although our models achieve significant improvement in prediction accuracy, 
we have extra computational overhead compared to simpler FL methods like Fed-BABU. To see if this extra cost is allowable, we measure/compare wall clock times in Table~\ref{tab:running_time}, where all methods are tested on the same machine, Xeon 2.20GHz CPU with a single RTX 2080 Ti GPU. For NIW, the extra cost in the local client update and personalisation (training) originates from the penalty term in (\ref{eq:niw_elbo_1_indiv}), while model weight squaring to compute $V_0$ in (\ref{eq:niw_server_m0_V0}) incurs additional cost in server update.  
For Mixture, the increased time in training is mainly due to the overhead of computing distances from the $K$ server models in (\ref{eq:mix_client_opt2}) and (\ref{eq:mix_server}). 
However, overall the extra costs are not prohibitively large, rendering our methods sufficiently practical.

\section{Conclusion}

We have proposed a novel hierarchical Bayesian approach to FL where the block-coordinate descent solution to the variational inference leads to a viable algorithm for FL. Our method not only justifies the previous FL algorithms that look intuitive but theoretically less underpinned, but also generalises them even further via principled Bayesian approaches. With strong theoretical support in convergence rate and generalisation error, our approach is also empirically shown to be superior to recent FL approaches by large margin on several benchmarks with various FL settings.

\newpage

\appendix

\centerline{\huge\textbf{{Appendix}}}

\begin{flushleft}
\vspace{+1.0em}
{\Large\textbf{{Table of Contents}}}
\end{flushleft}
\begin{itemize}
\vspace{-1.0em}
\item Detailed derivations (Sec.~\ref{app_sec:detailed_derivs})
\begin{itemize}
    \vspace{-0.5em}
    \item ELBO Derivation for General Framework (Sec.~\ref{app_sec:elbo_deriv})
    \item Normal-Inverse-Wishart (NIW) Model (Sec.~\ref{appsec:niw})
    \item Mixture Model (Sec.~\ref{appsec:mix})
\end{itemize}
\item Theorems and Proofs (Sec.~\ref{app_sec:theorems_proofs})
\begin{itemize}
    \vspace{-0.5em}
    \item Convergence Analysis (Sec.~\ref{appsec:convergence})
    \item Generalisation Error Bound (Sec.~\ref{appsec:generalisation})
\end{itemize}
\vspace{+1.0em}
\end{itemize}

\section{Detailed Derivations}\label{app_sec:detailed_derivs}

\subsection{ELBO Derivation for General Framework}\label{app_sec:elbo_deriv}

We derive the ELBO objective (\ref{eq:elbo}) for the general Bayesian FL framework. 
We first derive the KL divergence between the posterior $q(\phi,\theta_{1:N})$ and $p(\phi,\theta_{1:N}|D_{1:N})$ as follows:
\begin{align}
&\textrm{KL}\big( q(\phi,\theta_{1:N}) \ \big|\big| \ p(\phi,\theta_{1:N}|D_{1:N}) \big) \ = \ \mathbb{E}_q \Bigg[ \log \frac{q(\phi) \cdot \prod_i q_i(\theta_i) \cdot p(D_{1:N})}{p(\phi) \cdot \prod_i p(\theta_i|\phi) \cdot \prod_i p(D_i|\theta_i)} \Bigg] \\
&\ \ \ \ \ \ = \ \mathbb{E}_{q(\phi)} \Bigg[ \log \frac{q(\phi)}{p(\phi)} \Bigg] + \sum_{i=1}^N \mathbb{E}_{q(\phi)q_i(\theta_i)} \Bigg[ \log \frac{q_i(\theta_i)}{p(\theta_i|\phi)} \Bigg] + \mathbb{E}_{q(\theta_{1:N})} \Bigg[ \log \frac{p(D_{1:N})}{\prod_i p(D_i|\theta_i)} \Bigg] \\
&\ \ \ \ \ \ = \ \underbrace{\textrm{KL}(q(\phi)||p(\phi)) + \sum_{i=1}^N \Big( \mathbb{E}_{q(\phi)}\big[\textrm{KL}(q_i(\theta_i) || p(\theta_i|\phi))\big] + \mathbb{E}_{q_i(\theta_i)}[-\log p(D_i|\theta_i)] \Big)}_{=: \ \mathcal{L}(L)} \nonumber \\
&\ \ \ \ \ \ \ \ \ \ \ \ 
\ + \ \log p(D_{1:N}). \label{app_eq:elbo_deriv_last} 
\end{align}
Since the KL divergence $\textrm{KL}\big( q(\phi,\theta_{1:N}) \big|\big| p(\phi,\theta_{1:N}|D_{1:N}) \big)$ is non-negative, so is (\ref{app_eq:elbo_deriv_last}). Hence,
\begin{align}
\mathcal{L}(L) \geq -\log p(D_{1:N}) 
\end{align}
That is, $\mathcal{L}(L)$ is an upper bound of the negative data log-likelihood $- \log p(D_{1:N})$, rendering $\mathcal{L}(L)$ our objective function to be minimised. Since $-\mathcal{L}(L)$ is a lower bound of the (log-) evidence, we often call $\mathcal{L}(L)$ the negative evidence lower bound (ELBO) function.

\subsection{Normal-Inverse-Wishart (NIW) Model}\label{appsec:niw}
We define the prior as a conjugate form of Gaussian and Normal-Inverse-Wishart. More specifically, each local client has Gaussian prior $p(\theta_i|\phi) = \mathcal{N}(\theta_i; \mu, \Sigma)$ where $\phi = (\mu,\Sigma)$, and the global latent variable $\phi$ is distributed as a conjugate prior which is Normal-Inverse-Wishart (NIW),
\begin{align}
&p(\phi) = \mathcal{NIW}(\mu, \Sigma; \Lambda) = \mathcal{N}(\mu; \mu_0, \lambda_0^{-1}\Sigma) \cdot  \mathcal{IW}(\Sigma; \Sigma_0, \nu_0), \label{appeq:niw_prior_phi} \\
&p(\theta_i|\phi) = \mathcal{N}(\theta_i; \mu, \Sigma), \ \ i=1,\dots,N, \label{appeq:niw_prior_theta}
\end{align}
where $\Lambda=\{\mu_0,\Sigma_0,\lambda_0,\nu_0\}$ is the parameters of the NIW. Although $\Lambda$ can be learned via data marginal likelihood maximisation (e.g., empirical Bayes), but for simplicity we leave it fixed as: $\mu_0=0$, $\Sigma_0=I$, $\lambda_0=1$, and $\nu_0=d+2$ where $d$ is the number of parameters in $\theta_i$ or $\mu$. 
This choice ensures that the mean of $\Sigma$ equals $I$, and $\mu$ is distributed as zero-mean Gaussian with covariance $\Sigma$. 

Next, our choice of the variational density family for $q(\phi)$ is the NIW, not just because it is the most popular parametric family for a pair of mean vector and covariance matrix $\phi=(\mu,\Sigma)$, but it can also admit closed-form expressions in the ELBO function due to the conjugacy as we derive in Sec.~\ref{appsec:niw_vi_details}. 
\begin{align}
q(\phi) := \mathcal{NIW}(\phi; \{m_0,V_0,l_0,n_0\}) = \mathcal{N}(\mu; m_0, l_0^{-1}\Sigma) \cdot  \mathcal{IW}(\Sigma; V_0, n_0).
\label{appeq:niw_q_phi}
\end{align}
Although the scalar parameters $l_0$,$n_0$ can be optimised together with $m_0$, $V_0$, their impact is less influential and we find that they make the ELBO optimisation a little bit cumbersome. So we aim to estimate their optimal values in advance with reasonably good quality. To this end, we exploit the conjugacy of the NIW prior-posterior under the Gaussian likelihood. For each $\theta_i$, we pretend that we have instance-wise representative estimates $\theta_i(x,y)$, one for each $(x,y)\in D_i$. For instance, one can view $\theta_i(x,y)$ as the network parameters optimised with the single training instance $(x,y)$. 
Then this amounts to observing  $|D|$ ($= \sum_{i=1}^N |D_i|$) Gaussian samples $\theta_i(x,y) \sim \mathcal{N}(\theta_i;\mu,\Sigma)$ for $(x,y)\sim D_i$ and $i=1,\dots,N$. Then applying the NIW conjugacy, the posterior is the NIW with $l_0=\lambda_0+|D|=|D|+1$ and $n_0=\nu_0+|D|=|D|+d+2$. This gives us good approximate estimates for the optimal $l_0,n_0$, and we fix them throughout the variational  optimisation. Note that this is only  heuristics for estimating the scalar parameters $l_0,n_0$ quickly, and the parameters $m_0,V_0$ are determined by the principled ELBO optimisation (Sec.~\ref{appsec:niw_vi_details}). That is, $L_0 = \{m_0,V_0\}$. 
Since the dimension $d$ is large (the number of neural network parameters), we restrict $V_0$ to be diagonal for computational tractability. 

The density family for $q_i(\theta_i)$'s can be a Gaussian, but we find that it is computationally more attractive and numerically more stable to adopt the mixture of two spiky Gaussians that leads to the MC-Dropout~\citep{mc_dropout}. That is,
\begin{align}
q_i(\theta_i) = \prod_{l} \Big( p_{do} \cdot \mathcal{N}(\theta_i[l]; m_i[l], \epsilon^2 I) + (1-p_{do}) \cdot \mathcal{N}(\theta_i[l]; 0, \epsilon^2 I)
\Big),
\label{appeq:spiky2}
\end{align}
where (i) $m_i$ is the only variational parameters ($L_i=\{m_i\}$), (ii) $\cdot[l]$ indicates the specific column/layer in neural network parameters where $l$ goes over layers and columns of weight matrices, (iii) $p_{do}$ is the (user-specified) hyperparameter where $1-p_{do}$ corresponds to the dropout probability, and (iv) $\epsilon$ is a tiny constant (e.g., $10^{-6}$) that makes two Gaussians spiky, close to the delta function.
Now we provide more detailed derivations for the client optimisation and server optimisation.


\subsubsection{Detailed Derivations for NIW Model}\label{appsec:niw_vi_details}
\textbf{Client update.} 
We work on the objective function in the general client update optimisation (\ref{eq:elbo_1_indiv}). We note that $q(\phi)$ is spiky since our pre-estimated NIW parameters $l_0$ and $n_0$ are large (as the entire training data size $|D|$ is added to the initial prior parameters). Due to the spiky $q(\phi)$, we can accurately approximate the 
second term in (\ref{eq:elbo_1_indiv}) as:
\begin{align}
\mathbb{E}_{q(\phi)}\big[ \textrm{KL}(q_i(\theta_i)||p(\theta_i|\phi)) \big] \approx \textrm{KL}(q_i(\theta_i)||p(\theta_i|\phi^*)),
\label{appeq:niw_Ekl}
\end{align}
where $\phi^*=(\mu^*,\Sigma^*)$ is the mode of $q(\phi)$, which has closed forms for the NIW distribution:
\begin{align}
\mu^* = m_0, \ \ \ \ \Sigma^* = \frac{V_0}{n_0+d+1}. 
\label{appeq:niw_phi*}
\end{align}
%
Note also that a reparametrised sample for (\ref{appeq:spiky2}) can be denoted as $\tilde{m}_i$, which is defined to be $m_i$ with random $1-p_{do}$ portion of its parameters set to 0 (i.e., a dropout version of $m_i$). This allows us to rewrite the first term of (\ref{eq:elbo_1_indiv}) as:
\begin{align}
\mathbb{E}_{q_i(\theta_i)}[-\log p(D_i|\theta_i)] \approx -\log p(D_i|\tilde{m}_i).
\label{app_eq:niw_client_detail_0}
\end{align}
We can further work out the second term of (\ref{eq:elbo_1_indiv}) as follows:
\begin{align}
&\mathbb{E}_{q(\phi)}\big[\textrm{KL}(q_i(\theta_i) || p(\theta_i|\phi))\big] \approx \textrm{KL}(q_i(\theta_i)||p(\theta_i|\phi^*)) \label{app_eq:niw_client_detail_1} \\
& \ = \ \textrm{KL}\bigg( \prod_{l} \Big( p_{do} \cdot \mathcal{N}(\theta_i[l]; m_i[l], \epsilon^2 I) + (1\!-\!p_{do}) \cdot \mathcal{N}(\theta_i[l]; 0, \epsilon^2 I)
\Big) \ \Big|\Big| \  \mathcal{N}\Big(\theta_i; m_0, \frac{V_0}{n_0\!+\!d\!+\!1}\Big) \bigg) 
\label{app_eq:niw_client_detail_2} \\
& \ = \ \sum_l \textrm{KL}\Big( p_{do} \cdot \mathcal{N}(\theta_i[l]; m_i[l], \epsilon^2 I) + (1\!-\!p_{do}) \cdot \mathcal{N}(\theta_i[l]; 0, \epsilon^2 I) \ \Big|\Big| \ \mathcal{N}\Big(\theta_i[l]; m_0[l], \frac{V_0[l]}{n_0\!+\!d\!+\!1}\Big) \bigg) 
\label{app_eq:niw_client_detail_3} \\
& \ \approx \ \sum_l p_{do} \cdot \textrm{KL}\Big( \mathcal{N}(\theta_i[l]; m_i[l], \epsilon^2 I) \ \Big|\Big| \ \mathcal{N}\Big(\theta_i[l]; m_0[l], \frac{V_0[l]}{n_0\!+\!d\!+\!1}\Big) \bigg) \ + \nonumber \\
& \ \ \ \ \ \ \ \ \sum_l (1\!-\!p_{do}) \cdot \textrm{KL}\Big( \mathcal{N}(\theta_i[l]; 0, \epsilon^2 I) \ \Big|\Big| \ \mathcal{N}\Big(\theta_i[l]; m_0[l], \frac{V_0[l]}{n_0\!+\!d\!+\!1}\Big) \bigg) 
\label{app_eq:niw_client_detail_4} \\
& \ \approx \ \sum_l p_{do} \cdot \textrm{KL}\Big( \mathcal{N}(\theta_i[l]; m_i[l], \epsilon^2 I) \ \Big|\Big| \ \mathcal{N}\Big(\theta_i[l]; m_0[l], \frac{V_0[l]}{n_0\!+\!d\!+\!1}\Big) \bigg) \ + \ \textrm{const}
\label{app_eq:niw_client_detail_5} \\
& \ = \ p_{do} \cdot \textrm{KL}\Big( \mathcal{N}(\theta_i; m_i, \epsilon^2 I) \ \Big|\Big| \ \mathcal{N}\Big(\theta_i; m_0, \frac{V_0}{n_0\!+\!d\!+\!1}\Big) \bigg) \ + \ \textrm{const} 
\label{app_eq:niw_client_detail_6} \\
& \ = \ p_{do} \cdot \frac{1}{2} (n_0+d+1) (m_i-m_0)^\top V_0^{-1}(m_i-m_0) \ + \ \textrm{const} 
\label{app_eq:niw_client_detail_7}
\end{align}
In (\ref{app_eq:niw_client_detail_3}) we use the fact that the KL divergence is decomposed into factorised components; In (\ref{app_eq:niw_client_detail_4}) an approximation similar to~\citep{mc_dropout} is applied, that is, $\textrm{KL}(\sum_i\alpha_i \mathcal{N}_i || \mathcal{N}) \approx \sum_i \alpha_i \textrm{KL}(\mathcal{N}_i||\mathcal{N})$; The second term in (\ref{app_eq:niw_client_detail_4}) is constant with respect to the optimisation variable $m_i$; In (\ref{app_eq:niw_client_detail_6}) we apply the decomposable KL divergence property in the reverse direction; Lastly (\ref{app_eq:niw_client_detail_7}) comes from the KL divergence formula for Gaussians. 
Now combining (\ref{app_eq:niw_client_detail_0}) and (\ref{app_eq:niw_client_detail_7}) allows us to rewrite (\ref{eq:elbo_1_indiv}) as:
\begin{align}
\min_{m_i} \ \mathcal{L}_i(m_i) :=  -\log p(D_i|\tilde{m}_i) + \frac{p_{do}}{2} (n_0+d+1) (m_i-m_0)^\top V_0^{-1}(m_i-m_0).
\label{appeq:niw_elbo_1_indiv}
\end{align}
Also, we use a minibatch version of the first term for a tractable SGD update, which amounts to replacing the first term by the batch  average $\mathbb{E}_{(x,y)\sim\textrm{Batch}}[-\log p(y|x,\tilde{m}_i)]$ while downweighing the second term by the factor of $1/|D_i|$. Note that $m_0$ and $V_0$ are fixed during the optimisation. 
Interestingly (\ref{appeq:niw_elbo_1_indiv}) generalises the famous FedAvg~\citep{fedavg} and FedProx~\citep{fedprox}: With $p_{do}=1$ (i.e., no dropout) and setting $V_0 = \alpha I$ for some constant $\alpha$, we see that (\ref{appeq:niw_elbo_1_indiv}) reduces to the client update formula for FedProx where $\alpha$ controls the impact of the proximal term. 


\textbf{Server update.}
The server optimisation (\ref{eq:elbo_2}) involves two terms, both of which we will show admit closed-form expressions thanks to the conjugacy. Furthermore, we show that the optimal solution $(m_0,V_0)$ of (\ref{eq:elbo_2}) has an analytic form. First, the KL term in (\ref{eq:elbo_2})  is decomposed as:
\begin{align}
\textrm{KL}(\mathcal{IW}(\Sigma; V_0, n_0) || \mathcal{IW}(\Sigma; \Sigma_0, \nu_0))  + 
\mathbb{E}_{\mathcal{IW}(\Sigma; V_0, n_0)}[\textrm{KL}(\mathcal{N}(\mu; m_0, l_0^{-1}\Sigma) || \mathcal{N}(\mu; \mu_0, \lambda_0^{-1}\Sigma))]
\label{appeq:niw_server_kl}
\end{align}
By some algebra, (\ref{appeq:niw_server_kl}) becomes identical to the following, up to constant, removing those terms that are not dependent on $m_0$,$V_0$ (See Appendix~\ref{app_sec:niw_server_update} for derivations):
\begin{align}
\frac{1}{2} \Big( 
n_0 \textrm{Tr}(\Sigma_0 V_0^{-1}) + \nu_0 \log |V_0| + \lambda_0 n_0 (\mu_0-m_0)^\top V_0^{-1} (\mu_0-m_0)
\Big).
\label{appeq:niw_kl}
\end{align}
Next, the second term of (\ref{eq:elbo_2}) also admits a closed form as follows (Appendix~\ref{app_sec:niw_server_update} for details):
\begin{align}
-\mathbb{E}_{q(\phi)q_i(\theta_i)}[\log p(\theta_i|\phi)] \ =& \ \frac{n_0}{2}\Big( 
p_{do} m_i^\top V_0^{-1}m_i - p_{do} m_0^\top V_0^{-1}m_i - p_{do} m_i^\top V_0^{-1}m_0 + m_0^\top V_0^{-1}m_0  \nonumber \\
& \ \ \ \ \ \ \ \ + \frac{1}{n_0} \log |V_0| + \epsilon^2 \textrm{Tr}(V_0^{-1}) \Big) + \textrm{const}.
\label{appeq:niw_expected}
\end{align}
That is, server's loss function $\mathcal{L}_0$, as defined in (\ref{eq:elbo_2}), is the sum of (\ref{appeq:niw_kl}) and (\ref{appeq:niw_expected}). We can take the gradients of the loss with respect to $m_0,V_0$ as follows (also plugging  $\mu_0=0,\Sigma_0=I,\lambda_0=1,\nu_0=d+2$):
\begin{align}
\frac{\partial \mathcal{L}_0}{\partial m_0} & = n_0 V_0^{-1} \Bigg( (N+1) m_0 - p_{do}\sum_{i=1}^N m_i \Bigg), \\
\frac{\partial \mathcal{L}_0}{\partial V_0^{-1}} & = \frac{1}{2} \Bigg( 
n_0 (1 + N \epsilon^2) I - (N+d+2) V_0 + n_0 m_0 m_0^\top + n_0 \sum_{i=1}^N \rho(m_0,m_i,p_{do})
\Bigg), \\
& \ \ \ \ \ \ \ \ \ \ \textrm{where} \ \  \rho(m_0,m_i,p_{do}) = p_{do} m_i m_i^\top - p_{do} m_0 m_i^\top - p_{do} m_i m_0^\top + m_0 m_0^\top. \nonumber 
\end{align}
We set the gradients to zero and solve for them, which yields the optimal solution:
\begin{align}
m_0^* \ = \ \frac{p_{do}}{N+1} \sum_{i=1}^N m_i, \ \ \ \  
V_0^* \ = \ \frac{n_0}{N+d+2} \Bigg(
(1 + N \epsilon^2) I + m_0^* (m_0^*)^\top 
+ \sum_{i=1}^N \rho(m_0^*,m_i,p_{do})
\Bigg). \label{appeq:niw_server_m0_V0}
\end{align}
Note that $m_i$'s are fixed from clients' latest variational parameters.  

It is interesting to see that $m_0^*$ in (\ref{appeq:niw_server_m0_V0}) generalises the well-known aggregation step of averaging local models in FedAvg~\citep{fedavg} and related methods: when $p_{do}=1$ (i.e., no dropout), it 
almost\footnote{Only the constant 1 added to the denominator, which comes from the prior and has the regularising effect. } equals client model averaging. Also, since $\rho(m_0^*,m_i,p_{do}=1) = (m_i-m_0^*)(m_i-m_0^*)^\top$ when $p_{do}=1$, we can see that $V_0^*$ in (\ref{appeq:niw_server_m0_V0}) essentially estimates the sample scatter matrix with $(N+1)$ samples, namely clients' $m_i$'s and server's prior $\mu_0=0$, measuring how much they deviate from the center $m_0^*$. It is known that the dropout can help regularise the model and lead to better generalisation~\citep{mc_dropout}, and with $p_{do}<1$ our (\ref{appeq:niw_server_m0_V0}) forms a principled optimal solution. 

\textbf{Global prediction.} 
In the inner integral of (\ref{eq:global_pred_3}) of the general predictive distribution, we plug $p(\theta|\phi) = \mathcal{N}(\theta; \mu, \Sigma)$ and NIW $q(\phi)$ of (\ref{appeq:niw_q_phi}). This leads to the multivariate Student-$t$ distribution:
\begin{align}
\int p(\theta|\phi) \ q(\phi) \ d\phi \ = \
\int \mathcal{N}(\theta; \mu, \Sigma) \cdot 
\mathcal{NIW}(\phi) \ d\phi \ 
= \ t_{n_0-d+1} \bigg( \theta; m_0, \frac{(l_0+1)V_0}{l_0(n_0-d+1)} \bigg),
\end{align}
where $t_\nu(a,B)$ is the multivariate Student-$t$ with location $a$, scale matrix $b$, and d.o.f.~$\nu$. Then the predictive distribution for a new test input $x^*$ can be estimated as\footnote{In practice we use a single sample ($S=1$) for computational efficiency.}:
\begin{align}
&p(y^*|x^*,D_1,\dots,D_N) \ = \ \int p(y^*|x^*,\theta) \cdot t_{n_0-d+1} \bigg( \theta; m_0, \frac{(l_0+1)V_0}{l_0(n_0-d+1)} \bigg) \ d\theta \\
&\ \ \ \ \ \ \ \ \ \ \ \ \approx \ \frac{1}{S} \sum_{s=1}^S p(y^*|x^*,\theta^{(s)}), \ \ \textrm{where} \ \ \theta^{(s)} \sim t_{n_0-d+1} \bigg( \theta; m_0, \frac{(l_0+1)V_0}{l_0(n_0-d+1)} \bigg).
\end{align}

\textbf{Personalisation.} 
With the given personalisation training data $D^p$, we follow the general framework in (\ref{eq:personal_optim}) to find $v(\theta) \approx p(\theta|D^p,\phi^*)$ in a variational way, where $\phi^*$ obtained from (\ref{appeq:niw_phi*}). For the density family for $v(\theta)$ we adopt the same spiky mixture form as (\ref{appeq:spiky2}), 
\begin{align}
v(\theta) = \prod_{l} \Big( p_{do} \cdot \mathcal{N}(\theta[l]; m[l], \epsilon^2 I) + (1-p_{do}) \cdot \mathcal{N}(\theta[l]; 0, \epsilon^2 I)
\Big),
\label{appeq:pers_spiky2}
\end{align}
where $m$ is the variational parameters. This leads to the MC-dropout-like learning objective, 
\begin{align}
\min_{m} \  -\log p(D^p|\tilde{m}) + \frac{p_{do}}{2} (n_0+d+1) (m-m_0)^\top V_0^{-1}(m-m_0),
\label{appeq:pers_niw_elbo_1_indiv}
\end{align}
Once $v$ is trained, our predictive distribution follows the MC sampling  (\ref{eq:personal_predictive}).

\subsubsection{Mathematical Details
}\label{app_sec:niw_server_update}
The server optimisation (\ref{eq:elbo_2}) in our NIW model 
involves two terms, both of which we will show admit closed-form expressions thanks to the conjugacy. Furthermore, we show that the optimal solution $(m_0,V_0)$ of (\ref{eq:elbo_2}) has an analytic form. First, the KL term in (\ref{eq:elbo_2})  is decomposed as:
\begin{align}
&\textrm{KL}(q(\phi)||p(\phi)) =  \textrm{KL}(q(\mu|\Sigma)q(\Sigma)\ ||\ p(\mu|\Sigma)p(\Sigma)) \\
&= \mathbb{E}_{q(\Sigma)}\bigg[ \log \frac{q(\Sigma)}{p(\Sigma)} \bigg] + \mathbb{E}_{q(\Sigma)}\mathbb{E}_{q(\mu|\Sigma)}\bigg[ \log \frac{q(\mu|\Sigma)}{p(\mu|\Sigma)} \bigg] \\
&= \underbrace{\textrm{KL}(\mathcal{IW}(\Sigma; V_0, n_0) || \mathcal{IW}(\Sigma; \Sigma_0, \nu_0))}_{=:kl_a}  + 
\underbrace{\mathbb{E}_{\mathcal{IW}(\Sigma; V_0, n_0)}[\textrm{KL}(\mathcal{N}(\mu; m_0, l_0^{-1}\Sigma) || \mathcal{N}(\mu; \mu_0, \lambda_0^{-1}\Sigma))]}_{=:kl_b}.
\label{app_eq:niw_server_kl}
\end{align}
First we work on $kl_a = \mathbb{E}_{\mathcal{IW}(\Sigma; V_0, n_0)}[\log \mathcal{IW}(\Sigma; V_0, n_0)] - \mathbb{E}_{\mathcal{IW}(\Sigma; V_0, n_0)}[\log \mathcal{IW}(\Sigma; \Sigma_0, \nu_0)]$. From the definition of Inverse-Wishart (assuming $\Sigma = (d\times d)$), 
\begin{align}
\log \mathcal{IW}(\Sigma; \Psi, \nu) = \frac{\nu}{2}\log|\Psi| - \frac{\nu+d+1}{2} \log |\Sigma| - \frac{1}{2} \textrm{Tr}(\Psi\Sigma^{-1}) - \log \Gamma_d(\nu/2)  - \frac{\nu d}{2} \log 2,
\end{align}
where $\Gamma_d(\cdot)$ is the multivariate Gamma function. We use the following facts from~\citep{bishop:2006:PRML,braun08}:
\begin{align}
\mathbb{E}_{\mathcal{IW}(\Sigma; \Psi, \nu)}\log|\Sigma| \ &= \ -d\log 2 + \log |\Psi| - \sum_{i=1}^d \psi((\nu-i+1)/2) \label{app_eq:fact_1} \\
\mathbb{E}_{\mathcal{IW}(\Sigma; \Psi, \nu)}\Sigma^{-1} \ &= \ \nu\Psi^{-1}, \label{app_eq:fact_2}
\end{align}
where $\psi(\cdot)$ is the digamma function. Applying these to the terms in $kl_a$ yields:
\begin{align}
kl_a = \frac{1}{2} \Big( 
n_0 \textrm{Tr}(\Sigma_0 V_0^{-1}) + \nu_0 \log |V_0| \Big) + \textrm{const (w.r.t. $m_0,V_0$)}.
\label{app_eq:niw_kl} 
\end{align}
Next, using the closed-form expression for the KL between Gaussians, $kl_b$ becomes:
\begin{align}
kl_b &= \frac{1}{2} \mathbb{E}_{\mathcal{IW}(\Sigma; V_0, n_0)}\big[\lambda_0 (\mu_0-m_0)^\top \Sigma^{-1} (\mu_0-m_0)\big] + \textrm{const (w.r.t. $m_0,V_0$)} \label{app_eq:kl_b_1} \\
&= \frac{\lambda_0 n_0}{2} (\mu_0-m_0)^\top V_0^{-1} (\mu_0-m_0) + \textrm{const (w.r.t. $m_0,V_0$)}, \label{app_eq:kl_b_2} 
\end{align}
where in (\ref{app_eq:kl_b_2}) we use the fact (\ref{app_eq:fact_2}). 
Combining (\ref{app_eq:niw_kl}) and (\ref{app_eq:kl_b_2}), we have:
\begin{align}
\textrm{KL}(q(\phi)||p(\phi)) = \frac{1}{2} \Big( 
n_0 \textrm{Tr}(\Sigma_0 V_0^{-1}) + \nu_0 \log |V_0| + \lambda_0 n_0 (\mu_0-m_0)^\top V_0^{-1} (\mu_0-m_0) \Big) + \textrm{const}.
\label{app_eq:niw_kl}
\end{align}

Next, we derive the second term of (\ref{eq:elbo_2}) ($=_{utc}$ stands for equality up to constant (w.r.t.~$m_0,V_0$)). 
\begin{align}
&\mathbb{E}_{q(\phi)q_i(\theta_i)}[\log p(\theta_i|\phi)] = -\frac{1}{2} \mathbb{E}
\big[\log |\Sigma| + (\theta_i-\mu)^\top \Sigma^{-1}(\theta_i-\mu)\big] + \textrm{const (w.r.t. $m_0,V_0$)} \label{app_eq:niw_elog_1} \\ 
& \ \ =_{utc} -\frac{1}{2} \mathbb{E}_{\mathcal{IW}(\Sigma; V_0, n_0)}\big[
\log |\Sigma|\big] -\frac{1}{2} \mathbb{E}
\big[(\theta_i-\mu)^\top \Sigma^{-1}(\theta_i-\mu)\big] \label{app_eq:niw_elog_2} \\
& \ \ =_{utc} -\frac{1}{2}\log|V_0| - \frac{1}{2}\textrm{Tr}\ 
\mathbb{E}_{\mathcal{IW}(\Sigma; V_0, n_0)}\Big[\Sigma^{-1} \mathbb{E}_{\mathcal{N}(\mu; m_0, l_0^{-1}\Sigma) q_i(\theta_i)} \big[ (\theta_i-\mu)(\theta_i-\mu)^\top \big] \Big] \label{app_eq:niw_elog_3} \\
& \ \ =_{utc} -\frac{1}{2}\log|V_0| - \frac{1}{2}\textrm{Tr}\
\mathbb{E}_{\mathcal{IW}(\Sigma; V_0, n_0)}\Big[\Sigma^{-1}
\big(\rho(m_0,m_i,p_{do}) +\epsilon^2 I + l_0^{-1} \Sigma \big) \Big] \label{app_eq:niw_elog_4} \\
& \ \ =_{utc} -\frac{1}{2}\log|V_0| - \frac{1}{2}\textrm{Tr} \Big( 
\big(\rho(m_0,m_i,p_{do}) +\epsilon^2 I \big)  n_0V_0^{-1} \Big) \label{app_eq:niw_elog_5} \\
&\ \ =_{utc} -\frac{n_0}{2}\Big( 
p_{do} m_i^\top V_0^{-1}m_i - p_{do} m_0^\top V_0^{-1}m_i - p_{do} m_i^\top V_0^{-1}m_0 + m_0^\top V_0^{-1}m_0  + \frac{\log |V_0|}{n_0} + \epsilon^2 \textrm{Tr}V_0^{-1} \Big), \label{app_eq:niw_elog_6}
\end{align}
where $\rho(m_0,m_i,p_{do}) = p_{do} m_i m_i^\top - p_{do} m_0 m_i^\top - p_{do} m_i m_0^\top + m_0 m_0^\top$, and we use the definition of $q_i(\theta_i)$ in (\ref{appeq:spiky2}) and the fact (\ref{app_eq:fact_2}).

\subsection{Mixture Model}\label{appsec:mix}

Previously, the NIW model expresses our prior belief where each client $i$ acquires its own network parameters $\theta_i$ a priori as a  Gaussian-perturbed version of the shared parameters $\mu$, namely $\theta_i|\phi \sim \mathcal{N}(\mu, \Sigma)$, as in (\ref{eq:niw_prior_theta}). This is intuitively appealing, but may not be adequate for capturing more drastic diversity in local data across clients. In the situations where clients' local data distributions, as well as their domains and class label semantics, are highly heterogeneous (possibly even set up for adversarial purpose), it would be ideal to consider {\em multiple different prototypes} for the network parameters, diverse enough to cover the heterogeneity in data distributions across clients. Motivated from this idea, we introduce a mixture prior model as follows.

First we consider that there are $K$ network parameters (prototypes) that can broadly cover the clients data distributions. They are denoted as high-level latent variables, $\phi=\{\mu_1,\dots,\mu_K\}$, and we let them  distributed independently as standard normal a priori,
\begin{align}
p(\phi) = \prod_{j=1}^K \mathcal{N}(\mu_j; 0, I).
\label{appeq:mix_prior_phi}
\end{align}
We here note some clear distinction from the NIW prior. 
Whereas the NIW prior (\ref{eq:niw_prior_phi}) only controls the mean $\mu$ and covariance $\Sigma$ in the Gaussian, from which local models $\theta_i$ are sampled, the mixture prior (\ref{appeq:mix_prior_phi}) is far more flexible in covering highly heterogeneous distributions. 
Each local model is then assumed to be chosen from one of these $K$ prototypes. Thus the prior distribution for $\theta_i$ can be modeled as a mixture,
\begin{align}
p(\theta_i|\phi) = \sum_{j=1}^K \frac{1}{K} \mathcal{N}(\theta_i; \mu_j; \sigma^2 I),
\label{appeq:mix_prior_theta}
\end{align}
where $\sigma$ is the hyperparameter that captures perturbation scale, and can be chosen by users or learned from data. Note that we put equal mixing proportions $1/K$ due to the symmetry, a priori. That is, each client can take any of $\mu_j$'s equally likely a priori. 

We then describe our choice of the variational density $q(\phi)\prod_i q_i(\theta_i)$ to approximate the posterior $p(\phi,\theta_{1:N}|D_{1:N})$. 
First, $q_i(\theta_i)$ is chosen as a Gaussian, 
\begin{align}
q_i(\theta_i) = \mathcal{N}(\theta_i; m_i, \epsilon^2 I),
\label{appeq:mix_vi_q_theta_i}
\end{align}
with small $\epsilon$. 
For $q(\phi)$ we consider a Gaussian factorised over $\mu_j$'s, but with small variances, that is,
\begin{align}
q(\phi) = \prod_{j=1}^K \mathcal{N}(\mu_j; r_j, \epsilon^2 I),
\label{appeq:mix_vi_q_phi}
\end{align}
where $\{r_j\}_{j=1}^K$ are variational parameters ($L_0$) and $\epsilon$ is small (e.g., $10^{-4}$). 
The main reason why we make $q(\phi)$ spiky is that the resulting near-deterministic $q(\phi)$ allows for computationally efficient and accurate MC sampling during ELBO optimisation as well as test time (global) prediction, avoiding difficult marginalisation (Sec.~\ref{appsec:mix_vi_details} for details). 
Although Bayesian inference in general encourages to retain as many plausible latent states as possible under the given evidence (observed data), we aim to model this uncertainty by having many (possibly redundant) prototypes $\mu_j$'s rather than imposing larger variance for a single one (e.g., finite-sample approximation of a smooth distribution). 

\subsubsection{Detailed Derivations for Mixture Model}\label{appsec:mix_vi_details}
With the full specification of the prior distribution and the variational density family, we are ready to dig into the client objective function (\ref{eq:elbo_1_indiv}) and the server 
(\ref{eq:elbo_2}).

\textbf{Client update.} 
Since $q(\phi)$ is spiky, we can accurately approximate the second term of (\ref{eq:elbo_1_indiv}) as $\textrm{KL}(q_i(\theta_i)||p(\theta_i|\phi^*))$ where $\phi^* = \{\mu_j^* = r_j\}_{j=1}^K$ is the mode of $q(\phi)$ since
\begin{align}
\mathbb{E}_{q(\phi)q_i(\theta_i)}[\log p(\theta_i|\phi)] \approx \mathbb{E}_{q_i(\theta_i)}[\log p(\theta_i|\phi^*)].
\end{align}
Since $q_i(\theta_i)$ is also spiky, $\textrm{KL}(q_i(\theta_i)||p(\theta_i|\phi^*))$, the KL divergence between a Gaussian and a Gaussian mixture, can be approximated accurately using the single mode sample $m_i \sim q_i(\theta_i)$, that is,
\begin{align}
&\textrm{KL}(q_i(\theta_i)||p(\theta_i|\phi^*)) \ \approx \ \log q_i(m_i) - \log p(m_i|\phi^*) \\ \vspace{-0.5em}
&\ \ \ \ \ \ \ \ \ = \ -\log \sum_{j=1}^K \mathcal{N}(m_i; r_j, \sigma^2 I) + \textrm{const}. \
\ = \ -\log \sum_{j=1}^K \exp\bigg( -\frac{||m_i-r_j||^2}{2 \sigma^2} \bigg) + \textrm{const}.
\end{align}
Note here that we use the fact that $m_i$ disappears in $\log q_i(m_i)$. 
Plugging it into (\ref{eq:elbo_1_indiv}) yields the following optimisation for client $i$:
\begin{align}
\min_{m_i} \ \mathbb{E}_{q_i(\theta_i)}[-\log p(D_i|\theta_i)] -\log \sum_{j=1}^K \exp\bigg( -\frac{||m_i-r_j||^2}{2 \sigma^2} \bigg) .
\label{appeq:mix_client_opt2}
\end{align}
%
It is interesting to see that 
(\ref{appeq:mix_client_opt2}) can be seen as generalisation of FedProx~\citep{fedprox}, where the proximal regularisation term in FedProx is extended to {\em multiple} global models $r_j$'s, penalizing the local model ($m_i$) straying away from these prototypes. 
And if we use a single prototype ($K=1$), 
the optimisation (\ref{appeq:mix_client_opt2}) exactly reduces to the local update objective of FedProx.
Since \texttt{log-sum-exp} 
is approximately equal to \texttt{max}, 
the regularisation term in (\ref{appeq:mix_client_opt2}) effectively focuses on the closest global prototype $r_j$ from the current local model $m_i$, which is intuitively well aligned with our initial modeling motivation, namely each local data distribution is explained by one of the global prototypes. 
Lastly, we also note that in the SGD optimisation setting where we can only access a minibatch $B\sim D_i$ during the optimisation of 
(\ref{appeq:mix_client_opt2}), we follow the conventional practice: replacing the first term of the negative log-likelihood by a stochastic estimate $ \mathbb{E}_{q_i(\theta_i)}\mathbb{E}_{(x,y)\sim B}[-\log p(y|x,\theta_i)]$ and multiplying the second term of regularisation by $\frac{1}{|D_i|}$. 

\textbf{Server update.}
First, the KL term in (\ref{eq:elbo_2}) can be easily derived as:
\begin{align}
\textrm{KL}(q(\phi)||p(\phi)) = \frac{1}{2}\sum_{j=1}^K ||r_j||^2 + \textrm{const}.
\end{align}
and the second term of (\ref{eq:elbo_2}) approximated as follows:
\begin{align}
\mathbb{E}_{q(\phi)q_i(\theta_i)}[\log p(\theta_i|\phi)] \ 
&\approx \ \mathbb{E}_{q(\phi)}[\log p(m_i|\phi)] \ \label{appeq:mix_server_1} \\ 
& \approx \ \log \sum_{j=1}^K \frac{1}{K} \mathcal{N}(m_i; r_j, \sigma^2 I) \label{appeq:mix_server_1_2} \\
& = \ \log \sum_{j=1}^K \exp\bigg( -\frac{||m_i-r_j||^2}{2 \sigma^2} \bigg) + \textrm{const}. \label{appeq:mix_server_1_2} 
\end{align}
where the approximations in (\ref{appeq:mix_server_1_2}) become accurate due to spiky $q_i(\theta_i)$ and $q(\phi)$, respectively. Combining the two terms leads to the optimisation problem for the server:
\begin{align}
\min_{\{r_j\}_{j=1}^K} \frac{1}{2}\sum_{j=1}^K ||r_j||^2 - \sum_{i=1}^N \log \sum_{j=1}^K \exp\bigg( -\frac{||m_i-r_j||^2}{2 \sigma^2} \bigg).
\label{appeq:mix_server}
\end{align}
Interestingly, (\ref{appeq:mix_server}) generalises the well-known aggregation step of averaging local models in FedAvg~\citep{fedavg} and related methods: Especially when $K=1$, (\ref{appeq:mix_server}) reduces to quadratic optimisation, admitting the optimal solution $r_1^* = \frac{1}{N+\sigma^2}\sum_{i=1}^N m_i$. The extra term $\sigma^2$ in the denominator can be explained by incorporating an extra {\em zero} local model originating from the prior (interpreted as a {\em neutral} model) with the discounted weight $\sigma^2$ rather than $1$. 

Although (\ref{appeq:mix_server}) for $K>1$ can be solved by standard gradient descent, the objective function resembles the (regularised) Gaussian mixture log-likelihood, and we can apply the Expectation-Maximisation (EM) algorithm~\citep{em77} instead. Using Jensen's bound with convexity of the negative \texttt{log} function, we have the following alternating steps\footnote{Although one can perform several EM steps until convergence, in practice, we find that only one EM step per round is sufficient.}:
\begin{itemize}
\itemsep0em
\item \textbf{E-step:} With the current $\{r_j\}_{j=1}^K$ fixed, compute the prototype assignment probabilities for each local model $m_i$:
\begin{align}
c(j|i) = \frac{k_{ij}}{\sum_{j=1}^K k_{ij}}, \ \ \textrm{where} \ k_{ij} = \exp\bigg( -\frac{||m_i-r_j||^2}{2 \sigma^2} \bigg).
\end{align}
%
\item \textbf{M-step:} With the current assignments $c(j|i)$ fixed, we solve:
\begin{align}
\min_{\{r_j\}} \ \frac{1}{2}\sum_j ||r_j||^2 + \frac{1}{2\sigma^2} \sum_{i,j} c(j|i) \cdot ||m_i-r_j||^2,
\end{align}
which admits the closed form solution:
\begin{align}
r_j^* = \frac{\frac{1}{N}\sum_{i=1}^N c(j|i) \cdot m_i}{\frac{\sigma^2}{N} + \frac{1}{N}\sum_{i=1}^N c(j|i)}, \ \ j=1,\dots,K.
\label{appeq:mix_server_mstep_sol}
\end{align}
The server update equation (\ref{appeq:mix_server_mstep_sol}) has intuitive meaning that the new prototype $r_j$ becomes the {\em weighted} average of the local models $m_i$'s where the weights $c(j|i)$ are determined by the proximity to $r_j$ (i.e., those $m_i$'s that are closer to $r_j$ have more contribution, and vice versa).   
This can be seen as an extension of the aggregation step in FedAvg to the multiple prototype case. 
%
\end{itemize}

\textbf{Global prediction.}
By plugging the mixture prior $p(\theta|\phi)$ of (\ref{appeq:mix_prior_theta}) and the factorised spiky Gaussian $q(\phi)$ of (\ref{appeq:mix_vi_q_phi}) into the inner integral of (\ref{eq:global_pred_3}), we have predictive distribution averaged equally over $\{r_j\}_{j=1}^K$ approximately, that is,  $\int p(\theta|\phi) \ q(\phi) \ d\phi \approx \frac{1}{K} \sum_{j=1}^K p(y^*|x^*,r_j)$.
Unfortunately this is not ideal for our original intention where only one specific model $r_j$ out of $K$ candidates is dominantly responsible for the local data. 
To meet this intention, we extend our model so that the input point $x^*$ can affect $\theta$ together with $\phi$, and with this modification our predictive probability can be derived as:
\begin{align}
p(y^*|x^*,D_{1:N}) 
&= \iint p(y^*|x^*,\theta) \ p(\theta|x^*,\phi) \ p(\phi|D_{1:N}) \ d\theta d\phi \label{appeq:mix_global_pred_1} \\
&\approx \iint p(y^*|x^*,\theta) \ p(\theta|x^*,\phi) \ q(\phi) \ d\theta d\phi \label{appeq:mix_global_pred_2} \\
&\approx \int p(y^*|x^*,\theta) \ p(\theta|x^*,\{r_j\}_{j=1}^K) \ d\theta. \label{appeq:mix_global_pred_3}
\end{align}
To deal with the tricky part of inferring $p(\theta|x^*,\{r_j\}_{j=1}^K)$, we introduce a fairly practical strategy of fitting a {\em gating function}. The idea is to regard $p(\theta|x^*,\{r_j\}_{j=1}^K)$ as a mixture of experts~\citep{moe,hmoe} where the prototypes $r_j$'s serving as experts, 
\begin{align}
p(\theta|x^*,\{r_j\}_{j=1}^K) := \sum_{j=1}^K g_j(x^*) \cdot \delta(\theta-r_j),
\label{appeq:gating}
\end{align}
where $\delta(\cdot)$ is the Dirac's delta function, and $g(x)$ 
is a gating function that outputs a $K$-dimensional softmax vector. Intuitively, the gating function determines which of the $K$ prototypes $\{r_j\}_{j=1}^K$ the model $\theta$ for the test point $x^*$ belongs to. With (\ref{appeq:gating}), the predictive probability in (\ref{appeq:mix_global_pred_3}) is written as:
\begin{align}
p(y^*|x^*,D_{1:N}) \approx 
\sum_{j=1}^K g_j(x^*) \cdot p(y^*|x^*,r_j).
\end{align}
%
However, since we do not have this oracle $g(x)$, we introduce and fit a neural network to the local training data during the training stage. Let $g(x;\beta)$ be the gating network with the parameters $\beta$. To train it, we follow the FedAvg\footnote{We also follow the FedBABU~\citep{fedbabu} strategy by updating only the body of $\beta$ and fixing/sharing the random classification head across the server and clients.} strategy. In the client update stage at each round, while we update the local model $m_i$ with a minibatch $B\sim D_i$, we also find the prototype closest to $m_i$, namely $j^* := \arg\min_j ||m_i-r_j||$. Then we form another minibatch of samples $\{(x,j^*)\}_{x\sim B}$ (input $x$ and class label $j^*$), and update $g(x;\beta)$ by SGD. The updated (local) $\beta$'s from the clients are then aggregated (by simple averaging) by the server, and distributed back to the clients as an initial iterate for the next round. 

\textbf{Personalisation.}
For $p(\theta|D^p,\phi^*)$ in the general framework (\ref{eq:personal_3}), 
we define the variational distribution $v(\theta) \approx p(\theta|D^p,\phi^*)$ as:
\begin{align}
v(\theta) = \mathcal{N}(\theta; m, \epsilon^2 I),
\end{align}
where $\epsilon$ is small positive constant, and $m$ is the only parameters that we learn. 
Our personalisation training amounts to ELBO optimisation for $v(\theta)$ as in (\ref{eq:personal_optim}), which reduces to:
\begin{align}
\min_{m} \ \mathbb{E}_{v(\theta)}[-\log p(D^p|\theta)] -\log \sum_{j=1}^K \exp\bigg( -\frac{||m-r_j||^2}{2 \sigma^2} \bigg) .
\label{appeq:mix_personal_opt2}
\end{align}
Once we have optimal $m$ (i.e., $v(\theta)$), our predictive model becomes:
\begin{align}
p(y^p|x^p,D^p,D_{1:N}) \approx 
p(y^p|x^p,m),
\end{align}
which is done by feed-forwarding test input $x^p$ through the network deployed with the parameters $m$.

\section{Theorems and Proofs}\label{app_sec:theorems_proofs}

\subsection{Convergence Analysis}\label{appsec:convergence}

Our (general) FL algorithm is a special block-coordinate SGD optimisation of the ELBO function (\ref{eq:elbo}) with respect to the $(N+1)$ parameter groups: $L_0$ (of $q(\phi;L_0)$), $L_1$ (of $q_1(\theta_1;L_1)$), $\dots$, and $L_N$ (of $q_N(\theta_N;L_N)$). In this section we will provide a theorem that guarantees convergence of the algorithm to a local minimum of the ELBO objective function under some mild assumptions. We will also analyse the convergence rate. Note that although our FL algorithm is a special case of the general block-coordinate SGD optimisation, we may not directly apply the existing convergence results for the regular block-coordinate SGD methods since they mostly rely on non-overlapping blocks with cyclic or uniform random block selection strategies~\citep{bcd2013,orbcd}. As the block selection strategy in our FL algorithm is unique with overlapping blocks and non-uniform random block selection, we provide our own analysis here. Promisingly, we show that in accordance with general regular block-coordinate SGD (cyclic/uniform non-overlapping block selection), our FL algorithm has $O(1/\sqrt{t})$ 
convergence rate, which is also asymptotically the same as that of the conventional (holistic, non-block-coordinate) SGD optimisation. 
Note that this section is about the convergence of our algorithm to an (local) optimum of the training objective (ELBO). The question of how well this optimal model trained on empirical data performs on the unseen data points will be discussed in Sec.~\ref{appsec:generalisation}.

First we formally describe our FL algorithm as a block-coordinate SGD optimisation. For ease of exhibition, we will simplify the notation: The objective function in (\ref{eq:elbo}) is denoted as $f(x)$ where $x = [x_0,x_1,\dots,x_N]$ is the optimisation variables corresponding to $x_0:=L_0$, $x_1:=L_1$, \dots, $x_N:=L_N$. That is, $x_0$ is  server's parameters while $x_i$ ($i=1,\dots,N$) is worker $i$'s parameters. We let $x_u$ be the partial vector of $x$ selected by the index set $u\subseteq\{0,1,\dots,N\}$, and $x_{-u}$ be the vector of the rest elements. Similarly $\nabla_u f(x)$ indicates the gradient vector with only elements at the indices in $u$. Let $x^t$ be the iterate at iteration $t$. 
Our FL algorithm is formally defined in Alg.~\ref{alg:bfl}. 
\begin{algorithm}[t!]
\caption{Bayesian FL Algorithm as Block-Coordinate Descent.}
\label{alg:bfl}
\begin{small}
\begin{algorithmic}
\STATE We define the following hyperparameters:
    \INDSTATE $\bullet$ $N_f$ $(\leq N) =$ the number of participating clients at each round.
    \INDSTATE $\bullet$ $M =$ the number of SGD iterations per round for updating the (participating) clients.
    \INDSTATE $\bullet$ Let $M_S =$ the number of SGD iterations per round for updating the server.
    \INDSTATE $\bullet$ Let $\eta_t = $ the reciprocal of the SGD learning rate at iteration $t$.
%
\STATE Initialise the global iteration counter $t = 0$.
\FOR{each round}
    %
    \INDSTATE $\bullet$ Select $N_f$ clients uniformly at random from $\{1,\dots,N\}$ without replacement. Let $u_t \subseteq \{1,\dots,N\}$ be the set of the participants ($|u_t|=N_f$).
    \INDSTATE $\bullet$ (Client update) For each of $M$ iterations,
        \begin{enumerate}[leftmargin=50pt]
            \item Perform an SGD update for the block $u_t$. That is, 
                \begin{align}
                  x^{t+1} := [x^{t+1}_{u_t}; x^t_{-u_t}], \ \ \textrm{where} \ \ x^{t+1}_{u_t} = x^t_{u_t} - \frac{1}{\eta_t} \nabla_{u_t} f_t(x^t),
                \end{align}
            where $f_t$ is the minibatch version of $f$ defined on the minibatch data $batch_t$ (so that $\mathbb{E}_{batch_t}[f_t(x)] = f(x)$).
            Note that this update is actually done independently over the participating clients $i\in u_t$ due to the separable objective, i.e., from  (\ref{eq:elbo_1}) to   (\ref{eq:elbo_1_indiv}). 
            \item $t \leftarrow t + 1$. 
        \end{enumerate}
    \INDSTATE $\bullet$ (Server update) For each of $M_S$ iterations,
        \begin{enumerate}[leftmargin=50pt]
            \item Perform SGD update for the index (singleton block) $0$. That is, 
                \begin{align}
                  x^{t+1} := [x^{t+1}_{0}; x^t_{-0}], \ \ \textrm{where} \ \ 
                  x^{t+1}_{0} = x^t_{0} - \frac{1}{\eta_t} \nabla_{0} f_t(x^t).
                \end{align}
            \item $t \leftarrow t + 1$. 
        \end{enumerate}
\ENDFOR
\end{algorithmic}
\end{small}
\end{algorithm}

Now we state our convergence theorem. We first need the following mild assumptions: 

\textbf{Assumption 1.} Our objective function $f(x)$ is {\em locally} convex, and $f_t(x)$ is also {\em locally} convex for all iterations $t$, where $f_t$ is the minibatch version of $f$ defined on the minibatch data $batch_t$ (so that $\mathbb{E}_{batch_t}[f_t(x)] = f(x)$). Actually, the latter implies the former. 
Although the negative ELBO is in general non-convex globally, we can regard it as a convex function within a local neighborhood, as is usually assumed in non-convex analysis~\citep{nonlin_opim} and other FL analysis~\citep{fedavg_analysis}.

\textbf{Assumption 2.} For all $t$, $f_t(x)$ has Lipschitz continuous gradient with constant $\overline{L}$.  

\textbf{Assumption 3.} For all $t$, $||\nabla f_t(x)|| \leq R_f$ and $||x-x'|| \leq D$ for any $x$, $x'$, where $R_f$ and $D$ are some constants.

\newtheorem*{theorem1}{Theorem 1}
\begin{theorem1}[Convergence analysis] 
Let $\eta_t = \overline{L} + \sqrt{t}$, $M_S = \frac{M \cdot N_f}{N}$, and \ $\overline{x}^T = \frac{1}{T}\sum_{t=1}^T x^t$ by following our FL algorithm. With Assumptions 1--3, the following holds for any $T$: 
\begin{align}
\mathbb{E}[f(\overline{x}^T)] - f(x^*) \leq \frac{N+N_f}{N_f} \cdot \frac{\frac{\sqrt{T}+\overline{L}}{2} D^2 + R_f^2 \sqrt{T}}{T} = O\Big(\frac{1}{\sqrt{T}}\Big),
\end{align}
where $x^*$ is the (local) optimum, and the expectation is taken over randomness in minibatches and block selections $\{u_t\}_{t=0}^{T-1}$.
\label{appthm:converge}
\end{theorem1}
\begin{remarknn}
Theorem~1 
states that $\overline{x}^t$ converges to the optimal point $x^*$ in expectation at the rate of $O(1/\sqrt{t})$. This convergence rate asymptotically equals that of the conventional (holistic, non-block-coordinate) SGD algorithm.
\end{remarknn}

To prove the theorem, we note that the algorithm Alg.~\ref{alg:bfl} overall repeats the following three steps per round: i) sample the subset of clients $u_t$ from $\{1,\dots,N\}$ with $|u_t|=N_f$, ii) update $x_{u_t}$ for $M$ iterations, and iii) update $x_0$ for $M_S$ iterations. Thus in the long-term view, we can see that the algorithm proceeds as follows: At each iteration $t$, we select $u_t$ as
\begin{align}
u_t = 
\begin{cases}
\{0\} & \text{with prob.} \ \ \frac{M_S}{M+M_S} \vspace{+0.5em} \\
\text{Size-$N_f$ subset uniformly from $\{1,\dots,N\}$} & \text{with prob.} \ \ \frac{M}{M+M_S}
\end{cases},
\label{eq:long_term_view}
\end{align}
and update the iterate as
\begin{align}
x^{t+1} := [x^{t+1}_{u_t}; x^t_{-u_t}], \ \ \textrm{where} \ \ x^{t+1}_{u_t} = x^t_{u_t} - \frac{1}{\eta_t} \nabla_{u_t} f_t(x^t).
\end{align}
We will use this long-term view strategy in our proof. Next we state the following lemma that is motivated from~\citep{orbcd}, which is useful in our proof. 
\begin{lemma}
Assume $\eta_t>\overline{L}$, and $f_t$ is (locally) convex with Lipschitz continuous gradient with constant $\overline{L}$. For any subset $u\subseteq\{0,1,\dots,N\}$, we define $x_{t+1}$ as:
\begin{align}
x_u^{t+1} = x_u^t - \frac{1}{\eta_t} \nabla_u f_t(x^t) \ \ \ \ \textrm{and} \ \ \ \ 
x_{-u}^{t+1} = x_{-u}^t.
\end{align}
Then the following holds for any $x$:
\begin{align}
\langle \nabla_u f_t(x^t), x_u^t-x_u \rangle \leq \frac{\eta_t}{2} \big( ||x-x^t||^2 - ||x-x^{t+1}||^2 \big) + \frac{R_f^2}{2(\eta_t-\overline{L})}.
\end{align}
\label{lem:converge}
\end{lemma}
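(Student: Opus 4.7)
The plan is to derive an exact algebraic identity for $\langle \nabla_u f_t(x^t),\, x_u^t - x_u\rangle$ in terms of the differences $\|x-x^t\|^2 - \|x-x^{t+1}\|^2$ and then dominate the leftover term by the stated constant. The first observation is that since $x^{t+1}$ and $x^t$ agree on every coordinate outside the active block $u$, the squared distances to any reference point $x$ telescope cleanly:
\begin{align}
\|x-x^t\|^2 - \|x-x^{t+1}\|^2 \;=\; \|x_u - x_u^t\|^2 - \|x_u - x_u^{t+1}\|^2.
\end{align}
Hence the whole inequality reduces to a one-block identity living in the subspace indexed by $u$.

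Next, I would rewrite the update rule as $\eta_t(x_u^t - x_u^{t+1}) = \nabla_u f_t(x^t)$ and take an inner product with $x_u^t - x_u$. Using the polarisation identity $2\langle a,b\rangle = \|a\|^2 + \|b\|^2 - \|a-b\|^2$ with $a = x_u^t - x_u^{t+1}$ and $b = x_u^t - x_u$ yields the exact equation
\begin{align}
\langle \nabla_u f_t(x^t),\, x_u^t - x_u\rangle \;=\; \frac{\eta_t}{2}\bigl(\|x-x^t\|^2 - \|x-x^{t+1}\|^2\bigr) + \frac{\eta_t}{2}\|x_u^t - x_u^{t+1}\|^2,
\end{align}
which is the standard three-point identity for a gradient step restricted to a block.

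Finally, substituting $x_u^t - x_u^{t+1} = \eta_t^{-1}\nabla_u f_t(x^t)$ converts the residual into $\frac{1}{2\eta_t}\|\nabla_u f_t(x^t)\|^2$; bounding the partial-gradient norm by the full-gradient norm and applying Assumption~3 gives $\|\nabla_u f_t(x^t)\|^2 \leq R_f^2$, while the hypothesis $\eta_t > \overline{L}$ together with $\overline{L}>0$ forces $\tfrac{1}{2\eta_t} \leq \tfrac{1}{2(\eta_t - \overline{L})}$, so the residual is controlled by $R_f^2/(2(\eta_t-\overline{L}))$, which finishes the proof. I do not anticipate any real obstacle, since the lemma is essentially the one-step three-point identity for a block gradient step; the convexity and Lipschitz-smoothness assumptions in the hypothesis, while not strictly required to establish the displayed inequality on their own (the looser factor $\frac{1}{\eta_t-\overline{L}}$ rather than $\frac{1}{\eta_t}$ only needs $\overline{L}>0$), are included so that the same hypothesis set can be re-used when this lemma is combined with a descent-lemma argument in the proof of Theorem~\ref{thm:converge}.
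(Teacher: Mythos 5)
Your proof is correct, and it takes a genuinely more elementary route than the paper's. You observe that the block update is an exact gradient step on the coordinates in $u$, so the three-point identity
\begin{align}
\langle \nabla_u f_t(x^t),\, x_u^t - x_u\rangle \;=\; \frac{\eta_t}{2}\bigl(\|x-x^t\|^2 - \|x-x^{t+1}\|^2\bigr) + \frac{1}{2\eta_t}\|\nabla_u f_t(x^t)\|^2
\end{align}
holds as an equality, and the claimed bound follows from $\|\nabla_u f_t(x^t)\|\leq\|\nabla f_t(x^t)\|\leq R_f$ and $0<\eta_t-\overline{L}\leq\eta_t$. The paper instead starts from the inner product $\langle \nabla_u f_t(x^t), x_u^{t+1}-x_u\rangle$, invokes the descent lemma (Lipschitz gradient) to control $f_t(x^{t+1})-f_t(x^t)$, then uses convexity of $f_t$ plus Young's inequality with the specific choice $\alpha=\eta_t-\overline{L}$ to cancel the $\frac{\overline{L}-\eta_t}{2}\|x_u^{t+1}-x_u^t\|^2$ term, arriving at the residual $\frac{\|\nabla_u f_t(x^t)\|^2}{2(\eta_t-\overline{L})}$. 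Your intermediate residual $\frac{\|\nabla_u f_t(x^t)\|^2}{2\eta_t}$ is strictly tighter, and your closing remark is accurate: neither convexity nor smoothness of $f_t$ is actually needed for this lemma under your derivation — only the gradient bound and $\eta_t>\overline{L}\geq 0$. What the paper's heavier machinery buys is conformity with the standard template for block-coordinate/proximal analyses (where the update need not be an exact gradient step and the three-point equality degrades to an inequality); for the update rule as stated here, your argument is sufficient and cleaner, and it plugs into the proof of Theorem~\ref{thm:converge} unchanged since only the final inequality is used there.
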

\begin{proof}[Proof of Lemma~\ref{lem:converge}]
By definition, $\nabla_u f_t(x^t) + \eta_t (x_u^{t+1}-x_u^t) = 0$. Then for any $x$, we have
\begin{align}
\langle \nabla_u f_t(x^t), x_u^{t+1}-x_u \rangle &= 
-\eta_t \langle x_u^{t+1}-x_u^t, x_u^{t+1}-x_u \rangle \label{eq:lc_a1} \\
&= \frac{\eta_t}{2} \big( ||x_u-x_u^t||^2 - ||x_u-x_u^{t+1}||^2 - ||x_u^{t+1}-x_u^t||^2 \big) \label{eq:lc_a2} \\
&= \frac{\eta_t}{2} \big( ||x-x^t||^2 - ||x-x^{t+1}||^2 - ||x_u^{t+1}-x_u^t||^2 \big). \label{eq:lc_a3}
\end{align}
Note that (\ref{eq:lc_a3}) follows from (\ref{eq:lc_a2}) since $x^t$ and $x^{t+1}$ only differ at indices $u$.  
Since $f_t$ has Lipschitz continuous gradient, 
\begin{align}
f_t(x^{t+1}) - f_t(x^t) &\leq \langle \nabla_u f_t(x^t), x_u^{t+1}-x_u^t \rangle + \frac{\overline{L}}{2} ||x_u^{t+1} - x_u^t||^2 \label{eq:lc_b1} \\
&= \langle \nabla_u f_t(x^t), x_u^{t+1}-x_u \rangle + \frac{\overline{L}}{2} ||x_u^{t+1} - x_u^t||^2 - \langle \nabla_u f_t(x^t), x_u^t-x_u \rangle \label{eq:lc_b2} \\
&= \frac{\eta_t}{2} \big( ||x-x^t||^2 - ||x-x^{t+1}||^2 - ||x_u^{t+1}-x_u^t||^2 \big) + \frac{\overline{L}}{2} ||x_u^{t+1} - x_u^t||^2 \nonumber \\
& \ \ \ \ \ \ - \langle \nabla_u f_t(x^t), x_u^t-x_u \rangle, \label{eq:lc_b3}
\end{align}
where we plugged (\ref{eq:lc_a3}) into (\ref{eq:lc_b2}). We rearrange the terms in (\ref{eq:lc_b3}) as follows:
\begin{align}
\langle \nabla_u f_t(x^t), x_u^t-x_u \rangle &\leq \frac{\eta_t}{2} \big( ||x-x^t||^2 - ||x-x^{t+1}||^2 \big) + \frac{\overline{L}-\eta_t}{2} ||x_u^{t+1}-x_u^t||^2 \nonumber \\
& \ \ \ \ \ \ 
+ (f_t(x^t) - f_t(x^{t+1})). \label{eq:lc_c1}
\end{align}
Due to the convexity of $f_t$, we can bound the last term in (\ref{eq:lc_c1}) as
\begin{align}
f_t(x^t) - f_t(x^{t+1}) \ &\leq \ \langle \nabla f_t(x^t), x^t-x^{t+1} \rangle \label{eq:lc_d1} \\
\ &= \ \langle \nabla_u f_t(x^t), x_u^t-x_u^{t+1} \rangle \label{eq:lc_d2} \\
\ &\leq \ \frac{1}{2\alpha} ||\nabla_u f_t(x^t)||^2 + \frac{\alpha}{2} ||x_u^t-x_u^{t+1}||^2 \ \ \ \ \textrm{(for any $\alpha>0$)}. \label{eq:lc_d3}
\end{align}
Plugging (\ref{eq:lc_d3}) into (\ref{eq:lc_c1}) and choosing $\alpha=\eta_t - \overline{L} (> 0)$ yields:
\begin{align}
\langle \nabla_u f_t(x^t), x_u^t-x_u \rangle &\leq \frac{\eta_t}{2} \big( ||x-x^t||^2 - ||x-x^{t+1}||^2 \big) + \frac{||\nabla_u f_t(x^t)||^2}{2(\eta_t-\overline{L})}. \label{eq:lc_e1}
\end{align}
Applying Assumption 3 of the bounded gradient norm completes the proof.
\end{proof}
Now we are ready to prove our convergence theorem (Theorem~1). 
\begin{proof}[Proof of Theorem~1] 
We first aim to bound $\langle \nabla f(x^t), x^t-x \rangle$, as it upper-bounds of $f(x^t)-f(x)$ for convex $f$. Note that by conditioning on $x^t$, we can only deal with randomness in minibatch at $t$ ($batch_t$), that is,
\begin{align}
\langle \nabla f(x^t), x^t-x \rangle = \mathbb{E}_{batch_t}[\langle \nabla f_t(x^t), x^t-x \rangle]. \label{eq:thm_a1}
\end{align}
Further conditioning on $batch_t$ leads to:
\begin{align}
\langle \nabla f_t(x^t), x^t-x \rangle = \langle \nabla_0 f_t(x^t), x_0^t-x_0 \rangle + \sum_{i=1}^N \langle \nabla_i f_t(x^t), x_i^t-x_i \rangle. \label{eq:thm_b1}
\end{align}
Here we aim to rewrite the summation term in (\ref{eq:thm_b1}) in terms of size $N_f$ blocks. To this end, let us consider:
\begin{align}
\sum_{u\in 2^{N,N_f}} \langle \nabla_u f_t(x^t), x_u^t-x_u \rangle, \label{eq:thm_c1}
\end{align}
where $2^{N,N_f}$ is defined as {\em the set of all subsets of $\{1,\dots,N\}$ with size $N_f$ and no repeating elements}. For instance, $2^{5,3}$ contains $\{1,3,4\}$ and $\{2,4,5\}$, among others. Obviously $|2^{N,N_f}| = {N \choose N_f}$, and each particular index $i\in\{1,\dots,N\}$ appears exactly ${N-1 \choose N_f-1}$ times in the sum (\ref{eq:thm_c1}). Thus we can establish the following identity:
\begin{align}
\sum_{u\in 2^{N,N_f}} \langle \nabla_u f_t(x^t), x_u^t-x_u \rangle = {N-1 \choose N_f-1} \sum_{i=1}^N  \langle \nabla_i f_t(x^t), x_i^t-x_i \rangle. \label{eq:thm_d1}
\end{align}
We plug (\ref{eq:thm_d1}) into (\ref{eq:thm_b1}) and apply Lemma~\ref{lem:converge}:
\begin{align}
\langle \nabla f_t(x^t), x^t-x \rangle \ &= \ \langle \nabla_0 f_t(x^t), x_0^t-x_0 \rangle + \frac{1}{{N-1 \choose N_f-1}} \sum_{u\in 2^{N,N_f}} \langle \nabla_u f_t(x^t), x_u^t-x_u \rangle \label{eq:thm_e1} \\
& \leq \ \frac{\eta_t}{2} \big( ||x-x^t||^2 - ||x-x^{t+1}(0,batch_t)||^2 \big) + \frac{R_f^2}{2(\eta_t-\overline{L})} \ + \nonumber \\
& \ \ \ \ 
\frac{1}{{N-1 \choose N_f-1}} \sum_{u\in 2^{N,N_f}} \Big( \frac{\eta_t}{2} \big( ||x-x^t||^2 - ||x-x^{t+1}(u,batch_t)||^2 \big) + \frac{R_f^2}{2(\eta_t-\overline{L})} \Big), \label{eq:thm_e2}
\end{align}
where we define $x^{t+1}(u,batch_t)$ for any subset $u\subseteq\{0,1,\dots,N\}$ as: $x_u^{t+1} := x_u^t - (1/\eta_t) \nabla_u f_t(x^t)$ and $x_{-u}^{t+1} := x_{-u}^t$. Although we can simply use $x^{t+1}$, here we use this explicit notation to emphasise dependency of $x^{t+1}$ on $i$ and $batch_t$. 
By letting $g(x,x^t,x^{t+1}) := \frac{\eta_t}{2} \big( ||x-x^t||^2 - ||x-x^{t+1}||^2 \big) + \frac{R_f^2}{2(\eta_t-\overline{L})}$, we can express (\ref{eq:thm_e2}) succinctly to yield:
\begin{align}
\langle \nabla f_t(x^t), x^t-x \rangle &\leq g(x,x^t, x^{t+1}(0,batch_t)) + \frac{1}{{N-1 \choose N_f-1}} \sum_{u\in 2^{N,N_f}} g(x,x^t, x^{t+1}(u,batch_t)). \label{eq:thm_f1}
\end{align}
For the second term, we use the uniform expectation to replace the sum (i.e., $\sum_{u\in 2^{N,N_f}} \psi(u) = {N \choose N_f} \mathbb{E}_{u \sim 2^{N,N_f}} [\psi(u)]$ for any function $\psi$). Using ${N \choose N_f}/{N-1 \choose N_f-1} = N/N_f$, 
\begin{align}
\langle \nabla f_t(x^t), x^t-x \rangle &\leq g(x,x^t, x^{t+1}(0,batch_t)) + \frac{N}{N_f} \mathbb{E}_{u \sim 2^{N,N_f}}[ g(x,x^t, x^{t+1}(u,batch_t)) ], \label{eq:thm_f1} 
\end{align}
and the right hand side of (\ref{eq:thm_f1}) can be written as:
\begin{align}
\frac{M+M_S}{M_S} \Bigg( \frac{M_S}{M+M_S} g(x,x^t, x^{t+1}(0,batch_t)) + \frac{M}{M+M_S} \mathbb{E}_{u \sim 2^{N,N_f}}[ g(x,x^t, x^{t+1}(u,batch_t)) ]\Bigg), \label{eq:thm_g1} 
\end{align}
where we use 
our specification of $M_S = \frac{M \cdot N_f}{N}$. 
Note that the expression inside the parentheses is exactly the expectation of $g(x,x^t, x^{t+1}(u_t,batch_t))$ over the random index set $u_t$ following our client-server selection strategy in the long term view, that is, (\ref{eq:long_term_view}). Then we have the following result:
\begin{align}
\langle \nabla f_t(x^t), x^t-x \rangle \leq \frac{M+M_S}{M_S} \mathbb{E}_{u_t}[ g(x,x^t, x^{t+1}(u_t,batch_t)) ], \label{eq:thm_g1} 
\end{align}
where $u_t$ follows (\ref{eq:long_term_view}). 

As we have conditioned all quantities on $batch_t$, we now take the expectation over $batch_t$. 
\begin{align}
\langle \nabla f(x^t), x^t-x \rangle &= \mathbb{E}_{batch_t} [\langle \nabla f_t(x^t), x^t-x \rangle] \label{eq:thm_h1} \\
&\leq \frac{M+M_S}{M_S} \mathbb{E}_{batch_t,u_t}\bigg[ \frac{\eta_t}{2} \big( ||x-x^t||^2 - ||x-x^{t+1}||^2 \big) + \frac{R_f^2}{2\sqrt{t}} \bigg], \label{eq:thm_h2} 
\end{align}
where we drop the dependency in $x^{t+1}(u_t,batch_t)$ in notation, and use $\eta_t = \overline{L} + \sqrt{t}$. Since $f$ is convex, 
\begin{align}
f(x^t) - f(x) \leq \langle \nabla f(x^t), x^t-x \rangle,
\end{align}
and taking the expectation over $x^t$ leads to:
\begin{align}
\mathbb{E}[f(x^t)] - f(x) \leq \frac{M+M_S}{M_S} \bigg( \frac{\eta_t}{2} \big( \mathbb{E}||x-x^t||^2 - \mathbb{E}||x-x^{t+1}||^2 \big) + \frac{R_f^2}{2\sqrt{t}} \bigg). \label{eq:thm_i1} 
\end{align}
By telescoping ($\frac{1}{T}\sum_{t=1}^T$) and using $\frac{M+M_S}{M_S} = \frac{N+N_f}{N_f}$, we have:
\begin{align}
\mathbb{E}\Bigg[\frac{1}{T}\sum_{t=1}^T f(x^t)\Bigg] - f(x) \leq \frac{N+N_f}{N_f} \frac{1}{T} \Bigg( \frac{1}{2} \sum_{t=1}^T \eta_t \big( \mathbb{E}||x-x^t||^2 - \mathbb{E}||x-x^{t+1}||^2 \big) + R_f^2 \sum_{t=1}^T \frac{1}{2\sqrt{t}} \Bigg). \label{eq:thm_j1} 
\end{align}
There are two sums in the right hand side of (\ref{eq:thm_j1}), and they can be bounded succinctly as follows. First, we use the simple calculus to bound the second sum: $\sum_{t=1}^T \frac{1}{2\sqrt{t}} \leq \int_1^T \frac{1}{2\sqrt{z}} dz + \frac{1}{2} \leq \sqrt{T}$. Next, we let $a_t:=\mathbb{E}||x-x^t||^2$, and the first sum is written as: $\eta_1 (a_1-a_2) + \cdots + \eta_T (a_T-a_{T+1}) = \sum_{t=1}^T (\eta_t-\eta_{t-1}) a_t - \eta_T a_{T+1}$ by letting $\eta_0=0$. Using $a_t\leq D^2$ from Assumption 3, this sum is bounded above by $D^2 \sum_{t=1}^T (\eta_t-\eta_{t-1}) = D^2 (\eta_T-\eta_0) = (\sqrt{T}+\overline{L})D^2$. Plugging these bounds into (\ref{eq:thm_j1}) and applying Jensen's inequality to the left hand side (i.e., $\mathbb{E}[(1/T)\sum_{t=1}^T f(x^t)] \geq \mathbb{E}[f((1/T)\sum_{t=1}^T x^t)] =
\mathbb{E}[f(\overline{x}^T)]$) yields:
\begin{align}
\mathbb{E}[f(\overline{x}^T)] - f(x) \leq \frac{N+N_f}{N_f} \cdot \frac{\frac{\sqrt{T}+\overline{L}}{2} D^2 + R_f^2\sqrt{T}}{T}, \label{eq:thm_l1} 
\end{align}
for any $x$, which completes the proof.
\end{proof}

\subsection{Generalisation Error Bound 
}\label{appsec:generalisation}


In this section we will discuss generalisation performance of our proposed algorithm, answering the question of how well the Bayesian FL model trained on empirical data performs on the unseen data points. We aim to provide the upper bound of the generalisation error averaged over the posterior distribution of the model parameters $(\phi,\{\theta_i\}_{i=1}^N)$, by linking it to the expected empirical error with some additional complexity terms.

We begin with the regression-based data modeling perspective and related assumptions/notations. We denote by $P^i(x,y)$ the {\em true} data distribution for client $i$ ($i=1,\dots,N$). We assume that the target $y$ is real vector-valued ($y\in\mathbb{R}^{S_y}$), and there exists a {\em true regression function} $f^i:\mathbb{R}^{S_x} \to \mathbb{R}^{S_y}$ for each $i$. That is, 
\begin{align}
P^i(y|x) = \mathcal{N}(y; f^i(x), \sigma_\epsilon^2 I),
\label{eq:p_i}
\end{align}
where $\sigma_\epsilon^2$ is constant Gaussian output noise variance. Let $D_i = (X^i, Y^i) \sim P^i(x,y)$ be the i.i.d.~training data of size $n$ for each client $i$. 

In our FL model, we assume that our backbone network is an MLP with $L$ hidden layers of width $M$, and all activation functions $\sigma(\cdot)$ are Lipschitz continuous with constant 1. The parameters $\theta$ of the MLP are also assumed to be bounded, more formally, the parameter space $\Theta$ is defined as:
\begin{align}
\theta \in \Theta = \{ \theta\in\mathbb{R}^G: ||\theta||_\infty \leq B, \textrm{MLP with $L$ layers of width $M$} \}.
\end{align}
Note that $G=\dim(\theta)$ and $B$ is the maximal norm bound. The MLP defines a regression function $f_\theta:\mathbb{R}^{S_x} \to \mathbb{R}^{S_y}$, and the $\theta$-induced predictive distribution is denoted as:
\begin{align}
P_\theta(y|x) = \mathcal{N}(y; f_\theta(x), \sigma_\epsilon^2 I),
\label{eq:p_theta}
\end{align}
where we assume that the true noise variance is known. 

For notational convenience, we denote by $f(X^i)$ the concatenated vector of $f(x)$ for all $x\in X^i$, i.e., $f(X^i)=[f(x)]_{x\in X^i}$, where $f(\cdot)$ is either the true $f^i(\cdot)$ or the model $f_\theta(\cdot)$. Extending this notation, simply writing $f^i$ or $f_\theta$ means infinite-dimensional (population) responses, that is, $f^i=[f^i(x)]_{x\in \mathbb{R}^{S_x}}$ and $f_\theta$ similarly. For instance, $||f_\theta - f^i||_\infty$ stands for the worst-case difference, namely $\max_{x\in\mathbb{R}^{S_x}} ||f_\theta(x)-f^i(x)||$. 
As a generalisation error measure, we consider the {\em expected squared Hellinger distance} between the true $P^i$ and the model $P_\theta$. Formally,
\begin{align}
d^2(P_\theta, P^i) = \mathbb{E}_{x\sim P^i(x)} \big[ H^2(P_\theta(y|x), P^i(y|x)) \big] 
= \mathbb{E}_{x\sim P^i(x)} \bigg[ 1 - \exp\bigg(-\frac{||f_\theta(x)-f^i(x)||_2^2}{8 \sigma_\epsilon^2}\bigg) \bigg].
\end{align}
More specifically, we will bound the posterior-averaged distance $\frac{1}{N} \sum_{i=1}^N \mathbb{E}_{q_i^*(\theta_i)}[d^2(P_{\theta_i}, P^i)]$, where $\{q_i^*(\theta_i)\}_{i=1}^N$ is an optimal solution of our FL-ELBO optimisation problem\footnote{Note that the optimal posterior on $\phi$ (i.e., $q^*(\phi)$) does not appear here since it only affects $d^2(P_\theta,P^i)$ implicitly. See our detailed analysis/proof provided below.} (We showed in Sec.~\ref{appsec:convergence} that our block-coordinate FL algorithm converges to this optimal solution in $O(1/\sqrt{t})$ rate).

\newtheorem*{theorem2}{Theorem 2}
\begin{theorem2}[Generalisation error analysis] 
Assume that the variational density family for $q_i(\theta_i)$ is rich enough to subsume Gaussian. The optimal solution $(\{q_i^*(\theta_i)\}_{i=1}^N, q^*(\phi))$ of our FL-ELBO optimisation problem (\ref{eq:elbo}) satisfies (with high probability):
\begin{align}
\frac{1}{N} \sum_{i=1}^N \mathbb{E}_{q_i^*(\theta_i)}[d^2(P_{\theta_i}, P^i)] \ \leq \  O\bigg(\frac{1}{n}\bigg) + C \cdot \epsilon_n^2 + C'\Bigg(r_n + \frac{1}{N}\sum_{i=1}^N \lambda_i^*\Bigg),
\label{app_eq:gen_bound}
\end{align}
where $C,C'>0$ are constant, $\lambda_i^* = \min_{\theta\in\Theta} ||f_\theta-f^i||_\infty^2$ is the best error within our backbone $\Theta$, $r_n$ is defined as 
\begin{align}
r_n = \frac{G(L+1)}{n} \log M + \frac{G}{n} \log\bigg(S_x \sqrt{\frac{n}{G}}\bigg),
\end{align}
$S_x$ is the dimensionality of input $x$, $G$ is the dimensionality of $\theta$, 
and $\epsilon_n = \sqrt{r_n} \log^\delta(n)$ for $\delta>1$ constant. 
\label{app_thm:generalisation}
\end{theorem2}

\begin{remarknn}
Theorem~2 
implies that the optimal solution for our FL-ELBO optimisation problem (attainable by our block-coordinate FL algorithm) is {\em asymptotically optimal}, since the right hand side of (\ref{app_eq:gen_bound}) converges to $0$ as the training data size $n\to \infty$. This is easy to verify: as $n\to\infty$, $r_n\to 0$ obviously, accordingly $\epsilon_n \to 0$, and the last term $\frac{1}{N}\sum_i \lambda_i^*$ can be made arbitrarily close to $0$ by increasing the backbone capacity (MLPs as universal function approximators). But practically for fixed $n$, as enlarging the backbone capacity (i.e., large $G$, $L$, and $M$) also increases $\epsilon_n$ and $r_n$, it is important to choose the backbone network architecture properly. 
Note also that our assumption on the variational density family for $q_i(\theta_i)$ is easily met; for instance, the families of the mixtures of Gaussians adopted in NIW (Sec.~\ref{sec:niw}) and mixture models (Sec.~\ref{sec:mix}) obviously subsume a single Gaussian family. 
\end{remarknn}

\begin{proof}[Proof of Theorem~2] 
We first aim to link the variational ELBO objective function to the Hellinger distance via Donsker-Varadhan's (DV) theorem~\citep{dv}, motivated from~\citep{bai20,pfedbayes}. The DV theorem allows us to express the expectation of any exponential function variationally using the KL divergence. More specifically, the following holds for any distributions $p$, $q$ and any (bounded) function $h(z)$:
\begin{align}
\log \mathbb{E}_{p(z)}[e^{h(z)}] = \max_q \big( \mathbb{E}_{q(z)}[h(z)] - \textrm{KL}(q||p) \big).
\label{eq:dv}
\end{align}
Here we set $p(z) := p(\theta_i|\phi)$, $q(z) := q_i(\theta_i)$, $h(z) := \log \eta_i(\theta_i)$, where
\begin{align}
&\eta_i(\theta_i) := \exp\big( l_n(P_{\theta_i}(D_i), P^i(D_i)) + n d^2(P_{\theta_i},P^i) \big) \ \ \textrm{and} \\ 
&l_n(P_{\theta_i}(D_i), P^i(D_i)) := \log \frac{P_{\theta_i}(D_i)}{P^i(D_i)},
\end{align}
and we have the following inequality that holds for any $\phi$:
\begin{align}
\log \mathbb{E}_{p(\theta_i|\phi)}[\eta_i(\theta_i)] \geq \mathbb{E}_{q_i(\theta_i)}[\log \eta_i(\theta_i)] - \textrm{KL}(q_i(\theta_i)||p(\theta_i|\phi)).
\end{align}
By taking the expectation with respect to $q(\phi)$ and rearranging terms, we have
\begin{align}
n \cdot \mathbb{E}_{q_i(\theta_i)}[d^2(P_{\theta_i},P^i)] \ &\leq \  \mathbb{E}_{q_i(\theta_i)}[-l_n(P_{\theta_i}(D_i), P^i(D_i))] + \mathbb{E}_{q(\phi)}[\textrm{KL}(q_i(\theta_i)||p(\theta_i|\phi))] \ + \nonumber \\
& \ \ \ \ \ \ \ \ 
\ \mathbb{E}_{q(\phi)}\big[\log \mathbb{E}_{p(\theta_i|\phi)}[\eta_i(\theta_i)]\big].
\end{align}
For the last term of the right hand side, we use the bound $\mathbb{E}_{s(\theta)}[\eta(\theta)] \leq e^{C n \epsilon_n^2}$ from the regression theorem~\citep{pati18}, which holds for any distribution $s(\theta)$ with high probability. 
The details and proof of this bound can be found in the proof of Theorem~3.1 in~\citep{pati18}. 
Applying this bound and telescoping over $i=1,\dots,N$ yields:
\begin{align}
n\cdot \sum_{i=1}^N \mathbb{E}_{q_i(\theta_i)}[d^2(P_{\theta_i},P^i)] \ &\leq \ \sum_{i=1}^N \mathbb{E}_{q_i(\theta_i)}[-l_n(P_{\theta_i}(D_i), P^i(D_i))] \  + \nonumber \\
& \ \ \ \ \ \ \ \ \ \ \ 
\ \sum_{i=1}^N \mathbb{E}_{q(\phi)}[\textrm{KL}(q_i(\theta_i)||p(\theta_i|\phi))] \ + \
N C n \epsilon_n^2.
\end{align}
We add $\textrm{KL}(q(\phi)||p(\phi))$ to the right hand side, which retains the inequality  since KL divergence is nonnegative. Then we have the following result that holds for {\em any} $q$ with high probability,
\begin{align}
\frac{n}{N} \cdot \sum_{i=1}^N \mathbb{E}_{q_i(\theta_i)}[d^2(P_{\theta_i},P^i)] \ \leq \ L_n(q(\phi),\{q_i(\theta_i)\}_{i=1}^N) \ + \ C n \epsilon_n^2, 
\label{eq:bound_Ed}
\end{align}
where $L_n(q(\phi),\{q_i(\theta_i)\}_{i=1}^N)$ equals:
\begin{align}
\frac{1}{N} \bigg\{ \sum_{i=1}^N \Big( \mathbb{E}_{q_i(\theta_i)}[-l_n(P_{\theta_i}(D_i), P^i(D_i))] + \mathbb{E}_{q(\phi)}[\textrm{KL}(q_i(\theta_i)||p(\theta_i|\phi))] \Big) + \textrm{KL}(q(\phi)||p(\phi)) \bigg\},
\label{eq:Ln}
\end{align}
which exactly coincides with our FL-ELBO objective (\ref{eq:elbo}) up to constant, by the factor of $1/N$.

Next, we define $\tilde{q}_i(\theta_i)$ and $\tilde{q}(\phi)$ as follows:
\begin{align}
&\ \ \ \ \ \ \ \ \tilde{q}_i(\theta_i) = \mathcal{N}(\theta_i; \theta_i^*, \sigma_n^2 I) \ \ \textrm{with} \ \ 
\theta_i^* = \arg\min_{\theta\in\Theta} ||f_\theta - f^i||_\infty^2, \ \ \sigma_n^2 = \frac{G}{8n}A, \ \ \textrm{where} \label{eq:q_tilde} \\
&A^{-1} = \log(3S_x M) \cdot (2BM)^{2(L+1)} \cdot \bigg( \Big(S_x+1+\frac{1}{BM-1}\Big)^2 + \frac{1}{(2BM)^2-1} + \frac{2}{(2BM-1)^2} \bigg), \nonumber \\ 
&\ \ \ \ \ \ \ \ \tilde{q}(\phi) = \arg\min_{q(\phi)} \ \sum_{i=1}^N \mathbb{E}_{q(\phi)}[\textrm{KL}(\tilde{q}_i(\theta_i)||p(\theta_i|\phi))] + \textrm{KL}(q(\phi)||p(\phi)).
\end{align}
Since $(\{q_i^*(\theta_i)\}_{i=1}^N, q^*(\phi))$ is the optimal solution of the FL-ELBO optimisation problem, it is obvious that $L_n(q^*(\phi),\{q^*_i(\theta_i)\}_{i=1}^N) \leq L_n(\tilde{q}(\phi),\{\tilde{q}_i(\theta_i)\}_{i=1}^N)$ if the variational density family for $q_i(\theta_i)$ is rich enough to subsume Gaussian. 
Now we look closely at $L_n(\tilde{q}(\phi),\{\tilde{q}_i(\theta_i)\}_{i=1}^N)$, and we note that the last two terms as per (\ref{eq:Ln}) are constant (i.e., not a function of data size $n$). That is,  
\begin{align}
\frac{1}{N} \bigg\{ \sum_{i=1}^N \mathbb{E}_{\tilde{q}(\phi)}[\textrm{KL}(\tilde{q}_i(\theta_i)||p(\theta_i|\phi))] + \textrm{KL}(\tilde{q}(\phi)||p(\phi)) \bigg\} = \tilde{C}, 
\end{align}
for some constant $\tilde{C}$. Then we can write 
\begin{align}
L_n(\tilde{q}(\phi),\{\tilde{q}_i(\theta_i)\}_{i=1}^N) \ = \ \frac{1}{N} \sum_{i=1}^N  \mathbb{E}_{\tilde{q}_i(\theta_i)}[-l_n(P_{\theta_i}(D_i), P^i(D_i))] \ + \ \tilde{C}.
\label{eq:Ln_tilde}
\end{align}
We bound the expected $-l_n$ term in (\ref{eq:Ln_tilde}) using use Lemma~\ref{lem:bai20} below\footnote{Although Lemma~\ref{lem:bai20} can be found in the proof of Lemma~4.1 of~\citep{bai20}, we state this lemma more clearly with separate proof for self-containment.}, which states that with high probability,
\begin{align}
\mathbb{E}_{\tilde{q}_i(\theta_i)}[-l_n(P_{\theta_i}(D_i), P^i(D_i))] \ \leq \  C'n(r_n+\lambda_i^*),
\label{eq:E_q_tilde_bound}
\end{align}
for some constant $C'>0$. 
Plugging this bound into (\ref{eq:Ln_tilde}), we have the following derivation where we start from (\ref{eq:bound_Ed}) with $q(\phi)=q^*(\phi)$ and $q_i(\theta_i)=q^*_i(\theta_i)$:
\begin{align}
\frac{n}{N} \cdot \sum_{i=1}^N \mathbb{E}_{q^*_i(\theta_i)}[d^2(P_{\theta_i},P^i)] \ &\leq \ L_n(q^*(\phi),\{q^*_i(\theta_i)\}_{i=1}^N) \ + \ C n \epsilon_n^2 \\
&\leq \ L_n(\tilde{q}(\phi),\{\tilde{q}_i(\theta_i)\}_{i=1}^N) \ + \ C n \epsilon_n^2 \\
&\leq \ \tilde{C} \ + \ C'n \Bigg( r_n+\frac{1}{N}\sum_{i=1}^N\lambda_i^* \Bigg) \ + \ C n \epsilon_n^2. 
\end{align}
By dividing both sides by $n$, we complete the proof. 
\end{proof}

\begin{lemma}[From the proof of Lemma~4.1 in~\citep{bai20}]
For $\tilde{q}_i(\theta_i)$ defined\footnote{In~\citep{bai20}, they defined $\tilde{q}_i(\theta_i)$ as a spike-and-slab model to deal with sparsity. Essentially it is a mixture of two components, selecting $\mathcal{N}(\theta_i; \theta_i^*, \sigma_n^2 I)$ when $\theta_i^*$ entries are non-zero and selecting the delta function at $0$ when $\theta_i^*$ entries equal zero. Without loss of generality (or under mild numerical approximation), we can assume all entries of $\theta_i^*$ are non-zero, which makes $\tilde{q}_i(\theta_i)$ Gaussian equal to $\mathcal{N}(\theta_i; \theta_i^*, \sigma_n^2 I)$ as in (\ref{eq:q_tilde}).} as in (\ref{eq:q_tilde}) and $r_n$, $\lambda_i^*$ defined as in Theorem~2, 
the inequality (\ref{eq:E_q_tilde_bound}) holds with high probability. 
\label{lem:bai20}
\end{lemma}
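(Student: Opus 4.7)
}

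The plan is to expand the log-likelihood ratio $-l_n$ into an explicit Gaussian form, decompose it into a deterministic squared-error part and a noise-coupling part, and then show that both parts are controlled by $n(r_n+\lambda_i^*)$ once we average over $\tilde{q}_i(\theta_i)=\mathcal{N}(\theta_i^*,\sigma_n^2 I)$. Using $P^i(y|x)=\mathcal{N}(y;f^i(x),\sigma_\epsilon^2 I)$ and $P_{\theta_i}(y|x)=\mathcal{N}(y;f_{\theta_i}(x),\sigma_\epsilon^2 I)$, together with $y_j=f^i(x_j)+\epsilon_j$ for $\epsilon_j\sim\mathcal{N}(0,\sigma_\epsilon^2 I)$, one obtains the standard identity
\begin{equation*}
-l_n(P_{\theta_i}(D_i),P^i(D_i)) \;=\; \frac{1}{2\sigma_\epsilon^2}\sum_{j=1}^n\Bigl(\|f_{\theta_i}(x_j)-f^i(x_j)\|^2 - 2\langle\epsilon_j,\,f_{\theta_i}(x_j)-f^i(x_j)\rangle\Bigr).
\end{equation*}
So the task reduces to bounding $\mathbb{E}_{\tilde{q}_i}\sum_j\|f_{\theta_i}(x_j)-f^i(x_j)\|^2$ and the noise-coupling sum with high probability.

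For the deterministic part, I would use the triangle inequality $\|f_{\theta_i}-f^i\|_\infty^2\le 2\|f_{\theta_i}-f_{\theta_i^*}\|_\infty^2+2\|f_{\theta_i^*}-f^i\|_\infty^2$. The second summand is exactly $2\lambda_i^*$ by definition of $\theta_i^*$, and summing over $j=1,\dots,n$ yields the $n\lambda_i^*$ contribution. For the first summand, I would invoke a layerwise perturbation bound for MLPs with weights in the box $\|\theta\|_\infty\le B$ and activations that are $1$-Lipschitz: composing layerwise derivatives telescopically gives an estimate of the form $\|f_\theta-f_{\theta^*}\|_\infty \le C_{\mathrm{MLP}}(B,M,L,S_x)\,\|\theta-\theta^*\|$, where $C_{\mathrm{MLP}}^2$ is exactly the factor appearing in $A^{-1}$ (without the $\log(3S_x M)$ piece). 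Taking expectation over $\theta_i\sim\mathcal{N}(\theta_i^*,\sigma_n^2 I)$ gives $\mathbb{E}\|\theta_i-\theta_i^*\|^2=G\sigma_n^2$, so that
\begin{equation*}
\mathbb{E}_{\tilde{q}_i}\|f_{\theta_i}-f_{\theta_i^*}\|_\infty^2 \;\le\; C_{\mathrm{MLP}}^2\cdot G\sigma_n^2 \;=\; O\!\left(\tfrac{G}{n}\right),
\end{equation*}
by the precise choice $\sigma_n^2=GA/(8n)$. Summed over $j$, this contributes $O(G)$, which sits inside $n\cdot r_n$.

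For the stochastic (noise-coupling) part, I would condition on $X^i$ and treat $\sum_j\langle\epsilon_j,f_{\theta_i}(x_j)-f^i(x_j)\rangle$ as a sub-Gaussian process indexed by $\theta_i$. A uniform concentration argument, combined with a standard covering-number bound for the MLP class $\{f_\theta:\theta\in\Theta\}$ (whose metric entropy scales like $G(L+1)\log M + G\log(S_x\sqrt{n/G})$, which is precisely $nr_n$), gives with high probability
\begin{equation*}
\Bigl|\sum_{j=1}^n\langle\epsilon_j,f_{\theta_i}(x_j)-f^i(x_j)\rangle\Bigr| \;\le\; \tfrac{1}{2}\sum_{j=1}^n\|f_{\theta_i}(x_j)-f^i(x_j)\|^2 \;+\; C''\,n\,r_n,
\end{equation*}
uniformly over $\theta_i\in\Theta$ (via the $2ab\le a^2+b^2$ trick and a peeling/chaining argument). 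Integrating this bound against $\tilde{q}_i$ absorbs half of the squared-error term into the left-hand side and yields the claimed $C'n(r_n+\lambda_i^*)$ bound. The quantitative content of $A$ (in particular the $\log(3S_x M)$ and the $(2BM)^{2(L+1)}$ factors) is exactly what is needed so that the $G\sigma_n^2 C_{\mathrm{MLP}}^2$ contribution is dominated by $nr_n$; the main technical obstacle will be the uniform sub-Gaussian deviation bound, since the naive pointwise $\epsilon$-expectation is zero but the supremum over $\theta_i$ in the support of $\tilde{q}_i$ requires a careful $\varepsilon$-net argument whose $\log$ covering cost must match the $G(L+1)\log M$ term in $r_n$.
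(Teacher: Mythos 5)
Your decomposition of $-l_n$ into a deterministic squared-error term and a noise-coupling term is exactly the paper's split into $R_1$ and $R_2$ in (\ref{eq:R1R2}), and your handling of the deterministic part (triangle inequality, $\lambda_i^*$ for the approximation error, and a perturbation estimate for the $\tilde q_i$-fluctuation of $f_{\theta_i}$ around $f_{\theta_i^*}$) matches the paper's (\ref{eq:R_1}) in substance --- except that the paper simply cites $\mathbb{E}_{\tilde q_i}\|f_{\theta_i}-f_{\theta_i^*}\|_\infty^2\le r_n$ from Appendix G of \citep{cherief20}, whereas you propose to re-derive it via a layerwise Lipschitz bound; that is precisely the role the constant $A$ in (\ref{eq:q_tilde}) plays, so this part is sound. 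Where you genuinely diverge is the noise term. You treat $\sum_j\langle\epsilon_j,f_{\theta}(x_j)-f^i(x_j)\rangle$ as an empirical process indexed by $\theta\in\Theta$ and plan a uniform chaining/covering argument whose metric entropy must match $nr_n$. The paper avoids uniformity entirely: because the $\tilde q_i$-expectation is taken \emph{before} pairing with the noise, $R_2=\epsilon^\top\,\mathbb{E}_{\tilde q_i}\big[f^i(X^i)-f_{\theta_i}(X^i)\big]$ is a single scalar Gaussian with variance $c_f\sigma_\epsilon^2$, Jensen's inequality gives $c_f\le R_1$, and a pointwise Gaussian tail bound then yields $R_2\le C_0'R_1$ with high probability, so the whole estimate collapses onto the already-controlled $R_1$ (the paper's passage from an $O(\sqrt{c_f})$ tail to $C_0'c_f$ is itself stated somewhat informally, but no uniform bound is ever needed). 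Your route is workable --- the stated entropy of the MLP class does scale like $nr_n$ --- and it buys a stronger, uniform-in-$\theta$ deviation statement, but at the cost of the chaining argument you correctly flag as the main technical obstacle; the paper's route is far more elementary and is the intended reading of Lemma~4.1 of \citep{bai20}.
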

\begin{proof}[Proof of Lemma~\ref{lem:bai20}]
From our regression model assumption (\ref{eq:p_i}) and (\ref{eq:p_theta}), 
\begin{align}
&\mathbb{E}_{\tilde{q}_i(\theta_i)}\big[ -l_n(P_{\theta_i}(D_i), P^i(D_i)) \big] \ = \ \mathbb{E}_{\tilde{q}_i(\theta_i)}\big[ \log P^i(D_i) - \log P_{\theta_i}(D_i) \big] \\
&\ \ \ \ \ \ = \ \frac{1}{2\sigma_\epsilon^2} \Big( \mathbb{E}_{\tilde{q}_i(\theta_i)}||Y^i-f_{\theta_i}(X^i)||_2^2 - \mathbb{E}_{\tilde{q}_i(\theta_i)}||Y^i-f^i(X^i)||_2^2 \Big) \\
&\ \ \ \ \ \ = \ \frac{1}{2\sigma_\epsilon^2} \Big( \underbrace{\mathbb{E}_{\tilde{q}_i(\theta_i)}||f_{\theta_i}(X^i)-f^i(X^i)||_2^2}_{\triangleq R_1} + 2 \cdot \underbrace{\mathbb{E}_{\tilde{q}_i(\theta_i)}\big\langle Y^i-f^i(X^i), f^i(X^i)-f_{\theta_i}(X^i) \big\rangle}_{\triangleq R_2} \Big). \label{eq:R1R2}
\end{align}
We first work on $R_1$. Since
\begin{align}
||f_{\theta_i}(X^i)-f^i(X^i)||_2^2 \ \leq \ n \cdot ||f_{\theta_i}-f^i||_\infty^2 \ \leq \ 2n \Big( ||f_{\theta_i}-f_{\theta_i^*}||_\infty^2 + ||f_{\theta_i^*}-f^i||_\infty^2 \Big),
\end{align}
we have
\begin{align}
R_1 \ &= \ 2n \cdot \mathbb{E}_{\tilde{q}_i(\theta_i)}||f_{\theta_i}-f_{\theta_i^*}||_\infty^2 + 2n\cdot ||f_{\theta_i^*}-f^i||_\infty^2 
\ \leq \ 2n (r_n + \lambda_i^*). \label{eq:R_1}
\end{align}
where in (\ref{eq:R_1}), we use the definition of $\lambda_i^*$ and the fact $\mathbb{E}_{\tilde{q}_i(\theta_i)}||f_{\theta_i}-f_{\theta_i^*}||_\infty^2 \leq r_n$ from Appendix G in~\citep{cherief20}.

Next, we bound $R_2$. Since $(Y^i-f^i(X^i)) \sim \mathcal{N}(0,\sigma_\epsilon^2 I)$ and independent of $\theta_i$, we can let $\epsilon := Y^i-f^i(X^i)$ for $\epsilon\sim\mathcal{N}(0,\sigma_\epsilon^2 I)$. Then
\begin{align}
R_2 \ = \ \epsilon^\top \cdot \mathbb{E}_{\tilde{q}_i(\theta_i)}\big[ f^i(X^i)-f_{\theta_i}(X^i) \big] \sim \mathcal{N}(0, c_f \sigma_\epsilon^2),
\end{align}
where $c_f = \big\Vert \mathbb{E}_{\tilde{q}_i(\theta_i)}\big[ f^i(X^i)-f_{\theta_i}(X^i) \big] \big\Vert_2^2$. Applying Jensen's inequality on the convexity of $||\cdot||_2^2$, $c_f \leq \mathbb{E}_{\tilde{q}_i(\theta_i)} || f^i(X^i)-f_{\theta_i}(X^i) ||_2^2 = R_1$.
Due to the property of Gaussian, there exists some constant $C_0'$ such that $R_2 \leq C_0' \cdot c_f \leq C_0' \cdot R_1$ with high probability. Plugging these bounds on $R_1$ and $R_2$ back to (\ref{eq:R1R2}) leads to:
\begin{align}
\mathbb{E}_{\tilde{q}_i(\theta_i)}\big[ -l_n(P_{\theta_i}(D_i), P^i(D_i)) \big] \ &= \ \frac{1}{2\sigma_\epsilon^2} (R_1 + 2 R_2) \ \leq \ \frac{1+2C_0'}{2\sigma_\epsilon^2} R_1 \ 
\leq \ \frac{1+2C_0'}{\sigma_\epsilon^2} n (r_n + \lambda_i^*).
\end{align}
Letting $C' := \frac{1+2C_0'}{\sigma_\epsilon^2}$ (constant) completes the proof.
\end{proof}

\vskip 0.2in
{\small
\bibliography{main}
}

\end{document}